\definecolor{indigo}{rgb}{0.2, 0.0, 0.45}
\newcommand{\indep}{\rotatebox[origin=c]{90}{$\models$}}
\newcommand{\stylizedX}{
    \begin{tikzpicture}[baseline=-0.5ex]
        \draw[red, thick, rotate=45] (-0.15,0) -- (0.15,0);
        \draw[red, thick, rotate=-45] (-0.15,0) -- (0.15,0);
    \end{tikzpicture}
}
\newcommand{\stylizedPlus}{
    \begin{tikzpicture}[baseline=-0.5ex]
        \draw[green, thick] (0,-0.15) -- (0,0.15);
        \draw[green, thick] (-0.15,0) -- (0.15,0);
    \end{tikzpicture}
}
\definecolor{yellow}{rgb}{0.91, 0.84, 0.42}
\definecolor{blue}{rgb}{0.6, 0.73, 0.89}
\definecolor{red}{rgb}{0.8, 0.25, 0.33}
\definecolor{green}{rgb}{0.7, 0.75, 0.71}
\definecolor{orange}{rgb}{1.0, 0.6, 0.4}
\definecolor{cream}{rgb}{1.0, 0.99, 0.82}
\definecolor{Gray}{gray}{0.9}
\definecolor{green}{rgb}{0.6, 0.73, 0.89}
\definecolor{lightblue}{RGB}{225, 240, 255}
\DeclareMathOperator*{\argmax}{arg\,max}
\DeclareMathOperator*{\argmin}{arg\,min}
\DeclareMathOperator*{\argkmax}{argkmax}
\DeclareMathOperator*{\supp}{supp}
\newcommand{\cut}[1]{}
\theoremstyle{plain}
\newtheorem{theorem}{Theorem}[section]
\theoremstyle{definition}
\newtheorem{lemma}[theorem]{Lemma}
\newtheorem{corollary}[theorem]{Corollary}
\newtheorem{definition}[theorem]{Definition}
\theoremstyle{remark}
\author{%
  William Toner\\
  University of Edinburgh\\
  \texttt{w.j.toner@sms.ed.ac.uk} \\
   \And
   Amos Storkey \\
   University of Edinburgh\\
   \texttt{a.storkey@ed.ac.uk} \\
}
\begin{document}
\pagestyle{plain}
\twocolumn[
\aistatstitle{Noisy Early Stopping for Noisy Labels}
\aistatsauthor{ William Toner
\And Amos Storkey}
\aistatsaddress{University of Edinburgh \\
  \texttt{w.j.toner@sms.ed.ac.uk}  \And  University of Edinburgh\\
   \texttt{a.storkey@ed.ac.uk}}]
\begin{abstract}
Training neural network classifiers on datasets contaminated with noisy labels significantly increases the risk of overfitting. Thus, effectively implementing Early Stopping in noisy label environments is crucial. Under ideal circumstances, Early Stopping utilises a validation set uncorrupted by label noise to effectively monitor generalisation during training. However, obtaining a noise-free validation dataset can be costly and challenging to obtain. This study establishes that, in many typical learning environments, a noise-free validation set is not necessary for effective Early Stopping. Instead, near-optimal results can be achieved by monitoring accuracy on a \emph{noisy} dataset - drawn from the same distribution as the noisy training set. Referred to as `Noisy Early Stopping` (NES), this method simplifies and reduces the cost of implementing Early Stopping. We provide theoretical insights into the conditions under which this method is effective and empirically demonstrate its robust performance across standard benchmarks using common loss functions.
\end{abstract}
\vspace{-2mm}


\section{Introduction}

\subsection{Context and Problem Statement}
In recent years, the field of machine learning has experienced significant advancements in classification accuracy, driving demand for large, well-labelled datasets to train increasingly sophisticated models. However, the cost of meticulously curating large-scale, clean datasets is high, pushing practitioners toward alternative sources that, while abundant, are rife with label inaccuracies. Common methods such as crowd-sourcing or automated web scraping frequently yield data contaminated by errors. Even traditional methods of data collection are not immune, often suffering from inaccuracies due to human error, particularly in fields requiring expert knowledge for accurate labelling, like astronomy or medical diagnostics. The prevalence of inaccurately labelled data has heightened the need for machine learning algorithms that can effectively navigate these challenges. This has led to a growing focus on developing methods that can robustly handle noisy labels, which is becoming a key area of interest in the machine learning community.

A variety of techniques have been developed to address the issue of label noise in datasets. These techniques include label correction approaches, data cleansing processes, regularisation methods, and data augmentation strategies. While these methods often prove effective, they can also be computationally demanding and intricate. Some require the use of multiple neural networks or entail complex combinations of various techniques \citep{kim2024learning, dividemix, coteaching, mentornet, decoupling, meta_dynamic, meta_gradient,EvidenceMix}. Such complexity can make these methods less practical for use in environments where time, expertise, or computational resources are limited.

\textbf{Robust loss functions} have been developed as a simple and inexpensive approach for handling label noise. These approaches are designed to counteract the tendency of the widely used cross-entropy loss to overfit when faced with noisy data.  While these functions are designed to enhance training robustness, they often still result in overfitting or underfitting in various settings \citep{GCE_Loss, normalised_losses}. It is unrealistic to expect that a single loss function could perfectly avoid both overfitting and underfitting across all datasets, noise models, and hyperparameter configurations. Therefore, it becomes crucial to understand how and when to implement Early Stopping, particularly when cleanly labelled validation datasets are not available to monitor generalisation during training.


\paragraph*{Scope} This paper focuses on preventing the overfitting of \textbf{neural network classifiers} trained with \textbf{robust loss functions} on datasets affected by \textbf{label noise}.

\paragraph*{Contributions} The primary aim of this study is to demonstrate that noisy validation accuracy—i.e., accuracy on a held-out dataset, drawn from the \uline{same distribution as the noisy training set}—can reliably predict generalisation to \emph{clean} (noise-free) data distributions. Consequently, noisy accuracy can be used to define an effective policy for Early Stopping. This involves ceasing training when the noisy validation accuracy begins to decline, a strategy we term `Noisy Early Stopping'.

Our findings are consistent across standard image datasets and typical noise models, including non-uniform and asymmetric label noise. These insights are particularly valuable as they: 1) Provide ML practitioners with a simple and reliable way to early stop in the presence of label noise, and 2) Validate the common practice of using noisy test accuracy to evaluate and compare label noise robust algorithms in the absence of cleanly labelled datasets (e.g. for the Clothing1M dataset, which lacks a cleanly labelled subset).




\subsection{Paper Outline}
In Section~\ref{ch7:sec:background} introduces the relevant notation and terminology. We define `Noisy Early Stopping' (NES) as the strategy of Early Stopping by monitoring performance on a validation set polluted by label noise. Section~\ref{ch7:sec:related_work}, discusses related work. In Section~\ref{ch7:sec:theoretical}, we derive relationships between the clean and noisy $0\text{-}1$ risks of a model in different label noise environments.

 The theory suggests that NES should be effective for symmetric label noise and not for other noise types. In Section~\ref{ch7:sec:experiments}, we empirically evaluate NES. Remarkably we demonstrate the effectiveness of NES across various datasets, noise models, and six popular robust loss functions. 

\section{Background}\label{ch7:sec:background}

\subsection{Notation and Terminology}
\paragraph*{Domains} $\mathcal{X} \subset \mathbb{R}^d$ represents the dataspace, and $\mathcal{Y} \coloneqq \{1,2,\ldots, c\}$ denotes the label space with $c$ the total number of classes. The probability simplex, denoted by $\Delta$, consists of $c$-dimensional vectors whose non-negative components sum to one. Vector quantities are presented in \textbf{bold}.

\paragraph*{Probability Estimator} A probability estimator (often abbreviated to just `estimator') is a model that assigns a distribution over classes to each location in the data space, represented as $\bm{q}:\mathcal{X}\rightarrow \Delta$.

\paragraph*{Loss Function} A loss function $L: \Delta \times \mathcal{Y} \rightarrow \mathbb{R}$ measures the discrepancy between predicted and actual label distributions, resulting in a loss value. 

An important loss function is the $0\text{-}1$ loss which outputs $0$ when a model predicts the correct class and $1$ otherwise:
\[
L_{0\text{-}1}(\bm{q}, k) = 
\begin{cases} 
0 & \text{if } \arg\max_i(q_i) = k, \\
1 & \text{if } \arg\max_i(q_i) \neq k,
\end{cases}
\]

We say that a loss function $L$ is \emph{Fisher consistent} if a minimiser of the expected loss under $L$ incurs minimal expected $0\text{-}1$ error.

\paragraph*{$L$-Risk} The (generalised) $L$-risk, representing the expected loss over the \textit{entire} data distribution $p(x,y)$, evaluates the overall efficacy of estimator $\bm{q}(x)$:
\begin{align*}
    R_{L}(\bm{q}) \coloneqq \mathbb{E}_{x,y \sim p(x,y)}[L(\bm{q}(x),y) ].
\end{align*}

We refer to the risk computed from an i.i.d. dataset drawn from $p(x,y)$, the \emph{empirical} risk which we denote $\widehat{R}_L$.

\textbf{Unless specified otherwise, the term \textit{`risk'} throughout this document refers to the \(0\text{-}1\) risk.} 

\paragraph{Label Noise} Label noise refers to any random process that modifies labels drawn from the data-label distribution $p(x,y)$. We assume that this alteration is characterised by a conditional distribution $p(\widetilde{y}\mid x,y)$ which describes the transformation of clean labels into noisy ones. To differentiate between the original (clean) and altered (noisy) labels, we use a tilde notation, such as $\widetilde{p}(x,\widetilde{y})$. A detailed taxonomy of label noise types is provided in the Appendix.

\paragraph{Noisy Risk} Label noise gives rise to the concept of \emph{noisy} generalised risk, denoted by \( R^{\eta}_{L} \), computed with respect to the noisy label distribution.

\paragraph{`Noisy' and `Clean'} 
We adopt the practice of using `\emph{noisy}' and `\emph{clean}' to distinguish between quantities evaluated with respect to the noised or un-noised data distributions respectively. For example, `clean accuracy' refers to accuracy computed on samples drawn from the un-noised distribution whereas `noisy accuracy' refers to accuracy computed on samples drawn from the noisy distribution $\widetilde{p}(x,\widetilde{y})$ 
Similarly, `clean Early Stopping' refers to Early Stopping where generalisation is monitored using a cleanly-labelled validation set, this is to be contrasted with `Noisy Early Stopping' where generalisation is evaluated on a validation set whose labels are corrupted with label noise. 

\begin{figure}[!ht]
\centering
  \includegraphics[width=\linewidth]{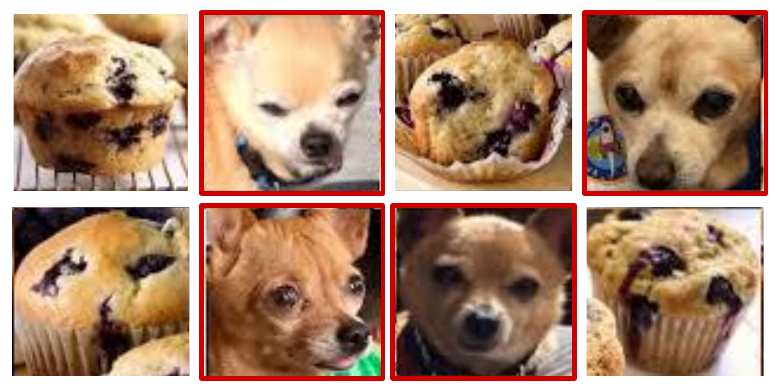}
  \caption{Illustration of the difference between \emph{noisy} and \emph{clean} accuracy and the non-trivial relationship between them: The figure depicts eight images from a web-scraped chihuahua dataset. The dataset contains label noise as it has accidentally scraped images of muffins \citep{muffin_chihuahua_dataset}. A classifier, which correctly identifies all of the true chihuahuas (red), will obtain a \textbf{noisy accuracy} of $\frac{4}{8}\approx 50\%$ on this dataset despite obtaining a \textbf{clean accuracy} of $100\%$.}
  \label{fig:clean_noisy_accuracy}
\end{figure}

\subsubsection{Class-Preserving Label Noise}
Most work studying label noise robust learning involves some restriction on the label noise model. Without restrictions, label noise can be so extreme that it obliterates any useful signal for learning: For example, consider a label noise model that randomly permutes 100\% of labels in a dataset. An early example of a restriction applied to the label noise model occurs in the seminal work of \citet{angluin1988} who limit the noise rate to $\eta<1/2$ stating \emph{`Why do we restrict $\eta$ to be less than $1/2$? Clearly, when $\eta = 1/2$, the errors in the reporting process destroy all possible information about
membership in the unknown set and no identification procedure could
be expected to work'}. Similar assumptions are in made in \cite{old_backward, robust_theory, polyPerceptron, AsymmetricLoss} etc.

In this study, we adopt a multi-class generalisation of the assumption used in \citet{angluin1988} which we call the `class-preserving label noise assumption'. This is almost identical to the `clean-labels-domination' assumption defined in \citet{AsymmetricLoss} and describes the notion that the most likely noisy label is the same as the most likely clean label at every location in dataspace.

\begin{definition}[Class-Preserving Label Noise]\label{ch3:def:conserveNoise}
Given a data-label distribution \( p(x,y) \), and its noisy version \( \widetilde{p}(x,\widetilde{y}) \), the label noise is considered \textit{class-preserving} if, for every \( x \in \mathcal{X} \), the class with the highest probability (known as the `dominant class' \citep{hastie2009elements}) remains unchanged after noise application. 
Formally, the condition for class-preserving noise is expressed as:
\[
\argmax_{i \in \{1, 2, \ldots, c\}} \widetilde{p}(\widetilde{y}=i | x) = \argmax_{i \in \{1, 2, \ldots, c\}} p(y=i | x),
\]
where \( c \) is the number of classes.
\end{definition}

Class-preserving noise \textit{preserves} which class has the highest probability. For example, suppose that we have a labelled dataset of images of animals. We may expect some of the wolves to be mislabelled as dogs. If wolves are \emph{more likely} to be mislabelled as dogs than to be correctly labelled as wolves then this noise is \emph{not} class-preserving. Symmetric label noise is class-preserving whenever the noise rate $\eta$ satisfies $\eta < \frac{c-1}{c}$, e.g. less than $90\%$ for $c=10$ classes.

\subsection{Early Stopping}
Early Stopping (ES) is a regularisation technique in machine learning, designed to prevent overfitting by terminating the training process once generalisation begins to worsen \citep{prechelt2002early}. While there are numerous different criteria which can be used to determine when to early-stop, the `gold-standard' approach \citep{early_stop_without_val} is monitoring performance on a held-out validation set, drawn from the distribution of interest. Suppose we are learning a probability estimator, letting $\bm{q}_n$ denote the model obtained after $n$ epochs of training. We have a sequence of models $\bm{q}_1, \bm{q}_2, \bm{q}_3, \ldots$. Utilising a loss function $L$ and a large validation dataset, we can estimate the risks of these models with low variance: $\widehat{R}_{L}(\bm{q}_n) \approx R_{L}(\bm{q}_n)$. This allows us to reliably detect when generalisation starts to decline, marking an optimal point to halt training. The cessation occurs if no improvements are noted over a set number of epochs, a period known as the `patience'. 

\paragraph*{The Problem Of Noise:} Early Stopping is generally effective when the validation set is representative of the distribution we aim to generalise to. However, complications arise when both the training and validation datasets are contaminated by systematic noise. Consider a regression scenario where an agent randomly adds a small positive value $\epsilon \in [0,1]$ to the targets in both the training and validation sets. In this case, the model may inadvertently fit to this noise, impairing its ability to generalise to the true underlying distribution. Crucially, Early Stopping is unlikely to correct for this issue, as the validation set, being similarly polluted, does not accurately reflect the target distribution we seek to generalise to.\footnote{In this study we look exclusively at \emph{classification}, not regression, this example is purely illustrative.} 

\subsection{Main Proposal: Noisy Early Stopping} 
When Early Stopping is applied by monitoring the accuracy on a validation set that is subject to the same label noise as the training set we call this \textbf{Noisy Early Stopping} (NES) (Expressed algorithmically in Algorithm~\ref{alg:noisy_early_stopping}). This technique aims to optimise training cessation to maximise the model's generalisation performance on \emph{cleanly-labelled} data. We contrast Noisy Early Stopping with clean Early Stopping (Abbreviated simply as ES) which is an idealised setting in which we have a cleanly-labelled validation set by which to monitor clean performance. 

\begin{algorithm}
\caption{Noisy Early-Stopping Algorithm}\label{ch7:sec:alg}
\begin{algorithmic}[1]
\Require{Noisy dataset $D$, split into training set $D_{\text{train}}$ and validation set $D_{\text{val}}$; Fisher consistent Loss function $L$ (e.g., cross-entropy); Patience parameter $P$}
\Ensure{Trained model with Early Stopping based on noisy validation accuracy}

\State initialise neural network $f$ with random weights
\State initialise patience counter $p \gets 0$
\State initialise best validation accuracy $\text{acc}_{\text{best}} \gets 0$

\While{$p < P$}
    \For{each epoch}
        \State Train $f$ on $D_{\text{train}}$ using loss function $L$
        \State Calculate noisy accuracy $\text{acc}_{\text{epoch}}$ on $D_{\text{val}}$
        \If{$\text{acc}_{\text{epoch}} > \text{acc}_{\text{best}}$}
            \State Update $\text{acc}_{\text{best}} \gets \text{acc}_{\text{epoch}}$
            \State Save current model as the best model
            \State Reset patience counter $p \gets 0$
        \Else
            \State Increment patience counter $p \gets p + 1$
        \EndIf
    \EndFor
    \If{$p \geq P$}
        \State \textbf{break}
    \EndIf
\EndWhile

\State Load the best saved model
\State \Return{The trained model $f$}
\end{algorithmic}\label{alg:noisy_early_stopping}
\end{algorithm}

\subsection{Assumptions and Problem Statement} 
The primary objective of this work is to evaluate the effectiveness of the Noisy Early Stopping algorithm (Algorithm~\ref{alg:noisy_early_stopping}) across a broad range of conditions when training with noisy labels. We begin with a theoretical analysis in Section~\ref{ch7:sec:theoretical}, based on two idealised assumptions:

\begin{enumerate}
 \item We assume the availability of an arbitrarily large validation dataset, allowing the noisy risk at each epoch to be estimated with high precision and independently\footnote{i.e. The noisy risk estimates are statistically independent random variables; $\widehat{R}^{\eta}(\bm{q}_i)\indep \widehat{R}^{\eta}(\bm{q}_j)$}.
    \item We assume a large patience parameter, allowing Noisy Early Stopping to select from any model obtained during training.
\end{enumerate}

These assumptions allow us to formulate the problem as follows:
\paragraph{Problem Statement:} Consider a data-label distribution \( p(x, y) \) and its noisy-label counterpart \( \widetilde{p}(x, \widetilde{y}) \). Let \( \mathcal{Q} \) denote a finite set of probability estimators \(\{\bm{q}_n\}_{n=1}^N\). The goal is to identify, through theoretical and empirical analysis, the conditions under which selecting the estimator from \( \mathcal{Q} \) with the minimal noisy \(0\text{-}1\) risk leads to optimal clean \(0\text{-}1\) risk performance.

\section{Related Work}\label{ch7:sec:related_work}
A summary of the most relevant literature is given below. A comprehensive related work may be found in  Appendix~\ref{ch7:sec:further_related_work}.

\paragraph*{Over/Underfitting Persists:} The development of robust loss functions has been fruitful but challenges remain. While robust loss functions are generally less vulnerable to label noise than cross-entropy, they still exhibit overfitting or underfitting problems \citep{normalised_losses, GCE_Loss}. For instance, losses like Mean Absolute Error (MAE) and certain normalised losses tend to underfit, while others (e.g., backward corrections, Generalised Cross-Entropy (GCE), etc.) tend to overfit, albeit less severely than cross-entropy \cite{fprop}. Each loss function behaves differently depending on factors such as the number of classes, noise model, hyperparameters (e.g., learning rate and batch size), and datasets. Thus, a loss function may avoid overfitting in one setting but overfit or underfit in another. Consequently, it is crucial to develop methods to reliably measure model generalisation during training to assess whether overfitting (or underfitting) is occurring and to determine the optimal point for Early Stopping.

\paragraph*{Correction-Based Loss Functions}
An important category of robust loss functions is correction-based loss functions. Primarily applied in settings of class-conditional label noise, these methods infer the noise transition matrix ($\widehat{T}$) from the noisy data, using it to `correct' the loss function \citep{gold, anchor}.  Such approaches aim to account for and correct for the distorting influence of the label noise on the risk objective. This is motivated by the observation that, in the presence of label noise, the empirical risk is no longer a suitable proxy for the generalised risk. However, by applying a loss correction $L\mapsto L_{\text{corr.}}$ one ensures that
\begin{align*}
    \argmin_{q} \widehat{R}^{\eta}_{L_{\text{corr.}}}(q) &\approx \argmin_{q} {R}^{\eta}_{L_{\text{corr.}}}(q), \\&= \argmin_{q} {R}_L(q), 
\end{align*}
meaning that minimising the noisy risk objective aligns with generalisation to the clean (un-noised) distribution.
The \emph{forward} correction noises the outputs of the model by application of the estimated noise transition matrix $\widehat{T}$ \citep{fprop, anchor}. In contrast, the backward correction de-noised the noisy labels by applying $\widehat{T}^{-1}$ \citep{old_backward, fprop}. 

\paragraph*{ES for Label Noise} Limited work addresses Early Stopping (ES) for classification in the presence of label noise. \citet{EarlyStoppingNoisy} introduces Progressive Early Stopping (PES), a technique that trains initial neural network layers before implementing ES on later layers, which are more prone to overfitting, without using a validation set and halting training after a preset number of epochs. This approach raises questions about optimally determining duration without clean validation or test sets for hyperparameter tuning. PES involves multiple training stages and a final semi-supervised phase, which can be resource-intensive. \citet{robust_early_learning} presents `robust-early-learning,' which manages label noise by dividing parameters into critical and non-critical sets with different update rules and utilises ES on a noisy validation set. \citet{labelnoiseES} highlights ES's efficacy for symmetric label noise and theoretical effectiveness in one-hidden layer networks under specific data distribution assumptions. Our analysis differs notably in that their ES criteria are based on clustering properties of data distribution, not on generalisation from noisy data. In adversarial robustness, \citep{ES_AdvRobust} finds ES comparably effective to other robust methods when a cleanly labelled dataset is available for generalisation assessment. \citet{prestopping} introduces `Pre-stopping,’ using ES in a label noise robust pipeline, assuming a small cleanly labelled dataset for assessment. The backward correction allows ES without a clean validation set when the true transition matrix is known, as the empirical risk on a noisy set becomes an unbiased estimator of the un-noised risk, though its effectiveness depends on the accuracy of the transition matrix estimate and is limited in non-uniform label noise scenarios.

\paragraph*{Without Early Stopping} The majority of studies introducing label noise robust loss functions appear not to utilise any sort of Early Stopping at all \citep{GCE_Loss, sce, imae}. Instead, these approaches train for a pre-specified number of epochs, relying on the loss functions intrinsic robust properties to mitigate overfitting during the training run. However, multiple works including \citep{fprop, group_noise}, while not implementing Early Stopping, use a noisy validation set for hyperparameter tuning.

\paragraph*{Noisy and Clean $0\text{-}1$-Risk} A common Early-Stopping approach consists of monitoring the loss (typically cross-entropy) on the validation set \citep{early_stop_without_val}. In this study, however, we adopt the approach of monitoring the validation \emph{accuracy} ($0\text{-}1$-loss) \citep{robust_early_learning}. Consequently, understanding the relationship between the noisy and clean $0\text{-}1$ risks of a classifier is crucial. Prior research, such as \citep{robust_theory_earlier_binary, ghosh2015making}, has demonstrated the robustness of the $0\text{-}1$ loss to uniform symmetric noise within binary classification contexts. Building on this, \citep{robust_theory} establishes conditions under which the minimisers of both noisy and clean risks coincide for loss functions that exhibit a `symmetry' property. Given that the $0\text{-}1$ loss satisfies this symmetry criterion, the findings of \citet{robust_theory} apply to our work. While the primary focus of the referenced papers differs from ours, the results presented in Section~\ref{ch7:sec:theoretical} concerning symmetric label noise naturally extend the relationships they derived between noisy and clean risks.

\section{The Relationship Between Noisy and Clean $0\text{-}1$-Risk}\label{ch7:sec:theoretical}
This section provides conditions under which the minimiser of the noisy risk in a set of estimators $\mathcal{Q}$ coincides with the minimiser of the clean risk in $\mathcal{Q}$. The conditions are summarised into five `facts'. Precise statements and proofs are given in Appendix~\ref{ch7:sec:proofs}.

\subsection{Uniform Symmetric Label Noise}
By far the most studied label noise type is uniform symmetric label noise \citep{surveyDeep} which is label noise where the transition probabilities between every pair of distinct labels are the same. 

 \paragraph*{Fact 1:} Given a set of estimators $\mathcal{Q}\coloneq \{\bm{q}_n\}_{n=1}^N$ and any data-label distribution. The minimiser of the noisy risk within the set $\mathcal{Q}$ will also minimise the clean risk if the label noise is simultaneously uniform, symmetric and class-preserving.\footnote{Recall that we adopt the convention of using `risk' as shorthand for $0\text{-}1$ risk in this study.} Moreover, the noisy and clean risks are related by an affine linear relationship; 
     \begin{align}\label{eqn:sym_risk_relationship}
        R^{\eta}(\bm{q}) = R(\bm{q})\left(1-\frac{c\eta}{c-1}\right) +\eta.
    \end{align}

\begin{figure}[htbp]
\centering
  \centering
  \includegraphics[width=\linewidth]{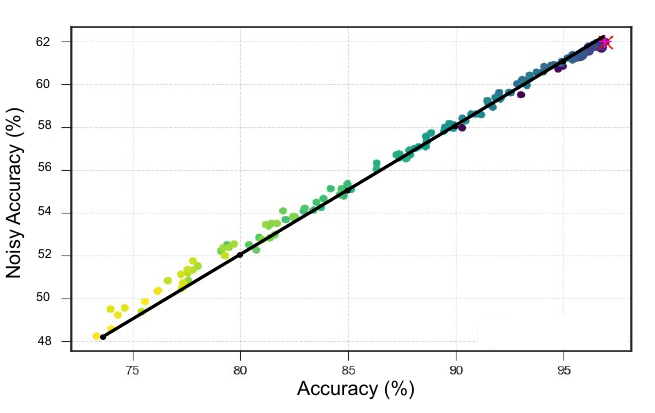}
\caption{Symmetric label noise - noisy vs clean accuracy: A classifier model is trained using cross-entropy loss on the MNIST dataset, corrupted by 36\% symmetric label noise. We plot the model's noisy and clean accuracies against each other at the end of each epoch, with early epochs coloured in dark blue and later epochs (around 100) in yellow. As expected, a linear relationship emerges between the noisy and clean accuracies. The theoretical relationship (Equation~\ref{eqn:sym_risk_relationship}) is depicted by the black line, showing near-perfect alignment between the experimental results and theoretical predictions.}
  \label{fig:clean_vs_noisy_line}
\end{figure}

Fact 1 asserts that, in the case of uniform symmetric label noise, the model that minimises the noisy risk within $\mathcal{Q}$ will also minimise the clean risk, and that the clean and noisy risks should be related by a linear mapping. Figure~\ref{fig:clean_vs_noisy_line} illustrates the relationship between noisy and clean accuracy for a neural network model at each training epoch on the symmetrically-noised MNIST dataset (36\% noise). The black line represents the expected theoretical relationship, demonstrating strong alignment with the experimental results. We may conclude from Fact 1 that under uniform symmetric label noise, a Noisy Early-Stopping policy is likely to be effective.

\subsection{Asymmetric and Non-Uniform Label Noise}
The theoretical result established by Fact 1 for uniform symmetric label noise is strong. Fact 2 establishes that uniform symmetric label noise is the \emph{only} noise model for which this strong result holds: For all other label noise models, there is \emph{no inherent reason to assume that performance evaluations on a noisy dataset will reliably reflect performance on the underlying clean distribution}.

\paragraph*{Fact 2:} Let $p(\widetilde{y}\mid y,x)$ be a label noise model with the property that, for any set of estimators $\mathcal{Q}\coloneq \{\bm{q}_n\}_{n=1}^N$ and any data-label distribution, the minimiser of the noisy risk within $\mathcal{Q}$ coincides with the minimiser of the clean risk within $\mathcal{Q}$; then $p(\widetilde{y}\mid y,x)$ \emph{must} describe uniform, symmetric and class-preserving label noise.

\paragraph{Example: Decision Tree Classifier} Figure~\ref{fig:both_classifiers} (Left) plots the noisy versus clean validation accuracy for \textbf{decision tree classifiers} of increasing depth trained on a dataset corrupted by pairwise (\emph{asymmetric}) label noise at $42\%$. Low-depth models are indicated with dark blue and deeper classifiers with yellow. We can see that, in sharp contrast to Figure~\ref{fig:clean_vs_noisy_line}, the relationship between clean and noisy validation accuracy is quite chaotic. In particular, the depth which optimises the noisy accuracy (indicated by the red vertical dotted line) is highly suboptimal for clean accuracy, achieving an accuracy 18\% lower than optimal. 

\begin{figure*}[ht] 
\centering

\includegraphics[height=6cm, width=0.48\textwidth]{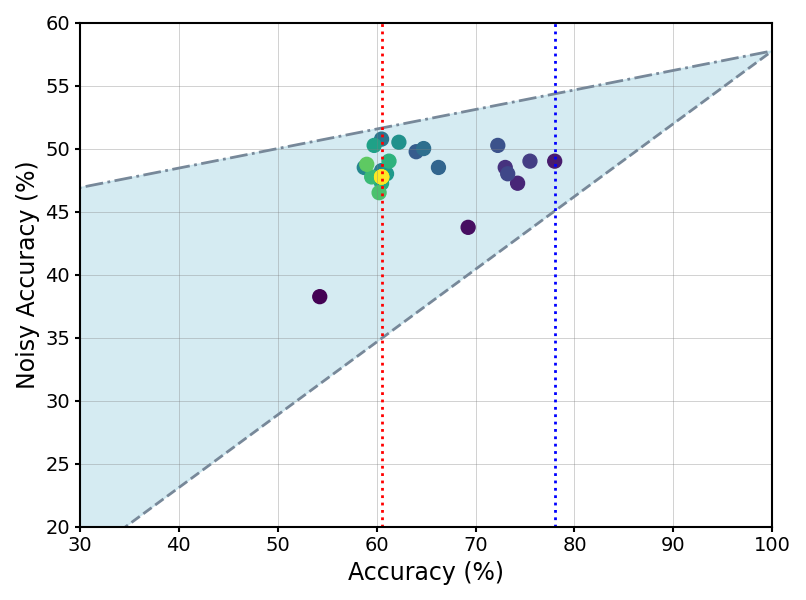} 
\hfill 
\includegraphics[height=6cm, width=0.48\textwidth]{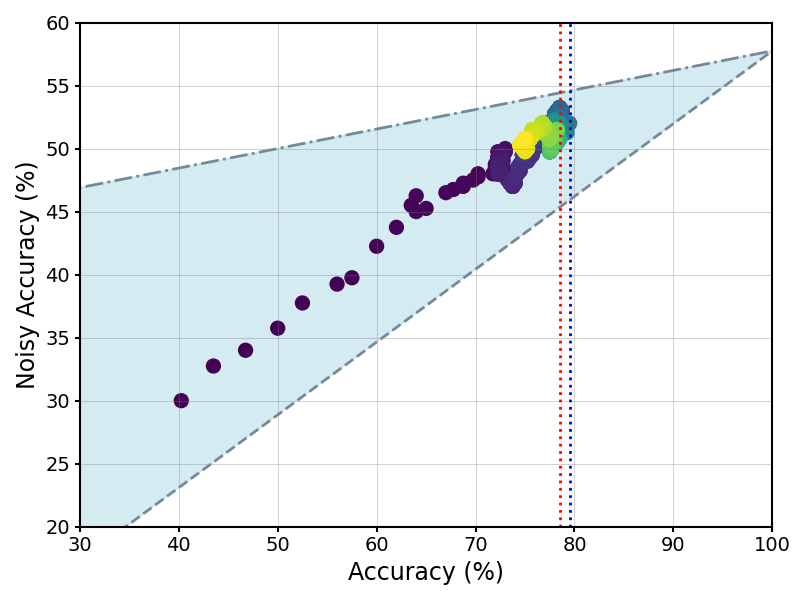} 

\caption{Noisy validation accuracy plotted against clean validation accuracy: \textbf{Left: \uline{Decision Tree Classifier}} at increasing depths, fitted to a dataset containing $42\%$ pairwise label noise. Shallow depth models are represented in blue, transitioning to yellow as depth increases. A red dotted vertical line highlights the classifier achieving the highest noisy validation accuracy, while a blue dotted vertical line marks the classifier with the highest clean validation accuracy. The significant horizontal gap between these lines illustrates the limited effectiveness of Noise Early Stopping (NES) for optimising the depth of decision trees under this type of label noise. \textbf{Right: \uline{Neural Network Classifier}} trained on the same noisy dataset. Early epochs are represented in blue, transitioning to yellow. A red dotted vertical line highlights the epoch with the highest noisy validation accuracy, while a blue dotted vertical line marks the epoch with the highest clean validation accuracy. The small horizontal gap between these lines illustrates the effectiveness of Noise Early Stopping (NES) for neural network models under similar noise conditions. For both graphs the light blue region represents bounds established by Fact 5, indicating that no model may achieve an accuracy/noisy-accuracy combination outside this region.}
\label{fig:both_classifiers}
\end{figure*}

\subsubsection{Additional Assumptions}
Under specific assumptions about the data distribution and the model set, optimising noisy accuracy can indeed align with optimising clean accuracy. Examples of such assumptions are demonstrated by the following facts:

\paragraph*{Fact 3:} 
Let $p(\widetilde{y}\mid y,x)$ be a class-preserving label noise model. If the set of estimators $\mathcal{Q}\coloneq \{\bm{q}_n\}_{n=1}^N$ contains a Bayes-optimal estimator (a minimiser of the clean risk over \emph{all} possible estimators) then this estimator will be a minimiser of the noisy risk in $\mathcal{Q}$. 

\paragraph*{Fact 4:} For \emph{non-uniform}, symmetric, class-preserving noise, the minimiser of the noisy risk within $\mathcal{Q}$ will coincide with the minimiser of the clean risk within $\mathcal{Q}$ if each of the estimators in $\mathcal{Q}$ are uncorrelated with the noise model. More generally, the worst-case performance of selecting the estimator with minimal noisy risk can be bounded in terms of the correlation between the noise model and the estimators. 

\paragraph*{Fact 5:} For uniform, \emph{asymmetric}, class-preserving label noise one can upper-bound the worst-case clean risk difference between the minimiser of the noisy risk ($\bm{q}^{\eta}_*$) and the minimiser of the clean risk ($\bm{q}_*$) in $\mathcal{Q}$. i.e. we upper bound $\vert R(\bm{q}^{\eta}_*)-R(\bm{q}_*)\vert$. The upper bound is given in terms of the maximum and minimum transition probabilities between classes and the minimal achievable noisy risk within $\mathcal{Q}$. 

\subsection{Expectations}
Based on Facts 1-5, we anticipate that Noisy Early Stopping (NES) should be effective for uniform symmetric label noise. However, for other noise models, the effectiveness of NES appears less promising. During typical gradient descent training of classifiers, achieving a Bayes-optimal classifier is unlikely, thus Fact 3 does not hold. Furthermore, Fact 4 applies to non-uniform symmetric (not \emph{asymmetric}) noise. Additionally, it's uncertain whether the classifiers in $\mathcal{Q}$ can remain uncorrelated with the noise model, given that they are trained on data affected by this noise. Moreover, Figure~\ref{fig:both_classifiers} (Left) illustrates, that NES performs poorly for decision tree classifiers for asymmetric label noise. Coupled with the poor bounds outlined in Fact 5, (discussed in Appendix~\ref{ch7:sec:bounds_discussion}), it is reasonable to expect that NES for neural network classifier will significantly underperform compared to ES under most label noise conditions. 

\section{Experiments}\label{ch7:sec:experiments}

\begin{figure*}[btp]
\centering
  \begin{minipage}[b]{0.49\textwidth} 
    \centering
    \includegraphics[width=\textwidth, keepaspectratio]{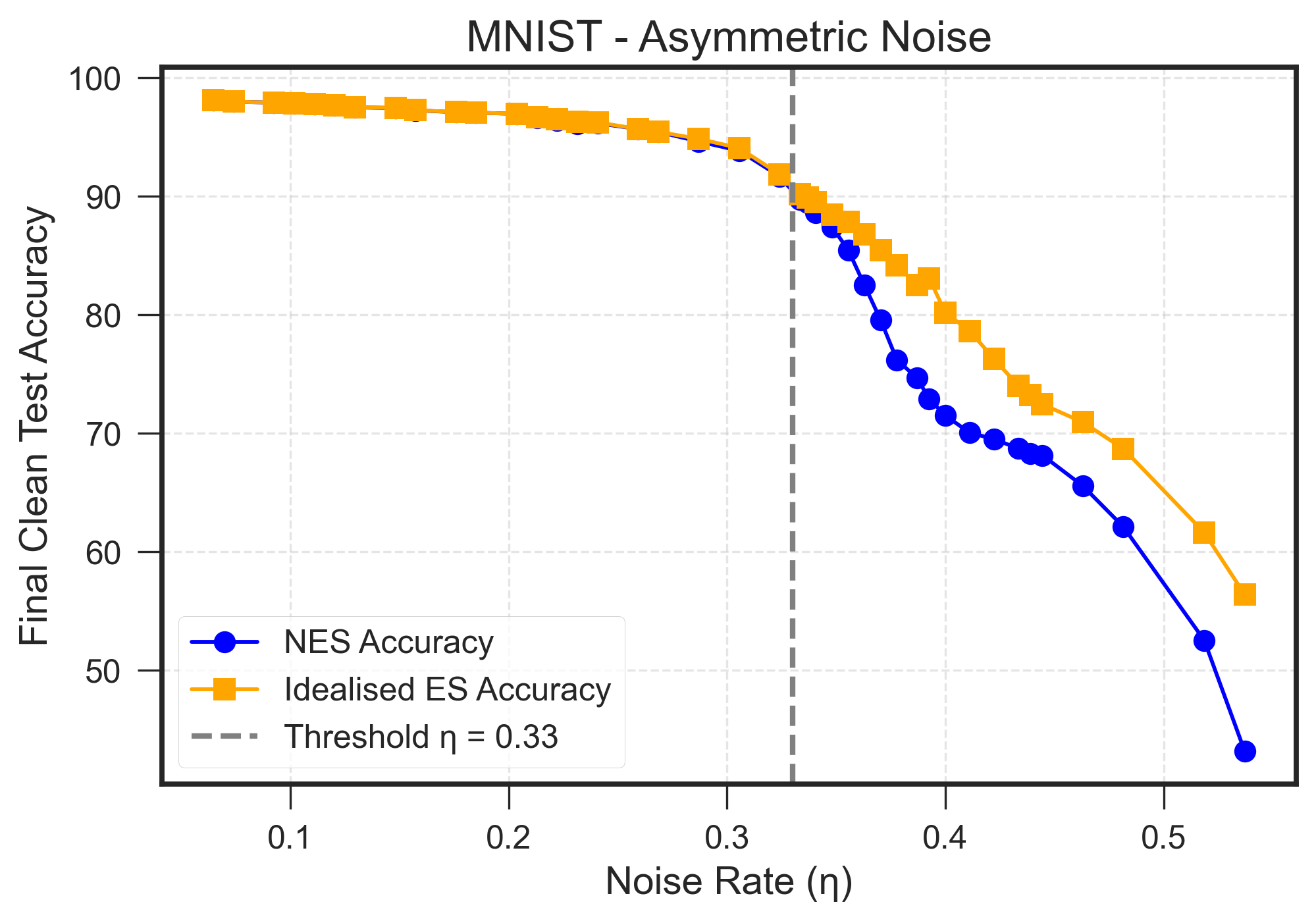}
  \end{minipage}
  \hfill 
  \begin{minipage}[b]{0.49\textwidth}  
    \centering
    \includegraphics[width=\textwidth, keepaspectratio]{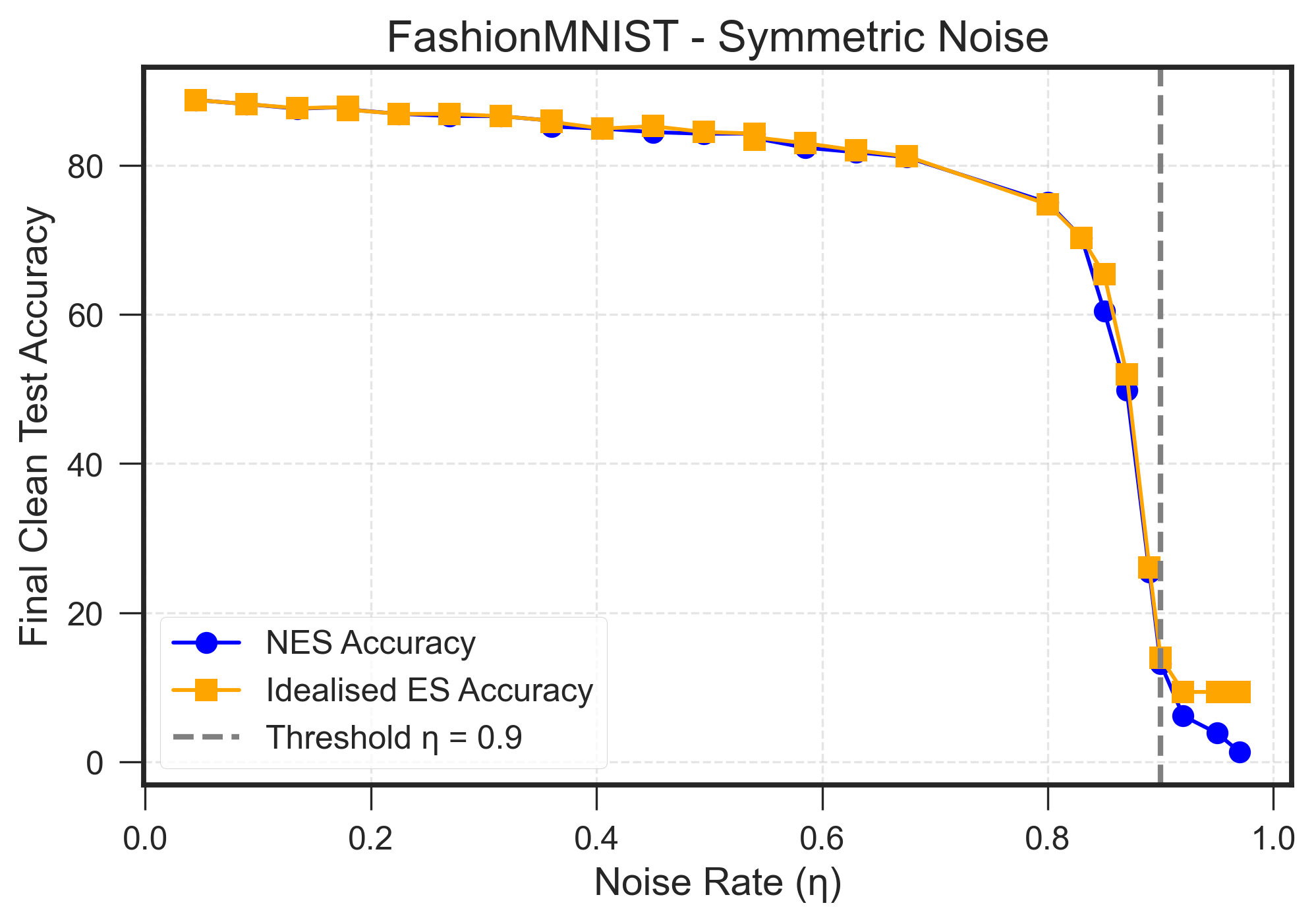} 
  \end{minipage}
\caption{Comparison of NES (blue) and clean Early Stopping (ES) (yellow) across increasing noise rates ($\eta$). The plots show final clean test accuracy for models trained on the asymmetrically-noised MNIST dataset (left) and symmetrically-noised Fashion-MNIST dataset (right) when implementing each Early Stopping policy. Vertical lines at $\eta = 0.33$ for MNIST and $\eta = 0.9$ for Fashion-MNIST mark the thresholds where noise ceases to be class-preserving. Below these thresholds, NES closely aligns with ES, demonstrating its robustness under varying levels of label noise.}
\label{ch7:fig:Fashion_and_MNIST_class_pres}
\end{figure*}

\subsection{Experiment Details} We evaluate Noisy Early Stopping (NES) against clean Early Stopping (ES), which uses a cleanly-labelled validation set, and Without-Early-Stopping (WES), the standard for noisy label training. 

\subsubsection{Experiment Setup} We prepare our experiments using cleanly-labelled datasets divided into training, validation, and test segments. For Noisy Early Stopping (NES), we inject synthetic label noise into both the training and validation sets. We then train neural network classifiers on this noisy training data for 100-150 epochs, evaluating them at each epoch using the noisy validation set accuracy. If there’s no improvement in validation accuracy for ten epochs, we halt training and revert to the best model observed (patience parameter $P=10$).

For clean Early Stopping (ES), we follow a similar training procedure, but the validation set remains clean and unaffected by synthetic noise. In contrast, the Without-Early-Stopping (WES) method involves continuous training through the predetermined epoch count without monitoring validation performance. All methods are ultimately evaluated on the clean test set.

\paragraph*{Remark:} In real-world scenarios, a clean validation set may not be available. The comparison with ES serves to contextualise NES's performance relative to an idealised `gold-standard' baseline \citep{early_stop_without_val}. 

\subsubsection{Datasets and Noise Models} We evaluate the performance of NES/ES/WES across 10 noisy datasets: FashionMNIST, MNIST, CIFAR10, and CIFAR100, all corrupted by symmetric label noise, as well as a variety of non-symmetric label noise types. These include non-uniform asymmetrically noised EMNIST (NU-EMNIST), non-uniform asymmetric MNIST (NU-MNIST), and uniform-asymmetric noise variants of FashionMNIST, MNIST, CIFAR10, and CIFAR100. Full details of these label noise models are provided in Appendix~\ref{app:dataset_models}. Although Clothing1M and Animals-10 are common datasets for evaluating label noise-robust methods, they lack cleanly labelled validation sets and therefore are not suitable for testing the core hypothesis of this work.

\paragraph*{Loss Functions} We evaluate the performance of NES, ES, and WES using six different loss functions: Cross-Entropy (\textbf{CE}), and five popular robust loss functions: \textbf{MSE} \citep{janocha2017loss}, \textbf{GCE} \citep{GCE_Loss}, \textbf{SCE} \citep{sce}, forward-corrected CE (\textbf{FCE}) \citep{fprop}, and backward-corrected CE (\textbf{BCE}) \citep{natarajan2013learning}. The transition matrix used in the backward and forward-corrected loss functions is set to the true noise model used to construct the label noise, except for non-uniform label noise, where it is set to uniform symmetric at the given noise rate.

\subsection{NES Results} Table~\ref{tab:performance-metrics} presents the results of each method (NES/WES/ES) across the datasets for each of the six loss functions. The table shows the average final clean test accuracy and the standard deviation over three runs. The results of our proposed NES method are shaded in light blue. Where the performance of NES exceeds that of ES, or their uncertainty intervals overlap, the corresponding entry is \textbf{bolded}. This occurs in $\bm{93\%}$ of settings, indicating that NES performs nearly as well as ES despite using a noisy validation set instead of a clean one. These results are consistent across different loss functions and hold for datasets corrupted by uniform symmetric label noise as well as those affected by other noise types, despite the absence of strong theoretical guarantees.

Table~\ref{tab:performance-metrics} also presents the results of the standard Without Early Stopping (WES) approach for these datasets. The final clean test accuracies for WES are often substantially lower than those for ES and NES, highlighting the importance of early-stopping approaches to mitigate overfitting, even when using robust loss functions. NES outperforms WES in $\bm{75\%}$ of the experimental settings.

\subsection{Plots and Figures} In Section~\ref{ch7:sec:theoretical}, we evaluated the effectiveness of Noisy Early Stopping (NES) for determining the optimal depth of a decision tree classifier trained with noisy labels. NES proved ineffective for asymmetric label noise. Figure~\ref{fig:both_classifiers} (Right) explores the application of NES to a neural network classifiers for this same noisy dataset. Contrary to the findings with decision trees, we observe that for the neural network, the noisy and clean validation accuracies peak almost simultaneously, indicating the effectiveness of NES in this context.

Additional plots that further illustrate the relationship between clean and noisy accuracy across various datasets and noise models, and demonstrate the effectiveness of NES for neural network models, are presented in Appendix~\ref{ch7:sec:additional}.

\subsubsection{The Importance of Being Class-Preserving}
In Figure~\ref{ch7:fig:Fashion_and_MNIST_class_pres} we visualise the performance of NES as a function of the noise rate. We plot the final (clean) test accuracy of models trained using NES in blue and the final (clean) test accuracy of ES in yellow against increasing noise rates for two noisy datasets (AsymMNIST left and FashionMNIST right). Beyond a certain threshold noise level, the class-preserving condition is violated, highlighted by a vertical dashed line. Until this point, noisy and clean ES perform identically, with an immediate divergence once the class-preserving property no longer holds. This finding reaffirms the necessity for label noise to be class-preserving for NES to be an effective substitute for clean ES. When the noise rate is increased to the point where the label noise is no longer class-preserving the original signal relating the data points to the label is essentially destroyed. As a result the noisy accuracy of a model no longer aligns with its clean accuracy.

\subsection{Implications of Findings}\label{ch7:sec:discussion}
The results shown in the Table~\ref{tab:performance-metrics} and in Figure~\ref{fig:both_classifiers} (right) illustrate that NES performs well and is about as effective as the clean Early Stopping idealised baseline. While for uniform symmetric label noise this was to be expected, the fact that NES works for other noise types is remarkable. This means that even when no clean validation set is available we can early stop and obtain near-optimal clean test performance. The disparity in performance between the Without Early-Stopping baseline and NES underscores the importance of employing Early Stopping when data is corrupted by label noise. Therefore the discovery that we can do almost optimal Early Stopping without the need to construct a cleanly labelled validation set is a meaningful finding. 

\paragraph*{Example Application:} Consider a scenario with a large dataset that requires labelling. We assemble a diverse group of human labellers, from novices to experts, to annotate the dataset. This yields a large dataset with noisy labels where we do not know the precise noise model or the noise rate. We assume that the label noise obeys the class-preserving condition. We select a standard loss function and we wish to train a neural network model on the noisy dataset. However, we are concerned about overfitting. We randomly split off a portion of the noisy dataset to form a noisy validation set. During training, we monitor the accuracy on this noisy validation set and cease training once noisy validation begins to decline (subject to some patience parameter). Our experimental results suggest that this would result in almost perfect Early Stopping, achieving near-peak clean test accuracy and avoiding overfitting. 

\begin{table*}[!ht]
\centering
\caption{Performance Metrics across Different Datasets and Methods. This table presents a comparison of three methods—NES (Noisy Early Stopping), ES (Early Stopping), and WES (Without Early Stopping)—using six commonly used loss functions across various noisy datasets. NES, highlighted in blue, is our proposed method, placed first to highlight its performance. Bold values indicate where NES overlaps or exceeds the performance of ES (93\% of cases), an idealised method that uses a clean validation set, providing a context for NES's efficacy when such a set is unavailable. WES, typically used in training with robust loss functions without early stopping, is outperformed by NES in 75\% of cases, demonstrating the significant benefits of early-stopping strategies.}
\label{tab:performance-metrics}
\begin{tabular}{@{}lccccccc@{}} 
\toprule
\multirow{2}{*}{\textbf{Dataset}} &  & \multicolumn{6}{c}{\textbf{Loss Function}} \\
\cmidrule(lr){3-8}
& & \textbf{CE} & \textbf{MSE} & \textbf{GCE} & \textbf{SCE} & \textbf{FCE} & \textbf{BCE} \\
\midrule
\multirow{3}{*}{\textbf{MNIST}} 
& \cellcolor{lightblue} NES & \cellcolor{lightblue} $\bm{88.43}_{\pm 0.65}$ & \cellcolor{lightblue} $\bm{88.78}_{\pm 0.57}$ & \cellcolor{lightblue} $\bm{92.37}_{\pm 0.72}$ & \cellcolor{lightblue} $\bm{89.91}_{\pm 0.28}$ & \cellcolor{lightblue} $\bm{92.45}_{\pm 0.58}$ & \cellcolor{lightblue} $\bm{88.50}_{\pm 1.06}$ \\
& WES & $34.18_{\pm 3.10}$ & $52.35_{\pm 1.74}$ & $66.64_{\pm 1.11}$ & $32.20_{\pm 1.37}$ & $86.83_{\pm 0.83}$ & $77.23_{\pm 1.41}$ \\
& \textcolor{gray}{ES}  & \textcolor{gray}{$88.34_{\pm 0.74}$} & \textcolor{gray}{$88.78_{\pm 0.57}$} & \textcolor{gray}{$92.63_{\pm 0.38}$} & \textcolor{gray}{$89.91_{\pm 0.28}$} & \textcolor{gray}{$92.50_{\pm 0.50}$} & \textcolor{gray}{$89.94_{\pm 0.31}$} \\
\midrule
\multirow{3}{*}{\textbf{Fashion}} 
& \cellcolor{lightblue} NES & \cellcolor{lightblue} $\bm{83.21}_{\pm 0.37}$ & \cellcolor{lightblue} $83.48_{\pm 0.07}$ & \cellcolor{lightblue} $\bm{84.77}_{\pm 0.28}$ & \cellcolor{lightblue} $\bm{84.17}_{\pm 0.31}$ & \cellcolor{lightblue} $\bm{84.43}_{\pm 0.26}$ & \cellcolor{lightblue} $\bm{80.35}_{\pm 0.54}$ \\
& WES & $49.94_{\pm 0.28}$ & $68.17_{\pm 1.60}$ & $80.67_{\pm 0.46}$ & $51.06_{\pm 0.42}$ & $78.74_{\pm 0.57}$ & $68.58_{\pm 0.30}$ \\
& \textcolor{gray}{ES}  & \textcolor{gray}{$82.96_{\pm 0.19}$} & \textcolor{gray}{$83.69_{\pm 0.09}$} & \textcolor{gray}{$84.77_{\pm 0.28}$} & \textcolor{gray}{$83.48_{\pm 0.90}$} & \textcolor{gray}{$84.47_{\pm 0.32}$} & \textcolor{gray}{$79.84_{\pm 0.16}$} \\
\midrule
\multirow{3}{*}{\textbf{CIFAR10}} 
& \cellcolor{lightblue} NES & \cellcolor{lightblue} $59.46_{\pm 2.51}$ & \cellcolor{lightblue} $\bm{60.63}_{\pm 0.33}$ & \cellcolor{lightblue} $\bm{72.67}_{\pm 2.04}$ & \cellcolor{lightblue} $\bm{68.82}_{\pm 1.39}$ & \cellcolor{lightblue} $\bm{70.63}_{\pm 1.63}$ & \cellcolor{lightblue} $\bm{59.47}_{\pm 4.19}$ \\
& WES & $37.23_{\pm 1.43}$ & $34.54_{\pm 0.88}$ & $64.21_{\pm 2.32}$ & $38.69_{\pm 0.39}$ & $54.43_{\pm 0.61}$ & $42.32_{\pm 1.37}$ \\
& \textcolor{gray}{ES}  & \textcolor{gray}{$63.70_{\pm 0.98}$} & \textcolor{gray}{$61.22_{\pm 0.88}$} & \textcolor{gray}{$72.65_{\pm 2.05}$} & \textcolor{gray}{$67.73_{\pm 1.19}$} & \textcolor{gray}{$70.74_{\pm 1.78}$} & \textcolor{gray}{$59.47_{\pm 4.19}$} \\
\midrule
\multirow{3}{*}{\textbf{CIFAR100}} 
& \cellcolor{lightblue} NES & \cellcolor{lightblue} $\bm{51.86}_{\pm 1.71}$ & \cellcolor{lightblue} $\bm{41.80}_{\pm 0.99}$ & \cellcolor{lightblue} $\bm{63.74}_{\pm 1.55}$ & \cellcolor{lightblue} $\bm{53.82}_{\pm 2.31}$ & \cellcolor{lightblue} $\bm{65.98}_{\pm 0.45}$ & \cellcolor{lightblue} $\bm{55.58}_{\pm 0.25}$ \\
& WES & $38.96_{\pm 1.21}$ & $48.14_{\pm 0.85}$ & $66.53_{\pm 0.81}$ & $39.59_{\pm 0.47}$ & $47.33_{\pm 1.12}$ & $44.82_{\pm 0.56}$ \\
& \textcolor{gray}{ES}  & \textcolor{gray}{$51.31_{\pm 2.69}$} & \textcolor{gray}{$43.33_{\pm 0.98}$} & \textcolor{gray}{$62.91_{\pm 1.87}$} & \textcolor{gray}{$54.07_{\pm 2.14}$} & \textcolor{gray}{$66.34_{\pm 0.66}$} & \textcolor{gray}{$55.58_{\pm 0.25}$} \\
\midrule
\multirow{3}{*}{\textbf{AsymMNIST}} 
& \cellcolor{lightblue} NES & \cellcolor{lightblue} $\bm{95.27}_{\pm 0.93}$ & \cellcolor{lightblue} $\bm{94.38}_{\pm 1.25}$ & \cellcolor{lightblue} $\bm{97.53}_{\pm 0.18}$ & \cellcolor{lightblue} $\bm{96.87}_{\pm 0.42}$ & \cellcolor{lightblue} $\bm{97.98}_{\pm 0.12}$ & \cellcolor{lightblue} $\bm{94.69}_{\pm 1.35}$ \\
& WES & $83.32_{\pm 3.81}$ & $84.03_{\pm 2.38}$ & $94.83_{\pm 0.18}$ & $78.17_{\pm 0.30}$ & $97.53_{\pm 0.17}$ & $83.56_{\pm 1.22}$ \\
& \textcolor{gray}{ES}  & \textcolor{gray}{$95.28_{\pm 0.94}$} & \textcolor{gray}{$95.12_{\pm 0.69}$} & \textcolor{gray}{$97.53_{\pm 0.10}$} & \textcolor{gray}{$96.87_{\pm 0.42}$} & \textcolor{gray}{$98.20_{\pm 0.15}$} & \textcolor{gray}{$95.70_{\pm 0.10}$} \\
\midrule
\multirow{3}{*}{\textbf{AsymFashion}} 
& \cellcolor{lightblue} NES & \cellcolor{lightblue} $\bm{72.76}_{\pm 2.59}$ & \cellcolor{lightblue} $\bm{73.17}_{\pm 1.99}$ & \cellcolor{lightblue} $\bm{74.56}_{\pm 0.82}$ & \cellcolor{lightblue} $\bm{73.68}_{\pm 2.99}$ & \cellcolor{lightblue} $\bm{77.09}_{\pm 1.07}$ & \cellcolor{lightblue} $\bm{77.38}_{\pm 1.95}$ \\
& WES & $68.61_{\pm 1.38}$ & $68.57_{\pm 0.70}$ & $70.74_{\pm 0.22}$ & $68.11_{\pm 0.87}$ & $77.65_{\pm 1.05}$ & $71.35_{\pm 0.31}$ \\
& \textcolor{gray}{ES}  & \textcolor{gray}{$74.24_{\pm 0.97}$} & \textcolor{gray}{$73.46_{\pm 1.27}$} & \textcolor{gray}{$73.50_{\pm 1.17}$} & \textcolor{gray}{$75.03_{\pm 0.65}$} & \textcolor{gray}{$77.92_{\pm 0.58}$} & \textcolor{gray}{$71.95_{\pm 8.99}$} \\
\midrule
\multirow{3}{*}{\textbf{AsymCIFAR10}}
& \cellcolor{lightblue} NES & \cellcolor{lightblue} $\bm{83.59}_{\pm 0.51}$ & \cellcolor{lightblue} $\bm{82.62}_{\pm 1.52}$ &\cellcolor{lightblue}  $\bm{82.50}_{\pm 1.16}$ & \cellcolor{lightblue} $\bm{81.43}_{\pm 0.67}$ & \cellcolor{lightblue} $\bm{84.56}_{\pm 1.31}$ & \cellcolor{lightblue} $\bm{74.92}_{\pm 3.64}$ \\
& WES & $84.49_{\pm 2.02}$ & $82.80_{\pm 1.21}$ & $86.34_{\pm 0.57}$ & $85.60_{\pm 0.64}$ & $89.62_{\pm 0.49}$ & $84.78_{\pm 0.15}$ \\
& \textcolor{gray}{ES}  & \textcolor{gray}{$83.59_{\pm 0.51}$} & \textcolor{gray}{$82.62_{\pm 1.52}$} & \textcolor{gray}{$82.07_{\pm 1.13}$} & \textcolor{gray}{$81.43_{\pm 0.67}$} & \textcolor{gray}{$85.35_{\pm 0.89}$} & \textcolor{gray}{$76.90_{\pm 5.03}$} \\
\midrule
\multirow{3}{*}{\textbf{AsymCIFAR100}} 
& \cellcolor{lightblue} NES & \cellcolor{lightblue} $\bm{73.28}_{\pm 0.69}$ & \cellcolor{lightblue} $\bm{54.56}_{\pm 0.47}$ & \cellcolor{lightblue} $\bm{57.97}_{\pm 4.82}$ & \cellcolor{lightblue} $\bm{72.04}_{\pm 2.01}$ & \cellcolor{lightblue} $\bm{66.71}_{\pm 2.21}$ & \cellcolor{lightblue} $\bm{41.45}_{\pm 1.09}$ \\
& WES & $70.58_{\pm 0.73}$ & $57.25_{\pm 0.75}$ & $64.77_{\pm 0.61}$ & $68.58_{\pm 0.30}$ & $64.49_{\pm 1.18}$ & $48.86_{\pm 0.83}$ \\
& \textcolor{gray}{ES}  & \textcolor{gray}{$73.28_{\pm 0.69}$} & \textcolor{gray}{$54.94_{\pm 0.84}$} & \textcolor{gray}{$62.69_{\pm 2.91}$} & \textcolor{gray}{$72.04_{\pm 2.01}$} & \textcolor{gray}{$67.41_{\pm 1.37}$} & \textcolor{gray}{$42.28_{\pm 0.84}$} \\
\midrule
\multirow{3}{*}{\textbf{NU-MNIST}} 
& \cellcolor{lightblue} NES & \cellcolor{lightblue} $\bm{97.41}_{\pm 0.29}$ & \cellcolor{lightblue} $\bm{97.68}_{\pm 0.31}$ & \cellcolor{lightblue} $\bm{98.10}_{\pm 0.11}$ & \cellcolor{lightblue} $\bm{98.03}_{\pm 0.18}$ & \cellcolor{lightblue} $\bm{98.01}_{\pm 0.06}$ & \cellcolor{lightblue} $\bm{97.07}_{\pm 0.08}$ \\
& WES & $92.79_{\pm 0.25}$ & $96.16_{\pm 0.07}$ & $98.25_{\pm 0.03}$ & $92.99_{\pm 0.73}$ & $97.70_{\pm 0.13}$ & $95.62_{\pm 0.66}$ \\
& \textcolor{gray}{ES}  & \textcolor{gray}{$97.45_{\pm 0.28}$} & \textcolor{gray}{$97.69_{\pm 0.32}$} & \textcolor{gray}{$98.10_{\pm 0.11}$} & \textcolor{gray}{$98.20_{\pm 0.20}$} & \textcolor{gray}{$98.01_{\pm 0.06}$} & \textcolor{gray}{$97.07_{\pm 0.08}$} \\
\midrule
\multirow{3}{*}{\textbf{NU-EMNIST}} 
& \cellcolor{lightblue} NES & \cellcolor{lightblue} $\bm{91.55}_{\pm 2.55}$ & \cellcolor{lightblue} $53.93_{\pm 27.93}$ & \cellcolor{lightblue} $11.93_{\pm 0.16}$ & \cellcolor{lightblue} $\bm{88.51}_{\pm 0.64}$ & \cellcolor{lightblue} $\bm{68.25}_{\pm 4.17}$ & \cellcolor{lightblue} $\bm{89.94}_{\pm 0.94}$ \\
& WES & $91.50_{\pm 2.70}$ & $76.33_{\pm 1.16}$ & $15.14_{\pm 1.58}$ & $86.60_{\pm 0.57}$ & $71.02_{\pm 3.76}$ & $89.22_{\pm 0.46}$ \\
& \textcolor{gray}{ES}  & \textcolor{gray}{$92.35_{\pm 2.28}$} & \textcolor{gray}{$75.10_{\pm 0.91}$} & \textcolor{gray}{$14.38_{\pm 2.13}$} & \textcolor{gray}{$89.02_{\pm 0.38}$} & \textcolor{gray}{$70.88_{\pm 3.69}$} & \textcolor{gray}{$90.81_{\pm 0.20}$} \\
\addlinespace
\midrule
\bottomrule
\end{tabular}
\end{table*}

\section{Conclusion}\label{ch7:sec:conclusions}

\subsection{Summary}
The purpose of this study has been to look at the relationship between the noisy and clean risk of a classifier during training when a dataset is corrupted by label noise. Our primary goal has been to investigate, empirically and theoretically, whether noisy accuracy can be used as an effective criterion for Early Stopping. 

\paragraph*{What We've Shown} In Section~\ref{ch7:sec:theoretical} we gave some theoretical insights regarding NES. We showed that NES should be effective when label noise is uniform and symmetric. We proved that for all other noise types, these strong guarantees enjoyed by uniform symmetric noise do not apply. Consequently, our theoretical results implied that NES might not be effective unless label noise was uniform and symmetric. Unexpectedly, Section~\ref{ch7:sec:experiments} demonstrated empirically that using noisy accuracy as an Early-Stopping criterion is highly effective, performing comparably with clean Early Stopping across datasets and noise types. This finding is both intriguing and practical, as it suggests that practitioners can apply near-optimal Early Stopping strategies even in the absence of clean data. However, the surprising nature of these results indicates that our theoretical analysis in Section~\ref{ch7:sec:theoretical} is incomplete, necessitating further research to fully understand why NES is so effective. A potential avenue for future exploration into the efficacy of NES is discussed in Appendix~\ref{ch7:sec:why_nes_works}. 

\subsection{Limitations and Future Directions}
While our finding that NES is effective is useful and interesting there are several limitations which we enumerate in this section along with possible research directions our study opens up. 

\subsubsection{Future Work}
\paragraph*{The Class-Preserving Assumption} Throughout this work we assume that the label noise we are dealing with is class-preserving. As we can see in Figure~\ref{ch7:fig:Fashion_and_MNIST_class_pres}, when this condition is violated NES and ES begin to diverge. As we have tried to emphasise, this noise condition is broad, and the vast majority of label noise studied in the literature is of this variety. Nevertheless, rare instances of non-class-preserving label noise have been studied in prior work \citep{fprop}. Can Noisy Early-Stopping be extended to these settings? An appeal of NES is that it can be used even when the exact noise model is not known. It is this agnosticism which mandates that we make some limiting assumption about the set of admissible noise models (see no-free-lunch theorems \citep{wolpert1997noFreeLunch}). Consequently, to extend NES to the non-class-preserving setting it becomes necessary to have some knowledge about the structure of the noise model. Although the non-class-preserving setting is outside the scope of this work we speculate that if one has a Fisher consistent loss function (see section~\ref{ch7:sec:background} e.g. CE) and one corrects the loss $(L\mapsto L_F)$ in such a way that a global minimiser of the noisy $L_F-$risk is Bayes-optimal for the clean distribution then NES will be effective when training using $L_F$. Nevertheless, this is speculation and will need to be looked at in further work.  

\paragraph*{Closed-Set} We reiterate also that these findings apply to closed-set label noise - label noise where the true label of each sample lies within the given label set. (For example an image of Tiger Woods in an animal dataset and labelled as `tiger' is open-set label noise not closed-set). Open-set label noise is outside the scope of this work and should be investigated in further studies. 

\paragraph*{Revaluation of Robust Loss Functions} Finally, we feel that it would be useful to make a comprehensive comparison of existing robust loss functions across the standard noisy dataset benchmarks now that we have an effective method of Early Stopping that works across loss functions. 

\subsubsection{Limitations}
\paragraph*{Data Scarcity} In a setting in which the noisy data is scarce it may not be practical to separate some of this data into a validation set as this could leave an impractically small training set. Our results apply in settings where there is ample noisy data to form a validation set while retaining a sufficiently large training set. 

\paragraph*{Double Descent} Previous research has shown that when labels contain noise an epochwise double descent phenomenon can manifest \citep{epochwise, nakkiran2021deep_epochwise_orig}. While the clean test error rises after a certain number of epochs, it proceeds to dip again if training continues beyond this point. Simultaneously, the clean test accuracy undergoes a second increase. When the noise rate is low $\approx 5\%$ this second increase in test accuracy can be larger than the first peak. A consequence of this is that Early Stopping with a small patience parameter would stop training before this second larger peak is obtained. This is a problem of Early Stopping more broadly rather than Noisy Early Stopping specifically, nevertheless, it is important to be cognisant of this double descent phenomenon in the low noise rate setting and to select a suitably large patience parameter. 

\paragraph*{Early Stopping} In our theory section (Section~\ref{ch7:sec:theoretical}) we give conditions under which the minimiser of the noisy risk will minimise the clean risk, arguing that under these conditions Noisy Early Stopping will be effective. This relies on the assumption that we can precisely estimate the noisy risk at each epoch of training. In practice, we estimate the noisy risk by computing the misclassification rate on a held-out noisy validation set which remains constant through training. Since we are using a finite test set there will be some uncertainty in this noisy risk approximation. Moreover, since we use the \emph{same} validation set during training this introduces covariances between noisy risk estimations for each epoch. If the validation set is sufficiently large then these variances/covariances are small. However, when the validation set is insufficiently large both the variances and covariances will be non-trivial. In this regime selecting the model with the lowest estimated noisy risk may not correspond to selecting the model with the lowest noisy risk within $\mathcal{Q}$.

\paragraph*{Peak Test Accuracy During Training} Noisy Early Stopping allow us to cease training at or near the point where clean test accuracy peaks. The effectiveness of NES therefore depends on the height of this peak; in particular, if the classifier being learned does not generalise well at any epoch during training then NES cannot alter this fact. Thus, while NES allows us to get the best performance out of a particular method it is limited by the performance of the given method. As established in the relevant literature, different loss functions attain different peak test accuracies during training in the presence of noisy labels \citep{L2, GCE_Loss}. Since our work provides a way to obtain this peak model, it reinforces the importance of continuing to develop and improve robust loss functions. We believe that the findings of this study should redirect future efforts from developing loss functions that prevent overfitting to those that enhance the peak performance attained during training.

\bibliographystyle{apalike}
\bibliography{earlyStopping/earlyStopping}

\begin{thebibliography}{}

\bibitem[Angluin and Laird, 1988]{angluin1988}
Angluin, D. and Laird, P. (1988).
\newblock Learning from noisy examples.
\newblock {\em Machine learning}, 2:343--370.

\bibitem[Arpit et~al., 2017]{memorisation}
Arpit, D., Jastrz\k{e}bski, S., Ballas, N., Krueger, D., Bengio, E., Kanwal, M.~S., Maharaj, T., Fischer, A., Courville, A., Bengio, Y., et~al. (2017).
\newblock A closer look at memorization in deep networks.
\newblock In {\em International conference on machine learning}. PMLR.

\bibitem[Bai et~al., 2021]{EarlyStoppingNoisy}
Bai, Y., Yang, E., Han, B., Yang, Y., Li, J., Mao, Y., Niu, G., and Liu, T. (2021).
\newblock Understanding and improving early stopping for learning with noisy labels.
\newblock In Ranzato, M., Beygelzimer, A., Dauphin, Y., Liang, P., and Vaughan, J.~W., editors, {\em Advances in Neural Information Processing Systems}, volume~34, pages 24392--24403. Curran Associates, Inc.

\bibitem[Cohen, 1997]{polyPerceptron}
Cohen, E. (1997).
\newblock Learning noisy perceptrons by a perceptron in polynomial time.
\newblock In {\em Proceedings 38th Annual Symposium on Foundations of Computer Science}, pages 514--523.

\bibitem[Cortinhas, 2022]{muffin_chihuahua_dataset}
Cortinhas, S. (2022).
\newblock Muffin vs chihuahua image classification.
\newblock \url{https://www.kaggle.com/datasets/samuelcortinhas/muffin-vs-chihuahua-image-classification}.
\newblock Accessed: 05/09/2024.

\bibitem[Cubuk et~al., 2019]{cubuk2019autoaugment}
Cubuk, E.~D., Zoph, B., Mane, D., Vasudevan, V., and Le, Q.~V. (2019).
\newblock Autoaugment: Learning augmentation strategies from data.
\newblock In {\em Proceedings of the IEEE/CVF conference on computer vision and pattern recognition}, pages 113--123.

\bibitem[Cubuk et~al., 2020]{cubuk2020randaugment}
Cubuk, E.~D., Zoph, B., Shlens, J., and Le, Q.~V. (2020).
\newblock Randaugment: Practical automated data augmentation with a reduced search space.
\newblock In {\em Proceedings of the IEEE/CVF conference on computer vision and pattern recognition workshops}, pages 702--703.

\bibitem[Dempster et~al., 1977]{dempster1977maximum}
Dempster, A.~P., Laird, N.~M., and Rubin, D.~B. (1977).
\newblock Maximum likelihood from incomplete data via the em algorithm.
\newblock {\em Journal of the royal statistical society: series B (methodological)}, 39(1):1--22.

\bibitem[Englesson and Azizpour, 2021]{gjs}
Englesson, E. and Azizpour, H. (2021).
\newblock {Generalized Jensen-Shannon divergence loss for learning with noisy labels}.
\newblock {\em Advances in Neural Information Processing Systems}, 34:30284--30297.

\bibitem[Feng et~al., 2021]{taylorce}
Feng, L., Shu, S., Lin, Z., Lv, F., Li, L., and An, B. (2021).
\newblock Can cross entropy loss be robust to label noise?
\newblock In {\em Proceedings of the twenty-ninth international conference on international joint conferences on artificial intelligence}, pages 2206--2212.

\bibitem[Ghosh and Kumar, 2017]{robust_theory}
Ghosh, A. and Kumar, H. (2017).
\newblock Robust loss functions under label noise for deep neural networks.
\newblock In {\em Proceedings of the AAAI conference on artificial intelligence}, volume~31.

\bibitem[Ghosh et~al., 2015]{ghosh2015making}
Ghosh, A., Manwani, N., and Sastry, P. (2015).
\newblock Making risk minimization tolerant to label noise.
\newblock {\em Neurocomputing}, 160:93--107.

\bibitem[Goldberger and Ben-Reuven, 2022]{noiseAdapt}
Goldberger, J. and Ben-Reuven, E. (2022).
\newblock Training deep neural-networks using a noise adaptation layer.
\newblock In {\em Proceedings of the International Conference on Learning Representations}. ICLR.

\bibitem[Gouk et~al., 2021]{gouk2021regularisation}
Gouk, H., Frank, E., Pfahringer, B., and Cree, M.~J. (2021).
\newblock Regularisation of neural networks by enforcing lipschitz continuity.
\newblock {\em Machine Learning}, 110:393--416.

\bibitem[Han et~al., 2018]{coteaching}
Han, B., Yao, Q., Yu, X., Niu, G., Xu, M., Hu, W., Tsang, I., and Sugiyama, M. (2018).
\newblock Co-teaching: Robust training of deep neural networks with extremely noisy labels.
\newblock {\em Advances in neural information processing systems}, 31.

\bibitem[Harutyunyan et~al., 2020]{pmlr-v119-harutyunyan20a}
Harutyunyan, H., Reing, K., Steeg, G.~V., and Galstyan, A. (2020).
\newblock Improving generalization by controlling label-noise information in neural network weights.
\newblock In III, H.~D. and Singh, A., editors, {\em Proceedings of the 37th International Conference on Machine Learning}, volume 119 of {\em Proceedings of Machine Learning Research}, pages 4071--4081. PMLR.

\bibitem[Hastie et~al., 2009]{hastie2009elements}
Hastie, T., Tibshirani, R., and Friedman, J. (2009).
\newblock {\em The Elements of Statistical Learning: Data Mining, Inference, and Prediction}.
\newblock Springer Series in Statistics, New York, NY, USA, 2 edition.
\newblock Chapter 13 discusses the notion of dominant classifiers.

\bibitem[Hendrycks et~al., 2018]{gold}
Hendrycks, D., Mazeika, M., Wilson, D., and Gimpel, K. (2018).
\newblock Using trusted data to train deep networks on labels corrupted by severe noise.
\newblock {\em Advances in neural information processing systems}, 31.

\bibitem[Ishida et~al., 2020]{flooding}
Ishida, T., Yamane, I., Sakai, T., Niu, G., and Sugiyama, M. (2020).
\newblock Do we need zero training loss after achieving zero training error?
\newblock In {\em International Conference on Machine Learning}, pages 4604--4614. PMLR.

\bibitem[Janocha and Czarnecki, 2016]{janocha2017loss}
Janocha, K. and Czarnecki, W.~M. (2016).
\newblock On loss functions for deep neural networks in classification.
\newblock {\em Schedae Informaticae}, 25:49.

\bibitem[Janocha and Czarnecki, 2017]{L2}
Janocha, K. and Czarnecki, W.~M. (2017).
\newblock On loss functions for deep neural networks in classification.
\newblock {\em arXiv preprint arXiv:1702.05659}.

\bibitem[Jiang et~al., 2018]{mentornet}
Jiang, L., Zhou, Z., Leung, T., Li, L.-J., and Fei-Fei, L. (2018).
\newblock Mentornet: Learning data-driven curriculum for very deep neural networks on corrupted labels.
\newblock In {\em International conference on machine learning}, pages 2304--2313. PMLR.

\bibitem[Kim et~al., 2024]{kim2024learning}
Kim, H., Chang, H.~S., Cho, K., Lee, J., and Han, B. (2024).
\newblock Learning with noisy labels: Interconnection of two expectation-maximizations.
\newblock {\em arXiv preprint arXiv:2401.04390}.

\bibitem[Li et~al., 2020a]{dividemix}
Li, J., Socher, R., and Hoi, S.~C. (2020a).
\newblock Dividemix: Learning with noisy labels as semi-supervised learning.
\newblock {\em arXiv preprint arXiv:2002.07394}.

\bibitem[Li et~al., 2020b]{labelnoiseES}
Li, M., Soltanolkotabi, M., and Oymak, S. (2020b).
\newblock Gradient descent with early stopping is provably robust to label noise for overparameterized neural networks.
\newblock In {\em International conference on artificial intelligence and statistics}, pages 4313--4324. PMLR.

\bibitem[Li et~al., 2021]{anchor}
Li, X., Liu, T., Han, B., Niu, G., and Sugiyama, M. (2021).
\newblock Provably end-to-end label-noise learning without anchor points.
\newblock In {\em International conference on machine learning}, pages 6403--6413. PMLR.

\bibitem[Li et~al., 2023]{meta_dynamic}
Li, X.-C., Xia, X., Zhu, F., Liu, T., yao Zhang, X., and lin Liu, C. (2023).
\newblock Dynamic loss for learning with label noise.

\bibitem[Liu et~al., 2020]{elr}
Liu, S., Niles-Weed, J., Razavian, N., and Fernandez-Granda, C. (2020).
\newblock Early-learning regularization prevents memorization of noisy labels.
\newblock {\em Advances in neural information processing systems}, 33:20331--20342.

\bibitem[Liu and Guo, 2020]{peer}
Liu, Y. and Guo, H. (2020).
\newblock Peer loss functions: Learning from noisy labels without knowing noise rates.
\newblock In III, H.~D. and Singh, A., editors, {\em Proceedings of the 37th International Conference on Machine Learning}, volume 119 of {\em Proceedings of Machine Learning Research}, pages 6226--6236. PMLR.

\bibitem[Ma et~al., 2020]{normalised_losses}
Ma, X., Huang, H., Wang, Y., Erfani, S. R.~S., and Bailey, J. (2020).
\newblock Normalized loss functions for deep learning with noisy labels.
\newblock In {\em Proceedings of the 37th International Conference on Machine Learning}, ICML'20. JMLR.org.

\bibitem[Mahsereci et~al., 2017]{early_stop_without_val}
Mahsereci, M., Balles, L., Lassner, C., and Hennig, P. (2017).
\newblock Early stopping without a validation set.
\newblock {\em arXiv preprint arXiv:1703.09580}.

\bibitem[Malach and Shalev-Shwartz, 2017]{decoupling}
Malach, E. and Shalev-Shwartz, S. (2017).
\newblock Decoupling" when to update" from" how to update".
\newblock {\em Advances in neural information processing systems}, 30.

\bibitem[Manwani and Sastry, 2013]{robust_theory_earlier_binary}
Manwani, N. and Sastry, P.~S. (2013).
\newblock Noise tolerance under risk minimization.
\newblock {\em IEEE Transactions on Cybernetics}, 43(3):1146--1151.

\bibitem[Menon et~al., 2020]{gradientClip}
Menon, A.~K., Rawat, A.~S., Reddi, S.~J., and Kumar, S. (2020).
\newblock Can gradient clipping mitigate label noise?
\newblock In {\em International Conference on Learning Representations}.

\bibitem[Mnih and Hinton, 2012]{mnihHinton}
Mnih, V. and Hinton, G.~E. (2012).
\newblock Learning to label aerial images from noisy data.
\newblock In {\em Proceedings of the 29th International conference on machine learning (ICML-12)}, pages 567--574.

\bibitem[Nakkiran et~al., 2021]{nakkiran2021deep_epochwise_orig}
Nakkiran, P., Kaplun, G., Bansal, Y., Yang, T., Barak, B., and Sutskever, I. (2021).
\newblock Deep double descent: Where bigger models and more data hurt.
\newblock {\em Journal of Statistical Mechanics: Theory and Experiment}, 2021(12):124003.

\bibitem[Natarajan et~al., 2013]{natarajan2013learning}
Natarajan, N., Dhillon, I.~S., Ravikumar, P.~K., and Tewari, A. (2013).
\newblock Learning with noisy labels.
\newblock {\em Advances in neural information processing systems}, 26.

\bibitem[Nguyen et~al., 2019]{self_ensemble}
Nguyen, D.~T., Mummadi, C.~K., Ngo, T. P.~N., Nguyen, T. H.~P., Beggel, L., and Brox, T. (2019).
\newblock Self: Learning to filter noisy labels with self-ensembling.
\newblock {\em arXiv preprint arXiv:1910.01842}.

\bibitem[Nishi et~al., 2021]{AugmentNLN}
Nishi, K., Ding, Y., Rich, A., and Hollerer, T. (2021).
\newblock Augmentation strategies for learning with noisy labels.
\newblock In {\em Proceedings of the IEEE/CVF Conference on Computer Vision and Pattern Recognition (CVPR)}, pages 8022--8031.

\bibitem[Patrini et~al., 2017]{fprop}
Patrini, G., Rozza, A., Krishna~Menon, A., Nock, R., and Qu, L. (2017).
\newblock Making deep neural networks robust to label noise: A loss correction approach.
\newblock In {\em Proceedings of the IEEE conference on computer vision and pattern recognition}, pages 1944--1952.

\bibitem[Pereira et~al., 2022]{noisyAugmentStudy}
Pereira, E., Carneiro, G., and Cordeiro, F.~R. (2022).
\newblock A study on the impact of data augmentation for training convolutional neural networks in the presence of noisy labels.
\newblock In {\em 2022 35th SIBGRAPI Conference on Graphics, Patterns and Images (SIBGRAPI)}, volume~1, pages 25--30.

\bibitem[Prechelt, 2002]{prechelt2002early}
Prechelt, L. (2002).
\newblock Early stopping-but when?
\newblock In {\em Neural Networks: Tricks of the trade}, pages 55--69. Springer.

\bibitem[Reed et~al., 2014]{bootstrap}
Reed, S., Lee, H., Anguelov, D., Szegedy, C., Erhan, D., and Rabinovich, A. (2014).
\newblock Training deep neural networks on noisy labels with bootstrapping.
\newblock {\em arXiv preprint arXiv:1412.6596}.

\bibitem[Ren et~al., 2018]{meta_gradient}
Ren, M., Zeng, W., Yang, B., and Urtasun, R. (2018).
\newblock Learning to reweight examples for robust deep learning.
\newblock In {\em International conference on machine learning}, pages 4334--4343. PMLR.

\bibitem[Rice et~al., 2020]{ES_AdvRobust}
Rice, L., Wong, E., and Kolter, Z. (2020).
\newblock Overfitting in adversarially robust deep learning.
\newblock In III, H.~D. and Singh, A., editors, {\em Proceedings of the 37th International Conference on Machine Learning}, volume 119 of {\em Proceedings of Machine Learning Research}, pages 8093--8104. PMLR.

\bibitem[Sachdeva et~al., 2021]{EvidenceMix}
Sachdeva, R., Cordeiro, F.~R., Belagiannis, V., Reid, I., and Carneiro, G. (2021).
\newblock Evidentialmix: Learning with combined open-set and closed-set noisy labels.
\newblock In {\em Proceedings of the IEEE/CVF Winter Conference on Applications of Computer Vision (WACV)}, pages 3607--3615.

\bibitem[Song et~al., 2019]{prestopping}
Song, H., Kim, M., Park, D., and Lee, J. (2019).
\newblock How does early stopping help generalization against label noise? arxiv 2019.
\newblock {\em arXiv preprint arXiv:1911.08059}.

\bibitem[Song et~al., 2023]{surveyDeep}
Song, H., Kim, M., Park, D., Shin, Y., and Lee, J.-G. (2023).
\newblock Learning from noisy labels with deep neural networks: A survey.
\newblock {\em IEEE Transactions on Neural Networks and Learning Systems}, 34(11):8135--8153.

\bibitem[Srivastava et~al., 2014]{dropout}
Srivastava, N., Hinton, G., Krizhevsky, A., Sutskever, I., and Salakhutdinov, R. (2014).
\newblock Dropout: a simple way to prevent neural networks from overfitting.
\newblock {\em The journal of machine learning research}, 15(1):1929--1958.

\bibitem[Stempfel and Ralaivola, 2009]{old_backward}
Stempfel, G. and Ralaivola, L. (2009).
\newblock Learning svms from sloppily labeled data.
\newblock In {\em Artificial Neural Networks--ICANN 2009: 19th International Conference, Limassol, Cyprus, September 14-17, 2009, Proceedings, Part I 19}, pages 884--893. Springer.

\bibitem[Stephenson and Lee, 2021]{epochwise}
Stephenson, C. and Lee, T. (2021).
\newblock When and how epochwise double descent happens.
\newblock {\em arXiv preprint arXiv:2108.12006}.

\bibitem[Sukhbaatar et~al., 2015]{fprop_old}
Sukhbaatar, S., Bruna, J., Paluri, M., Bourdev, L., and Fergus, R. (2015).
\newblock Training convolutional networks with noisy labels.
\newblock In {\em 3rd International Conference on Learning Representations, ICLR 2015}.

\bibitem[Sukhbaatar and Fergus, 2014]{noiseAdapt2}
Sukhbaatar, S. and Fergus, R. (2014).
\newblock Learning from noisy labels with deep neural networks.
\newblock {\em arXiv preprint arXiv:1406.2080}, 2(3):4.

\bibitem[Van~Rooyen et~al., 2015]{unhinged}
Van~Rooyen, B., Menon, A., and Williamson, R.~C. (2015).
\newblock Learning with symmetric label noise: The importance of being unhinged.
\newblock {\em Advances in neural information processing systems}, 28.

\bibitem[Wang et~al., 2021]{group_noise}
Wang, Q., Yao, J., Gong, C., Liu, T., Gong, M., Yang, H., and Han, B. (2021).
\newblock Learning with group noise.
\newblock In {\em Proceedings of the AAAI Conference on Artificial Intelligence}, volume~35, pages 10192--10200.

\bibitem[Wang et~al., 2019a]{imae}
Wang, X., Hua, Y., Kodirov, E., Clifton, D.~A., and Robertson, N.~M. (2019a).
\newblock Imae for noise-robust learning: Mean absolute error does not treat examples equally and gradient magnitude's variance matters.
\newblock {\em arXiv preprint arXiv:1903.12141}.

\bibitem[Wang et~al., 2019b]{sce}
Wang, Y., Ma, X., Chen, Z., Luo, Y., Yi, J., and Bailey, J. (2019b).
\newblock Symmetric cross entropy for robust learning with noisy labels.
\newblock In {\em 2019 IEEE/CVF International Conference on Computer Vision (ICCV)}, pages 322--330, Los Alamitos, CA, USA. IEEE Computer Society.

\bibitem[Wei et~al., 2021]{wei2021open}
Wei, H., Tao, L., Xie, R., and An, B. (2021).
\newblock Open-set label noise can improve robustness against inherent label noise.
\newblock {\em Advances in Neural Information Processing Systems}, 34:7978--7992.

\bibitem[Wolpert and Macready, 1997]{wolpert1997noFreeLunch}
Wolpert, D. and Macready, W. (1997).
\newblock No free lunch theorems for optimization.
\newblock {\em IEEE Transactions on Evolutionary Computation}, 1(1):67--82.

\bibitem[Xia et~al., 2021]{robust_early_learning}
Xia, X., Liu, T., Han, B., Gong, C., Wang, N., Ge, Z., and Chang, Y. (2021).
\newblock Robust early-learning: hindering the memorization of noisy labels.
\newblock In {\em International Conference on Learning Representations 2021}, pages 1--15. International Conference on Learning Representations (ICLR).

\bibitem[Xu et~al., 2019]{xu2019l_dmi}
Xu, Y., Cao, P., Kong, Y., and Wang, Y. (2019).
\newblock L\_dmi: A novel information-theoretic loss function for training deep nets robust to label noise.
\newblock {\em Advances in neural information processing systems}, 32.

\bibitem[Zhang et~al., 2017]{mixup}
Zhang, H., Cisse, M., Dauphin, Y.~N., and Lopez-Paz, D. (2017).
\newblock mixup: Beyond empirical risk minimization.
\newblock {\em arXiv e-prints}, pages arXiv--1710.

\bibitem[Zhang and Sabuncu, 2018]{GCE_Loss}
Zhang, Z. and Sabuncu, M. (2018).
\newblock Generalized cross entropy loss for training deep neural networks with noisy labels.
\newblock {\em Advances in neural information processing systems}, 31.

\bibitem[Zhou et~al., 2020]{curiculum}
Zhou, T., Wang, S., and Bilmes, J. (2020).
\newblock Robust curriculum learning: from clean label detection to noisy label self-correction.
\newblock In {\em International Conference on Learning Representations}.

\bibitem[Zhou et~al., 2021]{AsymmetricLoss}
Zhou, X., Liu, X., Jiang, J., Gao, X., and Ji, X. (2021).
\newblock Asymmetric loss functions for learning with noisy labels.
\newblock In Meila, M. and Zhang, T., editors, {\em Proceedings of the 38th International Conference on Machine Learning}, volume 139 of {\em Proceedings of Machine Learning Research}, pages 12846--12856. PMLR.

\end{thebibliography}


\newpage
\appendix
\onecolumn

\section{Notation and Terminology}
\begin{longtable}{>{\raggedright\arraybackslash}p{2cm} p{12cm}}
\caption{Notation Table: Table summarising the notation used in this study.}\label{ch1:tab:notation} \\
\hline
\textbf{Symbol} & \textbf{Description} \\
\hline
\endfirsthead

\hline
\textbf{Symbol} & \textbf{Description} \\
\hline
\endhead

\hline
\endfoot

\hline
\endlastfoot

$c$ & Number of classes/labels. \\
$\mathcal{X}$ & The data domain, a subset of $\mathbb{R}^d$. \\
$\mathcal{Y}$ & The label space, defined as ${1, 2, 3, \ldots, c}$. \\
$\Delta$ & Probability simplex: The set of vectors $(p_1, p_2, \ldots, p_c)$ where each $p_i \geq 0$ and $\sum p_i = 1$. \\
$\bm{q}$ & A probability vector representing a forecast. \\
$\bm{p}$ & A probability vector representing ground-truth probabilities. \\
$\bm{q}:\mathcal{X}\rightarrow \Delta$ & A probability estimator model producing a forecast at each point in $\mathcal{X}$.\\
$\bm{p}(y\mid x)$ & The vector representing the class posterior probabilities at $x$, expressed as $\bm{p}(y\mid x) = (p(y=1\mid x), p(y=2\mid x), \ldots, p(y=c\mid x))$. \\
$f$ & A classifier function mapping each point in $\mathcal{X}$ to a label in $\mathcal{Y}$. \\
$L$ & The loss function used to evaluate the accuracy of predictions against actual labels. \\
$\bm{L}$ & The vector-valued function of the loss function $L$, where $\bm{L}(\bm{q}) = (L(\bm{q},1), \ldots, L(\bm{q},c))$. \\
$R_{L}(\bm{q})$ & The $L$-risk of an estimator $\bm{q}$. \\
$R_{L}(\bm{q})(x)$ & The \emph{pointwise} $L$-risk of an estimator $\bm{q}$ at $x$. \\
$R^{\eta}_{L}(\bm{q})$ & The noisy $L$-risk of an estimator $\bm{q}$. \\
$\mathcal{H}$ or $\mathcal{H}_{L}$ & The entropy function corresponding to the loss function $L$. \\
$H_{L}(\bm{p},\bm{q})$ & The expected $L$-loss for a forecast $\bm{q}$ given the true label distribution $\bm{p}$. \\
$\eta$ & The noise rate of the label noise model. \\
$y, \widetilde{y}$ & The actual label and the noisy label, respectively. \\
$p(x,y)$ & The joint distribution of data and labels. \\
$\widetilde{p}(x,y)$ & The joint distribution of data and labels after corruption by label noise. \\
$p(\widetilde{y}\mid y, x)$ & The noise model generating noisy labels $\widetilde{y}$ from clean labels $y$ given $x$. \\
$T$ & The label noise transition matrix describing the probabilities of transforming a true label into a noisy label. \\
$\bm{e}_k$ & The standard basis vector in $\mathbb{R}^c$ where only the $k$\textsuperscript{th} element is 1, and all others are 0.
\end{longtable}

\subsection{Label Noise Taxonomy}\label{sec:label_noise_tax}
When proving results about label-noise robust algorithms one typically needs to make some assumptions about the properties of the label noise. This necessitates a label noise taxonomy. Below we provide a summary of some of the common ways in which label noise is categorised. 

\paragraph*{Uniform Label Noise} Label noise is classified as \emph{uniform} or \emph{class-conditional} when $T(x)$ is constant across $\mathcal{X}$, denoted as $T$. Equivalently; $p(\widetilde{y}\mid y,x)=p(\widetilde{y}\mid y)$.

\paragraph*{Symmetric/Asymmetric Label Noise} Label noise is called \emph{symmetric} if all off-diagonal elements of $T$ are equal, indicating uniform mislabeling across classes. The transition matrix for symmetric label noise in the case of three classes ($c=3$) is shown in Equation~\ref{eqn:sym_noise}. Conversely, \emph{asymmetric} noise occurs when mislabeling probabilities vary among classes.

\begin{align}\label{eqn:sym_noise}
    \underbrace{\begin{pmatrix}
        1-\eta & \frac{\eta}{2} & \frac{\eta}{2}\\
        \frac{\eta}{2} & 1-\eta &  \frac{\eta}{2}\\
        \frac{\eta}{2} & \frac{\eta}{2} & 1-\eta 
    \end{pmatrix}}_{\text{Symmetric Label Noise}}
\end{align}

\paragraph*{Pairwise Label Noise:} An important type of asymmetric label noise is pairwise label noise. Given a classification task with $c$ classes and a transition matrix $T$, if for any class $i$, there exists at most one class $j \neq i$ with $T_{ij}=\eta$ and at most one class $k \neq i$ with $T_{ki}=\eta$, the noise is \emph{pairwise}. This mislabelling occurs between specific class pairs.

\paragraph*{Circular Label Noise:} A special case of pairwise noise occurs when, with probability $1-\eta$ labels remain uncorrupted and with probability a label transitions to the next class $j\mapsto j+1$, with the final class wrapping back around $c\mapsto 1$. This leads to transition matrix $T$ with a structure so that $T_{ii}=1-\eta$ and $T_{i,i-1}=\eta$ and $T_{1,c}=\eta$ (See Equation~\ref{eqn:circ}). We say that label noise of this type is \emph{circular}.

\begin{equation}\label{eqn:circ}
T = 
\underbrace{\begin{pmatrix}
1-\eta & 0 & \cdots & 0 & \eta \\
\eta & 1-\eta & \cdots & 0 & 0 \\
0 & \eta & \ddots & \vdots & \vdots \\
\vdots & \vdots & \ddots & 1-\eta & 0 \\
0 & 0 & \cdots & \eta & 1-\eta
\end{pmatrix}}_{\text{Circular Label Noise}}
\end{equation}

 \paragraph*{Diagonally Dominant} We say that $T$ is \emph{diagonally dominant} (DD) if for each $i$, the diagonal entry is greater than any other entry in its column $T_{ii} > \max_{j\neq i}T_{ji}$\footnote{As mentioned in \citep{anchor} this definition, commonly used in the context of noisy labels \citep{self_ensemble}, differs from the definition of DD from linear algebra.}. 

\paragraph*{Remark:} Our definition of diagonal dominance (DD) may initially seem to deviate from those presented in \citep{anchor} and \citet{xu2019l_dmi}. However, these references employ the convention where \( T_{ij} \) represents the probability of transitioning from \( y=i \) to \( \widetilde{y}=j \), which contrasts with our usage where \( T_{ij} \) indicates the probability of transitioning from \( y=j \) to \( \widetilde{y}=i \). This alternate convention simplifies the linear algebra involved. When this difference in conventions is considered, our definition aligns with those found in the literature.

\section{Further Related Work}\label{ch7:sec:further_related_work}
The following section builds upon Section~\ref{ch7:sec:related_work} by summarising additional methods for learning in the presence of noisy labels.

\subsection{Robust Loss Functions}
In Section~\ref{ch7:sec:related_work} we discussed robust loss functions, briefly mentioning Generalised Cross-Entropy (GCE) \citep{GCE_Loss} and forward-corrected loss functions. A great many robust loss functions have been proposed for learning in the presence of noisy labels. Beyond GCE, Symmetric Cross-Entropy \citep{sce} and Taylor Cross-Entropy \citep{taylorce} offer alternative methods for interpolating between CE, which tends to overfit, and MAE, which tends to underfit. In addition to forward-corrections, there are `backward-corrections' \citep{old_backward, fprop}, which correct the learning objective by denoising the labels using the estimated transition matrix. `Noise-Tolerant' loss functions \citep{robust_theory, robust_theory_earlier_binary, unhinged, ghosh2015making} seek loss functions that are innately robust, without the need for correction. `Normalised Loss Functions' \citep{normalised_losses} achieve robustness by normalising the loss functions. Peer Losses \citep{peer} introduce additional `peer' loss terms computed on noisy data to de-noise the loss objective. Other approaches regularise the loss objective to enhance robustness \citep{elr, flooding, gjs}, among others.

\subsection{Regularisation}
Neural network classifiers are prone to overfitting on noisy data, leading to the development of various regularisation strategies. \citet{memorisation} examines dropout \citep{dropout}, highlighting its partial effectiveness in mitigating overfitting. Data augmentation techniques, such as introducing feature noise during training, have shown promise in enhancing generalisation with noisy labels. For example, \citet{mixup} presents `MixUp,' which linearly combines data points and their labels. Approaches like `AutoAugment' \citep{cubuk2019autoaugment} and `RandAugment' \citep{cubuk2020randaugment} use randomised transformations to augment images. \citet{AugmentNLN} introduces `Augmented Descent' (AUGDESC) within the `DivideMix \citep{dividemix} framework, applying mild data transformations for loss modeling and stronger ones to improve generalisation. Although many of these methods were not initially intended for training with label noise, they have proven effective in this context \citep{noisyAugmentStudy}. Additionally, \citet{wei2021open} enhances robustness to intrinsic label noise by incorporating open-set noisy examples during training. Other regularisation strategies focus on model weights, such as weight decay \citep{surveyDeep}, gradient clipping \citep{gradientClip}, and Lipschitz constraints to limit model complexity, as discussed by \citet{pmlr-v119-harutyunyan20a} and \citet{gouk2021regularisation}.

\subsection{Architecture}
 Several methods address label noise by modifying architectures to improve robustness, often through adding a `noise adaption' layer. \citet{noiseAdapt2} introduces two approaches: a `bottom-up' method, which modifies model outputs using the estimated noise model, and a top-down' method, which reweights loss terms (though this is not an architectural change). The bottom-up approach estimates the noise transition matrix and attaches it as a final layer after the softmax. Similarly, \citet{noiseAdapt} adds a noise-adaption layer after the softmax, first training the network without the layer, then training both concurrently, initializing the layer with the confusion matrix. The `$s$-model' handles class-conditional noise, while the `$c$-model' accounts for instance-dependent noise. \citet{fprop_old} also uses a noise adaption layer but directly parameterises the transition matrix, applying trace regularisation to avoid it becoming too close to the identity. Earlier, \citet{mnihHinton} used a similar approach in the binary label setting, learning the noise layer via the EM algorithm \citep{dempster1977maximum}. Many of these methods relate closely to forward corrections, which we discussed in Section~\ref{ch7:sec:related_work}.

\subsection{Multi-Network Approaches}
 Numerous deep-learning methods for handling label noise are complex, often requiring multiple networks and stages in their pipelines \citep{dividemix, coteaching, mentornet, decoupling, EvidenceMix}. \citet{EvidenceMix} introduces EvidenceMix,' where two networks are initially trained for a few epochs on a dataset with both closed-set and open-set label noise. A Gaussian mixture model (GMM) then identifies and excludes open-set samples based on loss values. The networks subsequently continue training using semi-supervised learning (SSL) techniques on the remaining data. Other dual-network approaches operate similarly. For instance, `Co-teaching' \citep{coteaching} involves each network being trained on the lowest loss outputs of the other. `Decoupling' \citep{decoupling} updates the networks based on their disagreements. `MentorNet' \citep{mentornet} employs a teacher network to guide a student network by re-weighting samples that are likely correct. In `DivideMix' \citep{dividemix}, a Gaussian mixture model selects clean samples based on their loss values, which are then used to train the other network.

\section{Theoretical Proofs}\label{ch7:sec:proofs}
\subsection{Fact 3 - Bayes-optimality}\label{ch3:sec:fact3}
In this section we give formal statements of the facts listed in Section~\ref{ch7:sec:theoretical} along with proofs. We begin by demonstrating the veracity of Fact 3.

\begin{theorem}\label{ch3:thm:early_stopping_class_pres}
Let $p(x,y)$ be a data-label distribution and suppose that $\widetilde{p}(x,y)$ is a noisy version corrupted by class-preserving label noise. A probability estimator $\bm{q}^{*}$ is Bayes-optimal for the noisy distribution if and only if it is Bayes-optimal for the clean distribution. Equivalently;
\begin{align*}
    \bm{q}^* \in \argmin_{q}R_{0\text{-}1}(q) \iff \bm{q}^* \in \argmin_{q}R^{\eta}_{0\text{-}1}(q)
\end{align*}
\end{theorem}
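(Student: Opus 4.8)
The plan is to exploit the fact that the $0\text{-}1$ risk is minimised \emph{pointwise}, so that Bayes-optimality is an entirely local condition at each $x$. First I would write the risk as an average of a pointwise risk over the data marginal,
\[
R_{0\text{-}1}(\bm{q}) = \mathbb{E}_{x\sim p(x)}\big[\,r(\bm{q};x)\,\big], \qquad r(\bm{q};x) \coloneqq 1 - p\big(y = \textstyle\argmax_i q_i(x)\,\big|\, x\big),
\]
using that the expected $0\text{-}1$ loss at a point depends on $\bm{q}$ only through its predicted label $\argmax_i q_i(x)$. The key structural observation is that class-preserving label noise, being specified by a conditional $p(\widetilde{y}\mid x,y)$, leaves the data marginal unchanged ($\widetilde{p}(x)=p(x)$); hence the clean and noisy risks are averages of their respective pointwise risks against the \emph{same} measure, differing only in which class-posterior ($p(y\mid x)$ versus $\widetilde{p}(\widetilde{y}\mid x)$) appears inside $r$.

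Next I would establish the standard reduction from global to pointwise optimality: since $r(\bm{q};x)$ is bounded below by $1 - \max_i p(y=i\mid x)$ and the predicted label $\argmax_i q_i(x)$ can be chosen independently at distinct points $x$, an estimator minimises $R_{0\text{-}1}$ over all estimators if and only if it attains the pointwise minimum for $p(x)$-almost every $x$, i.e.\ if and only if $\argmax_i q^*_i(x) \in \argmax_i p(y=i\mid x)$ a.e. The identical argument applied to the noisy distribution shows that $\bm{q}^*$ is noisy-Bayes-optimal if and only if $\argmax_i q^*_i(x) \in \argmax_i \widetilde{p}(\widetilde{y}=i\mid x)$ a.e.

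Finally I would invoke Definition~\ref{ch3:def:conserveNoise} directly: class-preservation states exactly that $\argmax_i \widetilde{p}(\widetilde{y}=i\mid x) = \argmax_i p(y=i\mid x)$ for every $x$, so the two pointwise characterisations coincide and the two notions of Bayes-optimality are equivalent, proving both directions of the iff simultaneously. The main obstacle I anticipate is not the conceptual flow but the careful handling of edge cases: ties in the $\argmax$ (where the dominant class is non-unique and $\argmax$ must be read as set-valued), and the a.e.\ qualifier needed because an estimator may be altered on a $p$-null set without changing either risk. I would phrase the pointwise condition as a set membership ($\argmax_i q^*_i(x)$ lying in the set of dominant classes) to absorb ties cleanly, noting that the equivalence is sharpest in the regime where the dominant class is unique.
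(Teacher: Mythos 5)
Your proposal is correct and follows essentially the same route as the paper's proof: reduce Bayes-optimality for the $0\text{-}1$ risk to the pointwise condition $\argmax_i q^*_i(x) = \argmax_i p(y=i\mid x)$ (almost everywhere), then invoke Definition~\ref{ch3:def:conserveNoise} to identify the clean and noisy pointwise conditions, giving both directions at once. Your treatment is somewhat more careful than the paper's one-paragraph argument---making explicit that the data marginal is unchanged by label noise, and handling ties and null sets via a set-valued $\argmax$---but the underlying idea is identical.
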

\begin{proof}
This follows immediately from the definition of class-preserving, indeed, the definition of class-preserving is constructed precisely as the weakest noise condition for which this theorem holds. Specifically, $\bm{q}^*$ is a global minimiser of the clean risk if and only if, for every $x\in \supp(p(x))$, 
   \begin{align*}
      \argmax_{i\in\{1,2,\ldots c\} }\bm{q}^*_i(x) &= \argmax_{i\in\{1,2,\ldots c\} } p(y=i\mid x) 
    \end{align*}By the definition of class-preserving the right-hand side is equal to $\argmax_{i\in\{1,2,\ldots c\} } \widetilde{p}(\widetilde{y}=i\mid x)$ and hence 
    \begin{align*}
           \argmax_{i\in\{1,2,\ldots c\} }\bm{q}^*_i(x) &= \argmax_{i\in\{1,2,\ldots c\} } \widetilde{p}(\widetilde{y}=i\mid x)
    \end{align*}
    meaning that $\bm{q}*$ is a global minimiser of the noisy risk. 
\end{proof}

\subsection{Facts 1 and 4}\label{ch7:sec:fact1,4}
We establish the following Lemma relating the noisy and clean risk of an estimator in terms of the covariance between the estimator and the noise model. Facts 1 and 4 (Section~\ref{ch7:sec:theoretical}) then follow as corollaries. 

\begin{lemma}\label{ch7:lemma:risk_inequality}
    Let $\widetilde{p}(x,y)$ be some noisy data-label distribution corrupted by class-preserving (possibly non-uniform) symmetric label noise. Let $\bm{q}$ be an arbitrary probability estimator model and let $f$ be its plug-in classifier. 
    Let 
    \begin{align*}
        g(x) \coloneq p(y=f(x)\mid x)
    \end{align*}
    This function gives the probability of our predicted class appearing at $x$. The expectation of $g(x)$ gives the proportion of labels predicted corrected (i.e. accuracy). Let $\sigma_g$ denote the standard deviation of $g(x)$. Let $\eta(x)$ denote the noise rate at $x\in\mathcal{X}$ and let $\sigma_{\eta}$ denote the standard deviation of $\eta(x)$ with respect to $p(x)$.  The noisy risk and clean risks of $\bm{q}$ are related via 
        \begin{align*}
          R^{\eta}(\bm{q})  = R(\bm{q})\left(1-\frac{c\eta}{c-1}\right) + \eta + \frac{c}{c-1}\text{Cov}(g(x), \eta_x)
    \end{align*}
    which leads to the inequality
    \begin{align*}
         R(\bm{q})\left(1-\frac{c\eta}{c-1}\right)  + \eta - \frac{\sigma_{\eta}\sigma_{g}c}{c-1} &\leq R^{\eta}(\bm{q})  \leq R(\bm{q})\left(1-\frac{c\eta}{c-1}\right) + \eta + \frac{\sigma_{\eta}\sigma_{g}c}{c-1}
    \end{align*}

    Thus, in particular when the noise is uniform $\sigma_{\eta}=0$ one has
    \begin{align*}
        R^{\eta}(\bm{q}) = R(\bm{q})\left(1-\frac{c\eta}{c-1}\right) +\eta
    \end{align*}
\end{lemma}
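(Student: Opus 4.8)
The plan is to reduce everything to a pointwise computation at each $x\in\mathcal{X}$ and then integrate against $p(x)$, exploiting the fact that the $0\text{-}1$ risk decomposes as $R(\bm{q}) = \mathbb{E}_{x\sim p(x)}[R(\bm{q})(x)]$ with pointwise clean risk $R(\bm{q})(x) = p(y\neq f(x)\mid x) = 1-g(x)$, and analogously for the noisy risk. First I would fix $x$ and write down the symmetric transition matrix at that location: the diagonal entries equal $1-\eta(x)$ and every off-diagonal entry equals $\tfrac{\eta(x)}{c-1}$. Applying this to the clean posterior gives the noisy posterior mass on the predicted class,
$$\widetilde{p}(\widetilde{y}=f(x)\mid x) = (1-\eta(x))\,g(x) + \frac{\eta(x)}{c-1}\bigl(1-g(x)\bigr),$$
since mass $\eta(x)$ leaves the true label and is spread uniformly over the other $c-1$ classes. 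Subtracting from one and simplifying (using $g(x)=1-R(\bm{q})(x)$), the pointwise noisy risk collapses to the affine identity
$$R^{\eta}(\bm{q})(x) = R(\bm{q})(x)\left(1-\frac{c\,\eta(x)}{c-1}\right) + \eta(x).$$
Note this step uses only symmetry, not class-preservation; the latter hypothesis is precisely what guarantees the slope $1-\tfrac{c\eta(x)}{c-1}$ stays positive (equivalently $\eta(x)<\tfrac{c-1}{c}$), which is what will later make noisy-risk minimisation track clean-risk minimisation in Facts 1 and 4.

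Next I would take expectations over $p(x)$. Writing $\eta\coloneq\mathbb{E}[\eta(x)]$ for the average noise rate, every term separates cleanly except the cross term $\mathbb{E}[R(\bm{q})(x)\,\eta(x)]$. The key algebraic move is to expand this as $\mathbb{E}[\eta(x)] - \mathbb{E}[g(x)\eta(x)]$ and then apply the covariance decomposition $\mathbb{E}[g(x)\eta(x)] = \text{Cov}(g,\eta) + \mathbb{E}[g]\,\mathbb{E}[\eta]$. Substituting $\mathbb{E}[g] = 1-R(\bm{q})$ and collecting terms yields exactly
$$R^{\eta}(\bm{q}) = R(\bm{q})\left(1-\frac{c\eta}{c-1}\right) + \eta + \frac{c}{c-1}\text{Cov}(g,\eta).$$
The two-sided inequality then follows immediately from Cauchy--Schwarz, $\lvert\text{Cov}(g,\eta)\rvert \le \sigma_g\sigma_\eta$, and the uniform case is the special case $\sigma_\eta=0$ (i.e.\ $\eta(x)\equiv\eta$), where the covariance term vanishes and the exact affine law of Fact 1 is recovered.

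I expect the pointwise derivation to be the conceptual core but essentially routine; the step most prone to error is the expectation-taking, where one must keep the pointwise noise rate $\eta(x)$ distinct from its average $\eta$ and track signs carefully so that the covariance emerges with the correct coefficient $+\tfrac{c}{c-1}$ rather than its negative. A minor but worthwhile point to state explicitly is that the decomposition $R(\bm{q}) = \mathbb{E}_x[1-g(x)]$ depends on the plug-in classifier $f$ being fixed by $\bm{q}$, so that the \emph{same} prediction $f(x)$ appears in both the clean and noisy pointwise risks; this is exactly what lets $g(x)$ enter both expressions identically and makes the cancellation go through.
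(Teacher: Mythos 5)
Your proposal is correct and follows essentially the same route as the paper's proof: both apply the symmetric transition matrix $T(x)$ to the clean posterior to obtain the noisy mass on the predicted class $f(x)$, namely $(1-\eta_x)g(x)+\tfrac{\eta_x}{c-1}(1-g(x))$, then use the decomposition $\mathbb{E}[g(x)\eta_x]=\mathrm{Cov}(g,\eta_x)+\mu_g\mu_\eta$ with $\mu_g=1-R(\bm{q})$ to extract the covariance term with coefficient $\tfrac{c}{c-1}$, and finish with Cauchy--Schwarz. The only (cosmetic) difference is that you state the pointwise affine identity $R^{\eta}(\bm{q})(x)=R(\bm{q})(x)\bigl(1-\tfrac{c\,\eta(x)}{c-1}\bigr)+\eta(x)$ explicitly before integrating, whereas the paper carries the computation inside the expectation throughout; your observation that only symmetry (not class-preservation) is needed for the identity, with class-preservation securing a positive slope, is likewise consistent with how the paper uses the lemma.
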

\begin{proof}

    Let $\bm{q}$ be a probability estimator and let $f:\mathcal{X}\rightarrow \{1,2,\ldots ,c \}$ be the associated plug-in classifier. The clean and noisy $0\text{-}1$ risks of $\bm{q}$ can be expressed as
    \begin{align*}
        R(\bm{q}) &= 1- \mathbb{E}_{x,y\sim p(x,y)}[p(y=f(x)\mid x)]\\
        R^{\eta}(\bm{q}) &= 1- \mathbb{E}_{x,\widetilde{y}\sim \widetilde{p}(x,y)}[p(\widetilde{y}=f(x)\mid x)].
    \end{align*}
    We assume that the label noise is (non-uniform) symmetric label noise meaning that, at each $x$, the $p(\widetilde{y}\mid y)$ may be expressed by a matrix $T(x)$ of the following form
    \begin{align*}
\begin{bmatrix}
1 - \eta_x & \frac{\eta_x}{c-1} & \frac{\eta_x}{c-1} & \cdots & \frac{\eta_x}{c-1} \\
\frac{\eta_x}{c-1} & 1 - \eta_x & \frac{\eta_x}{c-1} & \cdots & \frac{\eta_x}{c-1} \\
\frac{\eta_x}{c-1} & \frac{\eta_x}{c-1} & 1 - \eta_x & \cdots & \frac{\eta_x}{c-1} \\
\vdots & \vdots & \vdots & \ddots & \vdots \\
\frac{\eta_x}{c-1} & \frac{\eta_x}{c-1} & \frac{\eta_x}{c-1} & \cdots & 1 - \eta_x.
\end{bmatrix}
\end{align*}
    This allows us to write
\begin{align}
        \mathbb{E}_{x,\widetilde{y}\sim \widetilde{p}(x,y)}[p(\widetilde{y}=f(x)\mid x)] &= \mathbb{E}_{x,y\sim p(x,y)}[\mathbb{E}_{\widetilde{y}\sim p(\widetilde{y}\mid x,y)}[p(\widetilde{y}=f(x)\mid x)] ] \\
       &= \mathbb{E}_{x \sim p(x)}[\left(T(x)\bm{p}(y\mid x)\right)_{f(x)}] \label{ch3:eqn:symetric_nonuniform_lemma}
\end{align}

For any probability vector $\bm{p}$, the $k$\textsuperscript{th} component of $T(x)\bm{p}$ is equal to 
\begin{align*}
    (1-\eta_x)p_k + \frac{\eta_x}{c-1}\sum_{i\neq k}p_i &= (1-\eta_x)p_k + (1-p_k)\frac{\eta_x}{c-1}\\
    &= p_k\left(1-\eta_x - \frac{\eta_x}{c-1} \right) + \frac{\eta_x}{c-1}
\end{align*}
Thus, using Equation~\ref{ch3:eqn:symetric_nonuniform_lemma}, the noisy risk can be written as 
\begin{align}
    R^{\eta}(\bm{q}) = 1- \mathbb{E}_{x \sim p(x)}\left[p(y=f(x)\mid x)\left(1-\eta_x - \frac{\eta_x}{c-1} \right) + \frac{\eta_x}{c-1}\right]\notag \\
    = R(\bm{q}) - \frac{\eta}{c-1} + \mathbb{E}_{x \sim p(x)}\left[p(y=f(x)\mid x)\left(\frac{c\eta_x}{c-1} \right)\right] \label{ch3:eqn:to_bound}
\end{align}

Using $g(x)\coloneq p(y=f(x)\mid x)$ for brevity and identifying that 
\begin{align*}
    \mathbb{E}_{x \sim p(x)}\left[g(x)\eta_x\right] = \text{Cov}(g(x), \eta_x) +\mu_g \mu_{\eta},
\end{align*}
we can bound the final term of Equation~\ref{ch3:eqn:to_bound}. Hence after rearranging we arrive at our first claim:
\begin{align*}
  R^{\eta}(\bm{q})  = R(\bm{q})\left(1-\frac{c\eta}{c-1}\right) + \eta + \frac{c}{c-1}\text{Cov}(g(x), \eta_x).
\end{align*}

Generally we cannot expect that $g(x)$ will be independent of $\eta_x$. Using Cauchy-Schwarz on the random variables $X\coloneq \eta_x-\eta$, $Y\coloneq g(x)-\mu_g$ one obtains the following bound on the covariance 
\begin{align*}
    \vert \text{Cov}(g(x), \eta_x)\vert \leq \sqrt{\text{Var}(g(x))\text{Var}(\eta_x)} 
\end{align*}
Denoting the standard deviations of $\eta_x$ and $g(x)$ as $\sigma_{\eta}, \sigma_{g}$ respectively we have 
\begin{align*}
    \sigma_{\eta}\sigma_{g} - \mu_{\eta}\mu_{g} \leq  \mathbb{E}_{x \sim p(x)}\left[g(x)\eta_x\right] \leq \sigma_{\eta}\sigma_{g} + \mu_{\eta}\mu_{g}
\end{align*}

Thus we have the following upper and lower bounds on the noise risk
\begin{align*}
   R(\bm{q}) - \frac{\eta}{c-1}+  \frac{c}{c-1}\left( - \sigma_{\eta}\sigma_{g} + \mu_{\eta}\mu_{g}\right) \leq R^{\eta}(\bm{q}) \leq R(\bm{q}) - \frac{\eta}{c-1} + \frac{c}{c-1}\left(\sigma_{\eta}\sigma_{g} + \mu_{\eta}\mu_{g}\right)
\end{align*}

$\mu_g$ is precisely the accuracy of the estimator $\bm{q}$ and thus $R(\bm{q}) = 1-\mu_g$, likewise $\mu_{\eta}$ is the mean noise rate $\eta$. Putting this together we obtain
\begin{align}
     R(\bm{q}) - \frac{\eta}{c-1} + \frac{c\eta}{c-1}(1-R(\bm{q})) - \frac{\sigma_{\eta}\sigma_{g}c}{c-1}  &\leq R^{\eta}(\bm{q}) \leq R(\bm{q}) - \frac{\eta}{c-1} +  \frac{c\eta}{c-1}(1-R(\bm{q})) + \frac{\sigma_{\eta}\sigma_{g}c}{c-1} \notag \\
     R(\bm{q})\left(1-\frac{c\eta}{c-1}\right)  + \eta - \frac{\sigma_{\eta}\sigma_{g}c}{c-1} &\leq R^{\eta}(\bm{q})  \leq R(\bm{q})\left(1-\frac{c\eta}{c-1}\right) + \eta + \frac{\sigma_{\eta}\sigma_{g}c}{c-1} \label{ch3:eqn:full_bound}
\end{align}

Which is precisely our second claim. Finally, note the two special cases:

\textbf{Uniform: }    If $\eta_x = \eta = const.$ then this becomes
\begin{align*}
    R(\bm{q}) - \frac{\eta}{c-1} +  (1-R(\bm{q}) )\left(\frac{c\eta}{c-1} \right) \\
    =R(\bm{q})\left(1-\frac{c\eta}{c-1}\right) +\eta
\end{align*}
    as claimed.

\textbf{Independence:} In the event that $g(x)$ and $\eta_x$ are uncorrelated we have
\begin{align*}
    R^{\eta}(\bm{q}) &=R(\bm{q}) - \frac{\eta}{c-1} +\frac{c}{c-1} \mu_g\mu_{\eta}\\ &= R(\bm{q}) - \frac{\eta}{c-1} +\frac{c}{c-1}(1-R(\bm{q}))\eta\\
    &= R(\bm{q})\left(1-\frac{\eta c}{c-1}\right) + \eta
\end{align*}
\end{proof}

\paragraph*{Fact 1: Symmetric Uniform Noise} Lemma~\ref{ch7:lemma:risk_inequality} establishes Fact 1 in Section~\ref{ch7:sec:theoretical}: that when the label noise is symmetric, uniform and class-preserving ($\eta<\frac{c-1}{c}$) the noisy and clean risk are related by a linear map. This is significant because it means that if $\bm{q}_1,\bm{q}_2$ are two models then $R(\bm{q}_1)\leq R(\bm{q}_2) \iff R^{\eta}(\bm{q}_1)\leq R^{\eta}(\bm{q}_2)$: If one has a set of probability estimators $\{\bm{q}_n\}_{n=1}^N$
 and we select $\bm{q}_k$ which minimises the noisy risk then this will also minimise the clean risk. 

 \paragraph*{Fact 4:} Lemma~\ref{ch7:lemma:risk_inequality} also establishes Fact 4, that there is an affine relationship between the clean and noisy risk of a probability estimator when the noise model $\eta_x$ and the model's accuracy function ($g(x) \coloneq p(y=f(x)\mid x)$ where $f$ is the plug-in classifier of the estimator) are uncorrelated (assuming non-uniform symmetric label noise). Thus, if one selects the minimiser of the noisy risk among a set of estimators $\mathcal{Q}\coloneq \{\bm{q}_n\}_{n=1}^N$, this model necessarily is also a minimiser of the clean risk. 

\subsection{Fact 2}\label{ch3:sec:fact2}
 We have shown in Lemma~\ref{ch7:lemma:risk_inequality} that, when label noise is uniform and symmetric and $\eta<\frac{c-1}{c}$ then for \emph{any} two estimators $\bm{q}_1,\bm{q}_2$ and \emph{any} data-label distribution $p(x,y)$ 
 \begin{align}\label{ch3:eqn:risk_implication}
     R^{\eta}(\bm{q}_1) \leq R^{\eta}(\bm{q}_2) \iff R(\bm{q}_1) \leq R(\bm{q}_2)
 \end{align}
We now endeavour to show the converse, that this only holds for uniform symmetric noise below the specified threshold. 

Assume we have some closed-set label noise model for which Equation~\ref{ch3:eqn:risk_implication} holds for all distributions $p(x,y)$ and estimators $\bm{q}_1, \bm{q}_2$.  Since Equation~\ref{ch3:eqn:risk_implication} holds for all distributions then it holds in particular when we set $p(x)$ to be a Dirac delta distribution at some point $x\in\mathcal{X}$, for all possible conditional distributions $\bm{p}(y\mid x)\in \Delta$. Given two estimators $\bm{q}_1, \bm{q}_2$ we let $k_1\coloneq \argmax_i (\bm{q}_1(x))_i$, $k_2\coloneq \argmax_i (\bm{q}_2(x))_i$ denote their predicted labels at $x$. Then for any pair of predicted labels $k_1,k_2\in\mathcal{Y}$, and for any $\bm{p}(y\mid x)\in \Delta$, Equation~\ref{ch3:eqn:risk_implication} implies
 \begin{align*}
\left( 1-\widetilde{p}(y=k_1\mid x) \leq 1-\widetilde{p}(y=k_2\mid x) \right) & \iff \left( 1-p(y=k_1\mid x) \leq 1-p(y=k_2\mid x) \right).
 \end{align*}
 This can be written equivalently as
\begin{align}\label{ch3:eqn:risk_implication2}
    \left( \widetilde{p}(y=k_2\mid x) \leq \widetilde{p}(y=k_1\mid x) \right) & \iff \left( p(y=k_2\mid x) \leq p(y=k_1\mid x) \right).
\end{align}
Since we are modelling closed-set noise we know that the noise can be modelled by a transition matrix $T$ at $x$. Thus, letting $\bm{p}$ be shorthand for the condition distribution at $x$; $\bm{p}\coloneq \bm{p}(y\mid x)$, Equation~\ref{ch3:eqn:risk_implication2} is equivalent to
\begin{align}\label{ch3:eqn:pointwise_iff}
    (T\bm{p})_{k_2} \leq (T\bm{p})_{k_1}  & \iff  \bm{p}_{k_2} \leq \bm{p}_{k_1} .
\end{align}

Our goal is to demonstrate that, in order for Equation~\ref{ch3:eqn:pointwise_iff} to hold $\forall \bm{p}\in \Delta$ and $\forall k_1, k_2 \in \mathcal{Y}$, that $T$ must describe symmetric label noise. Equation~\ref{ch3:eqn:pointwise_iff} implies that  $(T\bm{p})_{k_2} = (T\bm{p})_{k_1} \iff \bm{p}_{k_2} = \bm{p}_{k_1}$ since $(\bm{p}_{k_2}\leq \bm{p}_{k_1}) \land (\bm{p}_{k_1}\leq \bm{p}_{k_2}) \implies \bm{p}_{k_1}=\bm{p}_{k_2}$. The first major implication of this is that $T$ must be row stochastic as well as columns stochastic - rows sum to one. To see this let $\bm{p}= \left(\frac{1}{c}, \frac{1}{c}, \ldots, \frac{1}{c}\right)$; the uniform distribution over labels. Then $T\bm{p}$ must also be the uniform distribution, thus, the rows of $T$ must all sum to $1$ (since $(T\bm{p})_i = \frac{1}{c} \iff \sum_{j}T_{ij}\frac{1}{c} = \frac{1}{c}\iff \sum_{j}T_{ij} = 1$). 

Now letting $\bm{p}=\bm{e}_k$ (the $k$\textsuperscript{th} coordinate vector), Equation~\ref{ch3:eqn:pointwise_iff} implies that $(T\bm{p})_i = (T\bm{p})_j $ for all $i,j\neq k$. As $T\bm{p}$ is the $k$\textsuperscript{th} column of $T$ then for all $i,j\neq k$, $T_{ik} = T_{jk} $. This allows us to write $T$ as

\begin{align*}
    \begin{bmatrix}
        T_{11} & a_2 & a_3 & \ldots & a_c\\
        a_1 & T_{22} & a_3 & \ldots & a_c\\
        \ldots\\
        a_1 & a_2 & a_3 & \ldots & T_{cc}
    \end{bmatrix}
\end{align*}
Since we know that the matrix is column and row stochastic then, in particular, the $k$\textsuperscript{th} row and column sum to one so
\begin{align*}
    T_{kk} + \sum_{i\neq k} a_i = T_{kk} + (c-1)a_k = 1\\
    \iff \sum_{i=1}^c a_i = c a_k
\end{align*}
We can write this as a system of equations
\begin{align}
    \begin{bmatrix}
        1-c & 1 & 1 & \ldots & 1\\
    1 & 1-c & 1 & \ldots & 1\\
    \ldots\\
    1 & 1 & 1 & \ldots & 1-c
    \end{bmatrix}
    \begin{bmatrix}
        a_1 \\ a_2 \\ \ldots \\ a_c
    \end{bmatrix}
    = \bm{0}
\end{align}
Thus the $\bm{a}$ must lie in the null-space of this matrix which may be computed. We identify that all vectors of the form $(\lambda, \lambda, \ldots, \lambda)$ lie in the nullspace. However, as the rank of the matrix is $c-1$ then by the rank-nullity theorem this is the entire nullspace. 

Hence we know that $a_1=a_2=\ldots = a_c\eqcolon a$ meaning that 
\begin{align*}
    T =     \begin{bmatrix}
        T_{11} & a & a & \ldots & a\\
        a & T_{22} & a & \ldots & a\\
        \ldots\\
        a & a & a & \ldots & T_{cc}
    \end{bmatrix}
\end{align*}
By the condition that the rows and columns sum to one this must be writable as 
\begin{align*}
    T =     \begin{bmatrix}
        1-a(c-1) & a & a & \ldots & a\\
        a & 1-a(c-1) & a & \ldots & a\\
        \ldots\\
        a & a & a & \ldots & 1-a(c-1)
    \end{bmatrix}
\end{align*}
then letting $a \coloneq \frac{\eta}{c-1}$ we have
\begin{align*}
    T =     \begin{bmatrix}
        1-\eta & \frac{a}{c-1} & \frac{\eta}{c-1} & \ldots & \frac{\eta}{c-1}\\
        \frac{\eta}{c-1} & 1-\eta & \frac{\eta}{c-1} & \ldots & \frac{\eta}{c-1}\\
        \ldots\\
        \frac{\eta}{c-1} & \frac{\eta}{c-1} & \frac{\eta}{c-1} & \ldots & 1-\eta
    \end{bmatrix}
\end{align*}
which is precisely the matrix describing symmetric label noise as desired. 

Thus for Equation~\ref{ch3:eqn:risk_implication2} to hold for all estimators and data-label distributions the label noise must be symmetric at every $x$. It remains to show that the noise rate must be uniform. This may be derived using Lemma~\ref{ch7:lemma:risk_inequality} which states that for non-uniform symmetric label noise
        \begin{align*}
          R^{\eta}(\bm{q})  = R(\bm{q})\left(1-\frac{c\eta}{c-1}\right) + \eta + \frac{c}{c-1}\text{Cov}(g(x), \eta_x).
    \end{align*}
Where $g(x)\coloneq p(f(x)\mid x)$ gives the probability of the predicted label at $x$. It follows, that for any two estimators $\bm{q}_1, \bm{q}_2$ 
\begin{align*}
          R^{\eta}(\bm{q}_1) &\leq R^{\eta}(\bm{q}_2) \iff \\
          R(\bm{q}_1) &\leq R(\bm{q}_2) + \frac{1}{\frac{c-1}{c}-\eta}\left(\text{Cov}(g_2(x), \eta_x) - \text{Cov}(g_1(x), \eta_x\right).
\end{align*}
To ensure the difference between the covariances vanishes for all distributions $p(x,y)$ and estimators $\bm{q}_1, \bm{q}_2$ we must have $\eta_x = \eta = const.$ Thus the label noise model must be uniform symmetric noise as claimed. This establishes Fact 2.

\subsection{Fact 5}\label{ch3:sec:worst_case_proofs}
With Fact 2 we saw that for any noise model other than uniform symmetric label noise, the minimiser of the noisy risk may not be a minimiser of the clean risk. In this section, we provide some worst-case bounds. Specifically, if 
\begin{align*}
    \bm{q}^{\eta}_* &\coloneq \argmin_{\bm{q}\in \mathcal{Q}} R^{\eta}(\bm{q})\\
    \bm{q}_* &\coloneq \argmin_{\bm{q}\in \mathcal{Q}} R(\bm{q})
\end{align*}
denote the minimisers of the noisy and clean risks within $\mathcal{Q}$. Then we seek to bound
\begin{align*}
    \vert R(\bm{q}^{\eta}_*) - R(\bm{q}_*)\vert.
\end{align*}
To simplify the mathematics make some assumptions. We assume that the data distribution is separable and we suppose that the every column of the transition matrix is a permutation of every other column.

\begin{theorem}\label{ch7:thm:worst_case_bounds}
Consider a scenario with class-conditional label noise characterised by a transition matrix $T$. Define $\eta_{max}$ as the maximum transition probability, i.e., $\eta_{max} \coloneqq \max_{j\neq i} T_{ij}$, and $\eta_{min}$ as the minimum transition probability, i.e., $\eta_{min} \coloneqq \min_{j\neq i} T_{ij}$. Let $\mathcal{Q} \coloneqq \{\bm{q}\}_{i=1}^N$ be a set of probability estimators. Denote $\bm{q}^{\eta}_*$ as the minimiser of the noisy risk and $\bm{q}_*$ as the minimiser of the clean risk, respectively:
\begin{align*}
\bm{q}^{\eta}_* &\coloneqq \argmin_{\bm{q}\in \mathcal{Q}} R^{\eta}(\bm{q}),\\
\bm{q}_* &\coloneqq \argmin_{\bm{q}\in \mathcal{Q}} R(\bm{q}).
\end{align*}
Denote the clean and noisy risks of these estimators as:
\begin{align*}
R_k \coloneqq R(\bm{q}^{\eta}_*)\quad \text{and}\quad R^{\eta}_* \coloneqq R(\bm{q}^{\eta}_*),\\
R_* \coloneqq R(\bm{q}_*)\quad \text{and}\quad R^{\eta}_l \coloneqq R(\bm{q}_*).
\end{align*}
The difference between the optimal clean risk $R_*$ and the clean risk achieved by $\bm{q}^{\eta}_*$ satisfies the following inequality:
\begin{align*}
|R_k - R_*| \leq \frac{R^{\eta}_* - \eta}{1 - \eta - \eta_{max}} - \frac{R^{\eta}_l - \eta}{1 - \eta - \eta_{min}}.
\end{align*}
Thus, given that $R_*^{\eta}$ is optimal, we have:
\begin{align}\label{ch7:eqn:worst_case_perm_sym}
|R_k - R_*| &\leq (R^{\eta}_* - \eta)\left(\frac{1}{1 - \eta - \eta_{max}} - \frac{1}{1 - \eta - \eta_{min}}\right).
\end{align}
\end{theorem}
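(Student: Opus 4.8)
The plan is to reduce both risks to functions of the single scalar $a \coloneqq R(\bm{q})$ and then invert. By separability the clean posterior $p(y\mid x)$ is a point mass on a true label $y^*(x)$, so for an estimator $\bm{q}$ with plug-in classifier $f$ the clean risk is just the misclassification probability $R(\bm{q}) = \Pr\big(f(x)\neq y^*(x)\big)$, and, using the convention $T_{ij}=p(\widetilde{y}=i\mid y=j)$ (so column $y^*(x)$ of $T$ is the noisy-label law at $x$), the noisy accuracy is $\mathbb{E}_{x}\big[\widetilde{p}(\widetilde{y}=f(x)\mid x)\big]=\mathbb{E}_{x}\big[T_{f(x),\,y^*(x)}\big]$.

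Next I would pin down the diagonal. The class-preserving assumption forces the diagonal of each column to be that column's maximum, while the permutation-of-columns assumption makes every column share the same multiset of entries, hence the same maximum; jointly they give $T_{jj}=1-\eta$ for every (used) $j$, where $1-\eta$ is the common column maximum and $\eta$ the class-independent flip probability. Splitting the accuracy over $\{f(x)=y^*(x)\}$ and $\{f(x)\neq y^*(x)\}$, the correctly-classified mass $1-a$ contributes exactly $(1-\eta)(1-a)$, and the misclassified mass $a$ contributes a term $C$ assembled from off-diagonal entries, so $\eta_{min}\,a\le C\le\eta_{max}\,a$. Hence
\begin{equation*}
R^{\eta}(\bm{q})=\eta+a(1-\eta)-C,\qquad \eta_{min}\,a\le C\le\eta_{max}\,a,
\end{equation*}
and isolating $a$ in the two extreme cases yields the sandwich
\begin{equation*}
\frac{R^{\eta}(\bm{q})-\eta}{1-\eta-\eta_{min}}\;\le\;R(\bm{q})\;\le\;\frac{R^{\eta}(\bm{q})-\eta}{1-\eta-\eta_{max}},
\end{equation*}
both denominators being positive since class-preservation gives $1-\eta=T_{jj}>\eta_{max}\ge\eta_{min}$.

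I would then instantiate the sandwich at the two minimisers: the upper bound at $\bm{q}^{\eta}_*$ controls $R_k$ from above by $(R^{\eta}_*-\eta)/(1-\eta-\eta_{max})$, and the lower bound at $\bm{q}_*$ controls $R_*$ from below by $(R^{\eta}_l-\eta)/(1-\eta-\eta_{min})$, where $R^{\eta}_*,R^{\eta}_l$ denote the noisy risks of $\bm{q}^{\eta}_*,\bm{q}_*$. Because $\bm{q}_*$ minimises the clean risk, $R_*\le R_k$, so $|R_k-R_*|=R_k-R_*$ and subtracting the two bounds gives the first displayed inequality. The simplified bound follows by replacing $R^{\eta}_l$ with $R^{\eta}_*$: as $\bm{q}^{\eta}_*$ minimises the noisy risk, $R^{\eta}_*\le R^{\eta}_l$, which only enlarges the right-hand side and collapses it to $(R^{\eta}_*-\eta)\big(\tfrac{1}{1-\eta-\eta_{max}}-\tfrac{1}{1-\eta-\eta_{min}}\big)$.

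The step I expect to be the crux is the diagonal-equality claim: neither hypothesis suffices on its own (permutation columns can carry unequal diagonals, and class-preservation alone need not make the column maxima coincide), so the two must be combined carefully, and it is precisely class-preservation that then secures the strictly positive denominators required to invert the noisy--clean relation. Everything downstream of the sandwich is routine algebra.
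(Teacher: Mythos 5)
Your proposal is correct and follows essentially the same route as the paper's proof: both derive the sandwich $\frac{R^{\eta}(\bm{q})-\eta}{1-\eta-\eta_{min}} \leq R(\bm{q}) \leq \frac{R^{\eta}(\bm{q})-\eta}{1-\eta-\eta_{max}}$ by splitting the noisy risk over the correctly and incorrectly classified regions under separability, instantiate it at the two minimisers, subtract, and then enlarge via $R^{\eta}_* \leq R^{\eta}_l$. The only difference is cosmetic bookkeeping ($C$ versus integrals over $A^{\pm}$), and you make explicit two points the paper leaves implicit --- that class-preservation plus permutation-symmetric columns force $T_{jj}=1-\eta$ uniformly, and that $1-\eta-\eta_{max}>0$ so the inversion is valid --- which is a welcome tightening rather than a different argument.
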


\begin{proof}
     Let $\bm{q}_j\in \mathcal{Q}$ be an arbitrary estimator in our set. Let $A^+\subseteq \supp(p(x))$ denote the set upon which $\bm{q}_j$ correctly predicts the true label and we use $A^-$ to denote the complement of $A^+$ in $\supp(p(x))$ so that $A^+ \cup A^- = \supp(p(x))$. We let $f(x)$ denote the plug-in classifier induced by $\bm{q}_j$.  By the separability assumption, we know that 
     \begin{align*}
         p(y=f(x)\mid x)=1\quad  \text{for} \quad x\in A^+,\\
         p(y=f(x)\mid x)=0\quad  \text{for} \quad x\not\in A^+.
     \end{align*}
     Likewise, 
     \begin{align*}
         \widetilde{p}(\widetilde{y}=f(x)\mid x)&=1-\eta\quad \text{for} \quad x\in A^+,\\
         \widetilde{p}(\widetilde{y}=f(x)\mid x)&\in [\eta_{min}, \eta_{max}]\quad\text{for} \quad x\not\in A^+.
     \end{align*}
     This allows us to write the following inequality lower bounding the noisy risk of $\bm{q}_j$; 
     \begin{align*}
         R^{\eta}_j &= \int_{A^+} p(x)(1-\widetilde{p}(\widetilde{y}=f(x)\mid x))dx +  \int_{A^-} p(x)(1-\widetilde{p}(\widetilde{y}=f(x)\mid x))dx\\
         &\geq \mu(A^+)\eta + \mu(A^-)(1-\eta_{max}).
     \end{align*}
     Here $\mu$ denotes the measure associated with the density $p$, so that $\int_{A^+}p(x)dx = \mu(A^+)$.
     We can similarly deduce that
          \begin{align*}
         R^{\eta}_j \leq \mu(A^+)\eta + \mu(A^-)(1-\eta_{min}) .
     \end{align*}
     We note that $\int_{A^+}p(x)dx = \mu(A^+) = 1-R_j$ thus we have the following inequality between the noisy and clean risks of an arbitrary estimator $\bm{q}_j\in\mathcal{Q}$
     \begin{align}\label{ch3:eqn:worst_case_risk_bounds1}
         R^{\eta}_j &\leq (1-R_j)\eta + R_j(1-\eta_{min}) = R_j(1-\eta_{min}-\eta) +\eta,\\
        R^{\eta}_j &\geq (1-R_j)\eta + R_j(1-\eta_{max}) = R_j(1-\eta_{max}-\eta) +\eta.\label{ch3:eqn:worst_case_risk_bounds2}
     \end{align}
    Rearranging this is equivalent to
    \begin{align*}
       \frac{R_j^{\eta}-\eta}{1-\eta_{min}-\eta} \leq R_j \leq \frac{R_j^{\eta}-\eta}{1-\eta_{max}-\eta}
    \end{align*}
    Hence, in particular
    \begin{align*}
        \frac{R_l^{\eta}-\eta}{1-\eta_{min}-\eta} \leq R_* \leq \frac{R_l^{\eta}-\eta}{1-\eta_{max}-\eta},\\
        \frac{R_*^{\eta}-\eta}{1-\eta_{min}-\eta} \leq R_k \leq \frac{R_*^{\eta}-\eta}{1-\eta_{max}-\eta}.
    \end{align*}
Putting these together we conclude
    \begin{align*}
        \frac{R^{\eta}_l-\eta}{1-\eta-\eta_{min}} &\leq R_* \leq R_k \leq \frac{R^{\eta}_*-\eta}{1-\eta-\eta_{max}} \\
        \implies \vert R_k - R_*\vert &\leq \frac{R^{\eta}_*-\eta}{1-\eta-\eta_{max}} - \frac{R^{\eta}_l-\eta}{1-\eta-\eta_{min}}\\
        \implies \vert R_k - R_*\vert &\leq (R^{\eta}_*-\eta)\left(\frac{1}{1-\eta-\eta_{max}} - \frac{1}{1-\eta-\eta_{min}}  \right)
    \end{align*}
    as desired. Note that if $\eta_{min} = \eta_{max}$ (i.e. the label noise is symmetric) that $\vert R_k - R_*\vert=0$ which is consistent with Fact 1 (Section~\ref{ch7:sec:fact1,4}).
\end{proof}

\begin{corollary}\label{ch7:cor:bound}
    For Pairwise label noise (i.e. where $T_{11}=1-\eta$ and for some $i\neq 1$ $T_{1i} = \eta$, refer to Section~\ref{sec:label_noise_tax}) we have the following bound;
    \begin{align*}
        \vert R_k - R_* \vert \leq \dfrac{\eta \left(R^{\eta}_k - \eta\right) }{(1-2\eta)(1-\eta)}
    \end{align*}
    Note that for a noise rate $\eta$, a noisy risk below $\eta$ is unattainable and $R_k\rightarrow R_*$ as $R^{\eta}_k\rightarrow \eta$.
\end{corollary}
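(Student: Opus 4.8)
The plan is to obtain this corollary as a direct specialisation of Theorem~\ref{ch7:thm:worst_case_bounds}, so the real work is only in identifying the relevant quantities for the pairwise transition matrix and then simplifying. First I would verify that pairwise noise satisfies the standing hypothesis of the theorem that every column of $T$ is a permutation of every other column: for pairwise noise each column has exactly one diagonal entry $1-\eta$, exactly one off-diagonal entry $\eta$, and zeros elsewhere, so every column is a permutation of $(1-\eta,\eta,0,\ldots,0)$, as required.

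Next I would read off $\eta_{max}$ and $\eta_{min}$ from the definitions $\eta_{max}\coloneqq \max_{j\neq i}T_{ij}$ and $\eta_{min}\coloneqq \min_{j\neq i}T_{ij}$. Since the only nonzero off-diagonal entries equal $\eta$ while the remaining off-diagonal entries are $0$, this gives $\eta_{max}=\eta$ and $\eta_{min}=0$. Substituting these into the bound in Equation~\ref{ch7:eqn:worst_case_perm_sym}, and writing $R^{\eta}_k$ for the minimal noisy risk $R^{\eta}_*=R^{\eta}(\bm{q}^{\eta}_*)$, yields
\[
|R_k - R_*| \leq (R^{\eta}_k - \eta)\left(\frac{1}{1-2\eta} - \frac{1}{1-\eta}\right).
\]
Combining the two fractions over the common denominator $(1-2\eta)(1-\eta)$, the numerator reduces to $(1-\eta)-(1-2\eta)=\eta$, giving exactly the claimed inequality $|R_k-R_*|\leq \eta(R^{\eta}_k-\eta)/\big((1-2\eta)(1-\eta)\big)$.

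The two closing remarks follow immediately from the same substitution. Inequality~\ref{ch3:eqn:worst_case_risk_bounds2} with $\eta_{max}=\eta$ becomes $R^{\eta}_j \geq R_j(1-2\eta)+\eta \geq \eta$, where the last step uses $R_j\geq 0$ together with $1-2\eta\geq 0$ (which holds since class-preserving pairwise noise requires $\eta<\tfrac12$); hence a noisy risk below $\eta$ is unattainable. Likewise, since the bound carries the factor $(R^{\eta}_k-\eta)$, it forces $R_k\to R_*$ as $R^{\eta}_k\to \eta$.

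Because the statement is purely a substitution-and-simplify consequence of Theorem~\ref{ch7:thm:worst_case_bounds}, there is no substantive obstacle; the only points demanding care are recognising that the \emph{zero} off-diagonal entries make $\eta_{min}=0$ rather than $\eta$, and reconciling the paper's notation $R^{\eta}_k$ with the theorem's $R^{\eta}_*$.
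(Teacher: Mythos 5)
Your proposal is correct and matches the paper's intended argument exactly: the corollary carries no separate proof precisely because it is the substitution $\eta_{max}=\eta$, $\eta_{min}=0$ into Equation~\ref{ch7:eqn:worst_case_perm_sym}, followed by the simplification $\frac{1}{1-2\eta}-\frac{1}{1-\eta}=\frac{\eta}{(1-2\eta)(1-\eta)}$, which you carry out correctly. Your handling of the closing remarks is also sound, deriving $R^{\eta}_j\geq\eta$ from Inequality~\ref{ch3:eqn:worst_case_risk_bounds2} with $\eta_{max}=\eta$ and $\eta<\tfrac{1}{2}$ (forced by the class-preserving assumption for pairwise noise), and noting that the bound's factor $(R^{\eta}_k-\eta)$ gives $R_k\rightarrow R_*$ as $R^{\eta}_k\rightarrow\eta$.
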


\subsubsection{Discussion}\label{ch7:sec:bounds_discussion}
\begin{align}\label{ch7:eqn:perm_sym}
T = \begin{bmatrix}
    0.5 & 0.2 & 0.3 \\
    0.3 & 0.5 & 0.2 \\
    0.2 & 0.3 & 0.5
\end{bmatrix}
\end{align}
Using some examples we can get a sense of how good the bounds we derived in Theorem~\ref{ch7:thm:worst_case_bounds} and Corollary~\ref{ch7:cor:bound}. Start by considering a three-class dataset corrupted by label noise described by the transition matrix in Equation~\ref{ch7:eqn:perm_sym}. Suppose we train a classifier and that during training we achieve a peak noisy accuracy of $40\%$. Plugging these numbers into Equation~\ref{ch7:eqn:worst_case_perm_sym} we get
\begin{align*}
     \vert R_k - R_* \vert \leq  \frac{1}{6}.
\end{align*}
This means that, in the worst case, the optimal clean test accuracy attained during training could be $16.6\%$ higher than the accuracy of our model selected by NES.

 As a second example, suppose that we have a dataset corrupted by pairwise with a noise rate of $\eta=40\%$, we train a neural network estimator on this dataset. The peak noisy validation accuracy attained during training is $50\%$. Then 
\begin{align*}
    \vert R_k - R_* \vert \leq \frac{1}{3} 
\end{align*}
Thus in this setting, the difference in clean test accuracy between the NES-chosen model and the optimal model could be as high as $33.3\%$!

Neither of the bounds in these examples are particularly tight: Within the context of neural network classifiers trained on image datasets, a decrease in test accuracy of about $10\%$ is large indeed. Empirically we find that a Noisy Early-Stopping policy generally permits us to attain a model which is within a single percentage point of optimal clean test accuracy. These bounds are therefore inadequate to explain why this performance is so good.

\subsection{Lemma~\ref{ch3:lemma:g_vector_lemma} and Generalisations}\label{ch3:sec:generalising_g_vector_lemma}
\begin{lemma}
    Suppose we train a classifier on a dataset corrupted by (class-preserving) label noise for $N$ epochs. Let $f^{(n)}$ denote the classifier obtained after training for $n$ epochs. Let $\bm{g}^{(n)} = (g^{(n)}_1, g^{(n)}_2, \ldots, g^{(n)}_c)$ denote the $g-$vector of the $n$\textsuperscript{th} classifier $f^{(n)}$. Suppose there exists an integer $T< N$ so that for all $i\geq 2$, $g^{(n)}_i$ is decreasing for $n\in\{1,2,\ldots, T\}$ and increasing for $n\in\{T,T+1\ldots, N\}$ then the minimiser of the noisy risk within $\{f^{(n)}\}_{n=1}^N$ also minimises the clean risk.
\end{lemma}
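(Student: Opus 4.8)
The plan is to express both the clean and the noisy $0\text{-}1$ risk of each epoch's classifier as affine functions of the components $g_2^{(n)},\ldots,g_c^{(n)}$ of the $g$-vector, whose coefficients are fixed by the (class-conditional) noise model, and then to exploit the hypothesis that \emph{all} of these components bottom out at the \emph{same} epoch $T$.

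First I would recall the decomposition afforded by the $g$-vector. Writing the normalisation $\sum_{i=1}^c g_i^{(n)} = 1$, with $g_1^{(n)}$ the mass on which $f^{(n)}$ predicts the dominant (correct) class, the clean risk is the complementary mass,
\begin{align*}
R(f^{(n)}) = 1 - g_1^{(n)} = \sum_{i=2}^c g_i^{(n)}.
\end{align*}
For the noisy risk I would use the class-conditional structure of the noise exactly as in the proof of Lemma~\ref{ch7:lemma:risk_inequality}: the probability that the noisy label agrees with the prediction is the diagonal transition probability on the correctly-classified mass and a strictly smaller off-diagonal transition probability on each misclassified component. Collecting terms yields
\begin{align*}
R^{\eta}(f^{(n)}) = c_0 + \sum_{i=2}^c b_i\, g_i^{(n)},
\end{align*}
where $c_0$ and the $b_i$ are constants determined by the fixed transition matrix, each $b_i$ being the gap between the correct-match probability and the relevant mismatch probability. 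The crucial point is that the class-preserving assumption forces every $b_i$ to be \emph{strictly positive}: it guarantees pointwise that the probability of the noisy label coinciding with the dominant class exceeds that of coinciding with any other class.

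Next I would invoke the monotonicity hypothesis. Since each $g_i^{(n)}$ (for $i \ge 2$) is non-increasing on $\{1,\ldots,T\}$ and non-decreasing on $\{T,\ldots,N\}$, the epoch $n=T$ is a global minimiser of every component, i.e.\ $g_i^{(T)} \le g_i^{(n)}$ for all $n$ and all $i \ge 2$. Consequently both $R$ and $R^{\eta}$, being non-negative combinations of these components (up to the additive constant $c_0$), attain their minimum at $n=T$. To upgrade ``$T$ minimises both'' to ``every noisy minimiser is a clean minimiser'' I would use the strict positivity of the $b_i$. Let $n^\star$ minimise $R^{\eta}$; then $\sum_{i\ge2} b_i g_i^{(n^\star)} = \sum_{i\ge2} b_i g_i^{(T)}$, while termwise $g_i^{(n^\star)} \ge g_i^{(T)}$. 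Because the $b_i$ are all strictly positive, equality of the weighted sums forces $g_i^{(n^\star)} = g_i^{(T)}$ for every $i\ge2$, whence $R(f^{(n^\star)}) = \sum_{i\ge2} g_i^{(n^\star)} = \sum_{i\ge2} g_i^{(T)} = \min_n R(f^{(n)})$, so $n^\star$ also minimises the clean risk.

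The main obstacle I anticipate concerns the noisy-risk decomposition: pinning down the exact $g$-vector bookkeeping in the general (non-separable) setting and verifying that the class-preserving hypothesis delivers \emph{strictly} positive coefficients $b_i$, rather than merely non-negative ones. Strict positivity is essential, since it is precisely what promotes the conclusion from ``$T$ is a common minimiser'' to ``the noisy-risk minimiser necessarily minimises the clean risk''. The shared turning point $T$ across all components is the other load-bearing hypothesis, and I would emphasise that without it—components bottoming out at different epochs—the differently-weighted sums $R$ and $R^\eta$ could be minimised at distinct epochs, so that the coincidence of minimisers would fail.
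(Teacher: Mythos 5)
Your proposal is correct and follows essentially the same route as the paper's proof: under the section's ambient assumptions (separable distribution, uniform permutation-symmetric noise), both risks are affine in $(g_2^{(n)},\ldots,g_c^{(n)})$, with the class-preserving condition making the noisy-risk coefficients $b_i=(1-\eta)-\eta_i$ strictly positive, so the common turning point $T$ minimises both risks simultaneously. Your final step---using strict positivity of the $b_i$ to force any noisy-risk minimiser to have the same $g$-components as at epoch $T$---handles possible ties among noisy minimisers a little more carefully than the paper, which simply observes that noisy and clean accuracy both peak at $n=T$.
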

\begin{proof}
    Since we are assuming that the data-label distribution is separable we know that the clean conditional class distribution $\bm{p}(y\vert x)=\bm{e}_k$ for some $k$. It follows that, if $\bm{g}^{(n)}$ is the $g-$vector of $f^{(n)}$, then the clean accuracy of the $n$\textsuperscript{th} classifier $f^{(n)}$ is equal to $g_1^{(n)}$. We know that $\sum_{i=1}^c g_i^{(n)} = 1$ so clean accuracy can be expressed as 
    $1- g_2^{(n)}- g_3^{(n)} -\ldots -g_c^{(n)}$. Since by assumption each of the $g_i^{(n)}$ attain their minimum simultaneously at $n=T$ then the clean accuracy is maximised as $n=T$. Our goal now is to show that the noisy accuracy is also maximised at this epoch. 
    
    The noisy accuracy may be expressed
    \begin{align}
        \int p(x)\widetilde{p}(\widetilde{y}=f^{(n)}(x)\mid x)dx &= \int p(x)\left(T(x)\bm{p}(y\mid x)\right)_{f^{(n)}(x)}dx \notag\\
        =\int p(x)\left(T\bm{e}_{k(x)}\right)_{f^{(n)}(x)}dx &= \int p(x)\left(T_{k(x),f^{(n)}(x)}\right)dx. \label{eqn:nes_lemma_eqn1}
    \end{align}
    We are assuming that $T$ is permutation-symmetric which means that each of its columns are permutations of each other. Suppose that each column of $T$ is a permutation of the vector $(1-\eta, \eta_2, \ldots, \eta_c)$ where $\eta_i< \eta_j$ for $i<j$ and $\eta\coloneq \eta_2+\eta_3 + \ldots \eta_c$. Then Equation~\ref{eqn:nes_lemma_eqn1} can be written as 
    \begin{align}
       & \int p(x)T_{k(x),f^{(n)}(x)}dx \notag \\
        &= \sum_{i=2}^c\int p(x)\eta_i p(T_{k(x),f^{(n)}(x)} = \eta_i \mid x)dx +  \int p(x)(1-\eta) p(T_{k(x),f^{(n)}(x)} = 1-\eta\mid x)dx \notag\\
        &= \sum_{i=2}^c\eta_i \int p(x)p(T_{k(x),f^{(n)}(x)} = \eta_i\mid x)dx +  (1-\eta)\int p(x)p(T_{k(x),f^{(n)}(x)} = 1-\eta\mid x)dx \notag\\
        &= \sum_{i=2}^c\eta_i g_i^{(n)}  +  (1-\eta)g_1^{(n)} \label{eqn:nes_lemma_eq2}\\
        &= (1-\eta, \eta_2, \eta_3, \ldots, \eta_c)\cdot (g_1^{(n)}, g_2^{(n)}, \ldots, g^{(n)}_c) \notag
    \end{align}
    Using the fact that $\sum_i g_i = 1$ we know that Equation~\ref{eqn:nes_lemma_eq2} can be written as 
    \begin{align*}
        \sum_{i=2}^c\eta_i g_i^{(n)}  +  (1-\eta)g_1^{(n)} &= \sum_{i=2}^c\eta_i g_i^{(n)}  +  (1-\eta_2-\eta_3 \ldots -\eta_c)(1-g_2^{(n)}- g_3^{(n)}-\ldots-g_c^{(n)}) \\
        &= \left(\sum_{i=2}^c g_i^{(n)}(\eta_2+\ldots +2\eta_i+\ldots+\eta_c - 1)\right) + (1-\eta_2-\eta_3 \ldots - \eta_c).
    \end{align*}
    Since our label noise is class-preserving we know that $(1-\eta)>\eta_2>\ldots >\eta_c$ so for every $i\geq 2$, 
    \begin{align*}
        \eta_2+\ldots +2\eta_i+\ldots+\eta_c - 1 = \eta_i-(1-\eta)<0
    \end{align*}
So, the noisy accuracy at epoch $n$ may be expressed as
\begin{align*}
    A^{(n)}_{\eta} = \sum_{i=2}^c \alpha_i g_i^{(n)} + const.
\end{align*}
where $\alpha_i <0$. By assumption, each of the $g_i^{(n)}$ are decreasing for $n\in \{1,2,\ldots, T\}$ and therefore $A^{(n)}_{\eta}$ is increasing for $n\in \{1,2,\ldots, T\}$. Likewise the $g_i^{(n)}$ are increasing for $n\in \{T, T+1, \ldots, N\}$ meaning that the noisy accuracy is decreasing for $n\in \{T, T+1, \ldots, N\}$. Hence the noisy accuracy attains its maximum at $n=T$ - the same epoch as the clean accuracy. 
\end{proof}

\paragraph*{Asymmetric} We can generalise Lemma~\ref{ch3:lemma:g_vector_lemma} to the case of uniform asymmetric label noise fairly easily. We associate to each classifier a `$g-$matrix' rather than a vector. The entries of the matrix are the probabilities that the given classifier predicts class $i$ when the true class is $j$ - i.e. it is the confusion matrix for the classifier. The noisy accuracy is then given as a Hadamard product between the noise transition matrix and this confusion matrix.  Under the assumption that each of the $g_{ij}, i\neq j$ achieve their minima simultaneously the noisy and clean accuracy are maximised at the same training epoch. 

\paragraph*{Non-Uniform} Generalising Lemma~\ref{ch3:lemma:g_vector_lemma} to the case of non-uniform label noise is more non-trivial. Much of the proof of Lemma~\ref{ch3:lemma:g_vector_lemma} holds in the non-uniform case, however, the $\eta_i$ now vary during training. More significantly, if $\eta$ denotes the average noise rate on the subset of dataspace upon which our classifier predicts the clean label and $\eta_i$ similarly denotes the average noise rate on the subset of dataspace upon which our classifier predicts the $i$\textsuperscript{th} most likely noisy label, then we no longer have $\eta = \sum_{i>1}\eta_i$. This prevents us from concluding that the coefficients of each of the $g^{(n)}_i$ are negative without further assumptions. We add in the assumption that the classifier tends to predict more accurately (with respect to the clean distribution) in regions of dataspace with a low noise rate. This added assumption allows us to extend the proof to the non-uniform case. We do not write this out in full and may be seen as an exercise for the reader. 

\paragraph*{Non-Separable} We have assumed that the data distribution is separable. Handling the non-separable case is difficult - however, we can reason informally that Lemma~\ref{ch3:lemma:g_vector_lemma} should be extendable to the non-separable case by noting that label noise and aleatoric randomness are mathematically indistinguishable. Specifically, suppose that $p(x,y)$ is non-separable. This means that some of the clean class conditionals $\bm{p}(y\vert x)$ are not expressible as a coordinate vector $\bm{e}_k$ for some $k$. We can conceptualise the clean distribution $p(x,y)$ therefore as being a noised version of some separable distribution $p^{\circ}(x,y)$. The label noise associating these distributions would generally be non-uniform, asymmetric and must be class-preserving. Suppose now we have a noisy distribution $\widetilde{p}(x,y)$ which is obtained by applying label noise to $p(x,y)$ then, by concatenation of these two label noise processes, $\widetilde{p}(x,y)$ can be thought of as a noised version of $p^{\circ}(x,y)$. Our results tell us therefore that if we were to train a classifier on samples drawn from $\widetilde{p}(x,y)$, then the $\widetilde{p}(x,y)-$accuracy should peak at the same time as the $p^{\circ}(x,y)-$accuracy (under the relevant assumptions). With $p^{\circ}(x,y)-$accuracy and $\widetilde{p}(x,y)-$accuracy occurring at the same epoch, we might expect the $p(x,y)-$accuracy to also peak at this epoch as it is an intermediate distribution between $\widetilde{p}$ and $p^{\circ}$. 

\paragraph*{Relaxing Simultaneity} The key condition of Lemma~\ref{ch3:lemma:g_vector_lemma} is that each of the $g_i^{(n)}$ achieve their minima at the same epoch during training. In practice, this is not always exactly satisfied although each of the $g_i$ generally attain their minima within a couple of epochs of one another (See Figure~\ref{ch3:fig:simul}). Lemma~\ref{ch3:lemma:g_vector_lemma} can be generalised to handle this. If we let $T_1<T_2$ be two integers such that each of the
$g_i$ (for $i>1$) achieve their minima at epochs $n\in \{T_1, T_1+1,\ldots, T_2\}$ then both the noisy and clean accuracies are attained in this short interval. Thus, if we stop training when the noisy accuracy is maximised we are fairly close to the point during training when clean accuracy is maximised if $T_2-T_1$ is small. 

\section{Additional Plots and Figures}\label{ch7:sec:additional}

Figure~\ref{ch7:fig:fashion_es_vs_noise} displays the clean test accuracy (blue) and noisy validation accuracy (yellow) after each epoch for a classifier trained on symmetrically noised FashionMNIST ($\eta=0.2$) (top) and \emph{asymmetrically} noised CIFAR10 (also $\eta=0.2$) (2\textsuperscript{nd} from top). On each graph, we indicate the epoch at which the clean test accuracy is maximised with a red cross \stylizedX{} and the maximum noisy validation accuracy with a green plus \stylizedPlus{}. We then mark off the clean test accuracy obtained by the model which maximises the noisy validation accuracy with a green dotted line, and the maximum clean test accuracy attained during training with a red dotted line. For both datasets these lines are almost identical; the clean test accuracy attained by the model which maximises the noisy validation accuracy is within a fraction of a percent of the optimal clean test accuracy. This re-emphasises that Noisy Early Stopping attains a model with nearly optimal clean test accuracy.\\

The bottom of Figure~\ref{ch7:fig:fashion_es_vs_noise} presents the same data as the top two figures however this is expressed differently (FashionMNIST  on the left and CIFAR10 on the right). We plot the clean test accuracy attained at each epoch against the noisy validation accuracy. Each epoch is coloured so that the first few epochs are blue and the final epochs are yellow with the hue shifting gradually from blue to yellow through red. This allows us to visualise how clean and noisy (validation) accuracy evolves during training. Initially, the noisy and clean accuracy both increase, and the data points move into the upper right corner of the graph, before overfitting occurs and both the noisy and clean accuracies decline. Crucially, overfitting on the noisy and clean datasets occurs simultaneously for both datasets meaning the peak clean and noisy accuracies occur at the same time. The FashionMNIST dataset is noised using uniform symmetric label noise. Under these noise conditions, we know that the noisy and clean accuracies are related via a linear map (See Lemma~\ref{ch7:lemma:risk_inequality}) - Figure~\ref{ch7:fig:fashion_es_vs_noise} confirms this linear relationship. Despite the absence of similar guarantees for the asymmetrically-noised CIFAR10 dataset, a similar relationship is visible in its graph. We emphasise that the fact that the noisy and clean accuracies peak simultaneously for CIFAR10 is non-trivial and somewhat unexpected. Similar plots for symmetrically-noised FashionMNIST and Asymmetrically-noised CIFAR10 ($\eta=0.4$) may be found in Figures~\ref{fig:clean_vs_noisy_big_summary_fashion}, \ref{fig:clean_vs_noisy_big_summary_fashion_together} and Figures~\ref{fig:cclean_vs_noisy_big_summary_cifar_together}, \ref{fig:clean_vs_noisy_big_summary_cifar} respectively.

\begin{figure*}[!ht]
\centering
\begin{subfigure}[b]{0.9\textwidth}
    \centering
    \includegraphics[width=0.85\linewidth]{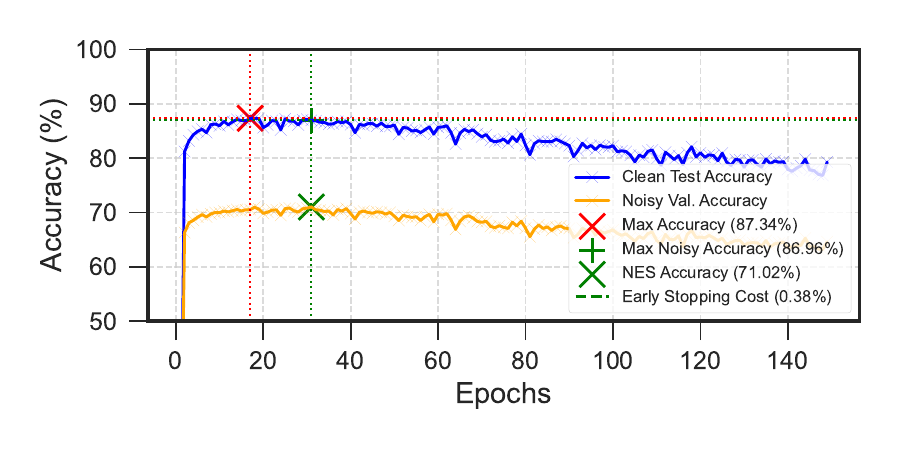}
\end{subfigure}
\vspace{-5mm} 

\begin{subfigure}[b]{0.9\textwidth}
    \centering
    \includegraphics[width=0.85\linewidth]{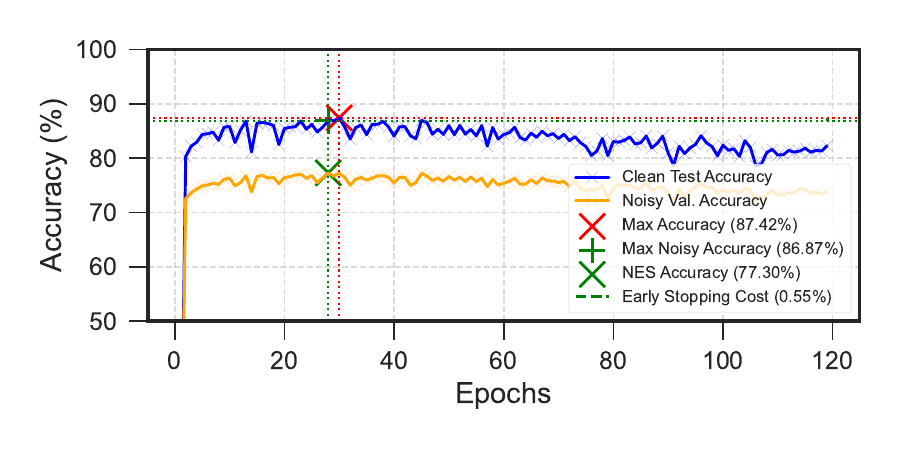}
\end{subfigure}
\vspace{-5mm} 

\begin{subfigure}[b]{0.4\textwidth}
    \centering
    \includegraphics[width=\linewidth]{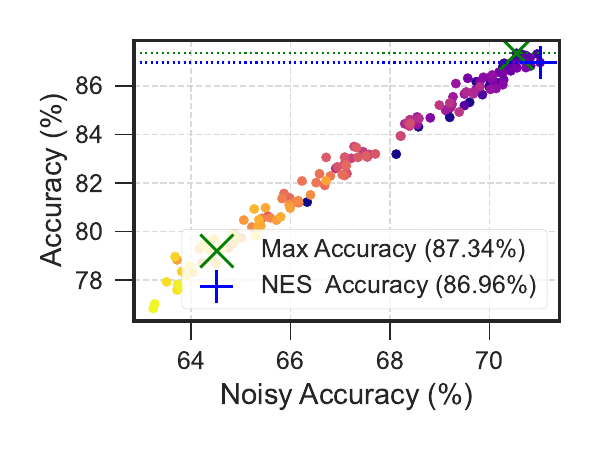}
    \caption{Symmetrically-Noised FashionMNIST}
\end{subfigure}
\begin{subfigure}[b]{0.5\textwidth}
    \centering
    \includegraphics[width=\linewidth]{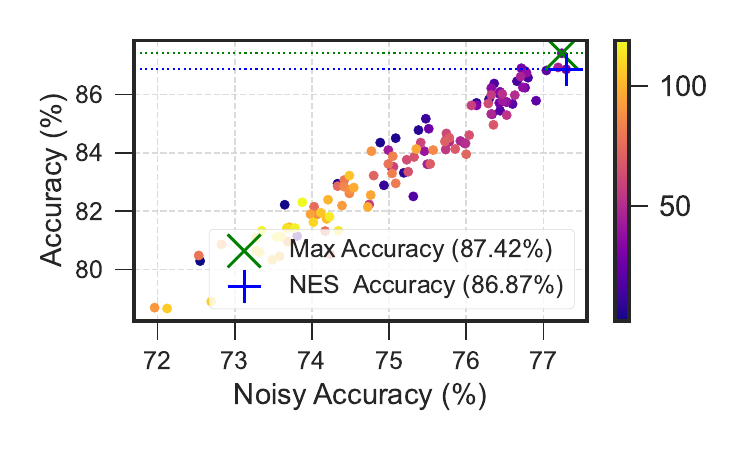}
    \caption{Asymmetrically-Noised CIFAR10}
\end{subfigure}

\caption{The top figure shows the clean test accuracy (blue) and noisy validation accuracy (yellow) during training on the symmetrically-noised Fashion dataset ($\eta=0.2$). We highlight the maximum clean accuracy achieved (red cross) and compare this with the accuracy achieved by Early Stopping using the noisy validation set (green plus) - the difference in clean test accuracy between these is a mere $0.38\%$. The second figure from the top displays repeats for the asymmetrically-noised CIFAR10 data. Once again the difference is a mere $0.55\%$.\\
The bottom two figures provide the same information as the top two but are expressed differently (FashionMNIST on the left and MNIST on the right). We plot the clean test accuracy against the noisy validation accuracy during training. Each epoch is coloured so that the first few epochs are blue and the final epochs are yellow, with the hue shifting gradually from blue to yellow through red. Initially, both the noisy and clean accuracy increase, moving into the upper right corner of the graph before overfitting occurs, leading to a decline in both accuracies. For both datasets, the noisy and clean accuracies are maximised approximately simultaneously.}
\label{ch7:fig:fashion_es_vs_noise}
\end{figure*}

\begin{figure}[!ht]
\centering
  \centering
  \includegraphics[width=\linewidth]{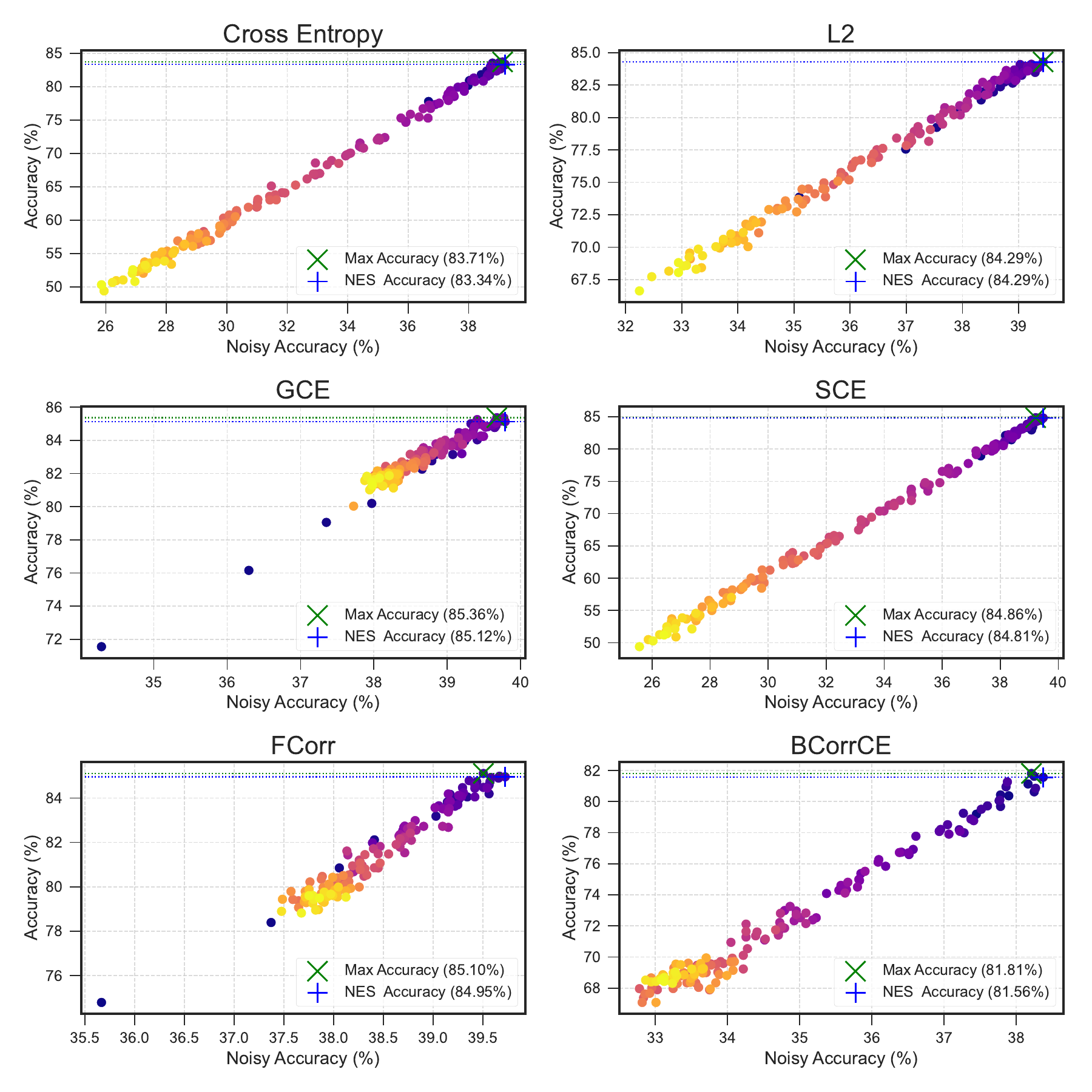}
  \caption{We plot the clean test accuracy against the noisy validation accuracy during training on symmetrically-noised Fashion dataset ($\eta=0.4$) for the CE, MSE, GCE, SCE, FCE and BCE loss functions. Each epoch is coloured so that the first few epochs are blue and the final epochs are yellow with the hue shifting gradually from blue to yellow through red. Initially, the noisy and clean accuracy both increase, and the data points move into the upper right corner of the graph, before overfitting occurs and both the noisy and clean accuracies decline. Both both datasets the noisy and clean accuracies are maximised approximately simultaneously. }
  \label{fig:clean_vs_noisy_big_summary_fashion}
\end{figure}

\begin{figure}[ht!]
\centering
  \centering
  \includegraphics[width=\linewidth]{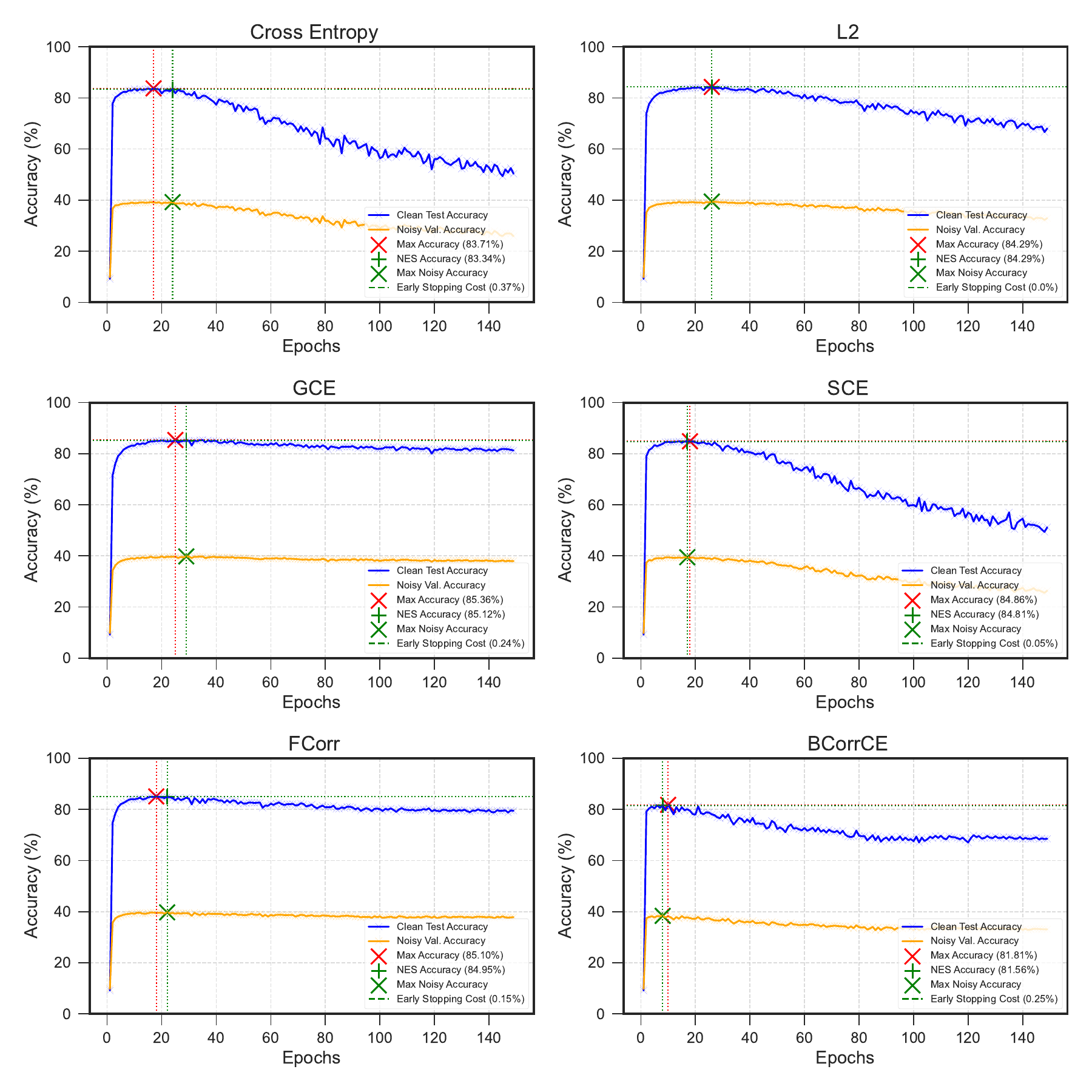}
  \caption{We plot the clean test accuracy and the noisy validation accuracy during training on symmetrically-noised Fashion dataset ($\eta=0.6$) for the CE, MSE, GCE, SCE, FCE and BCE loss functions. We highlight the maximum clean accuracy achieved (red) and compare this with the accuracy achieved by Early Stopping using the noisy validation set - the difference is generally within 1$\%$. }
  \label{fig:clean_vs_noisy_big_summary_fashion_together}
\end{figure}

\begin{figure}[ht!]
\centering
  \centering
  \includegraphics[width=\linewidth]{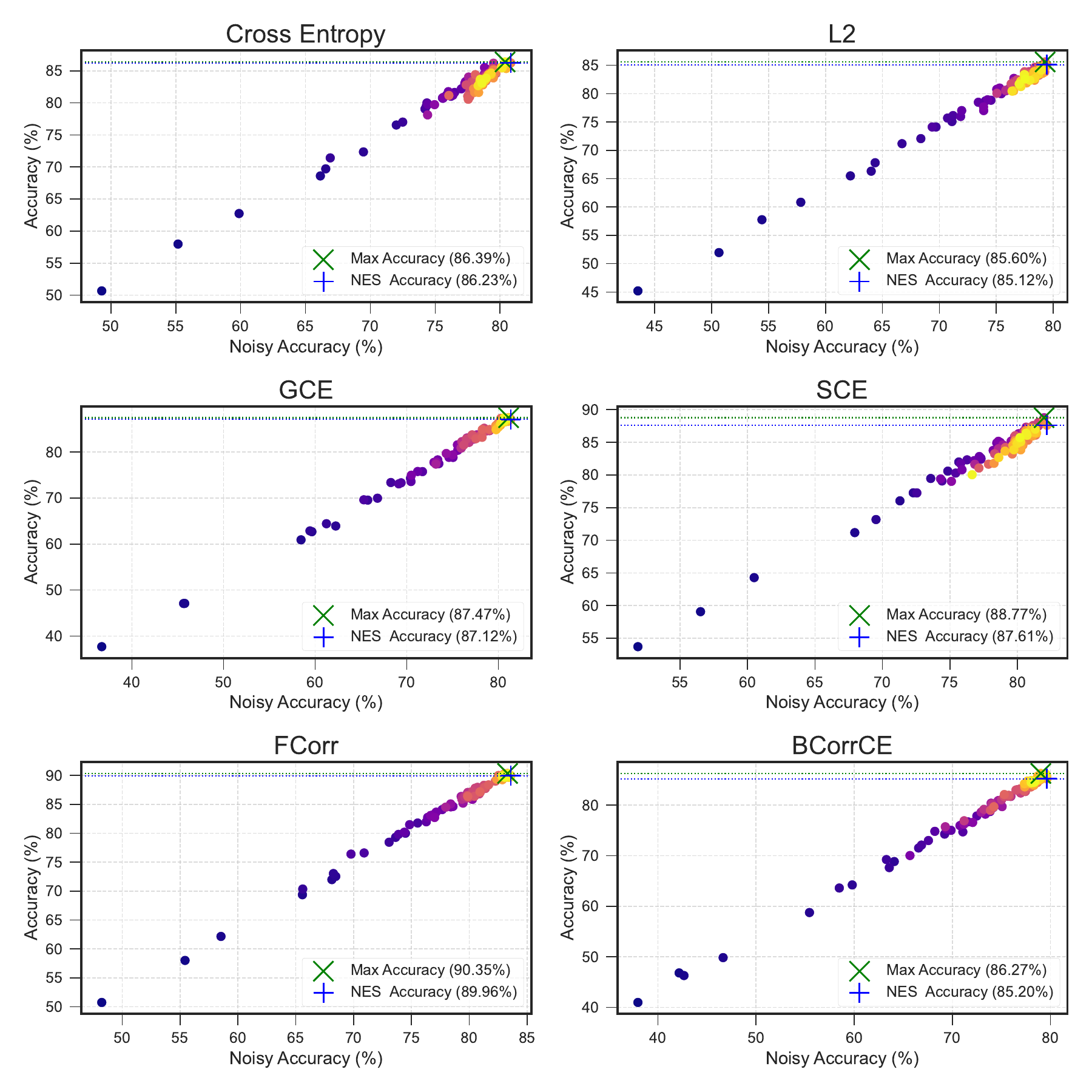}
  \caption{We plot the clean test accuracy against the noisy validation accuracy during training on asymmetrically-noised CIFAR10 dataset ($\eta=0.4$) for the CE, MSE, GCE, SCE, FCE and BCE loss functions. Each epoch is coloured so that the first few epochs are blue and the final epochs are yellow with the hue shifting gradually from blue to yellow through red. Initially, the noisy and clean accuracy both increase, and the data points move into the upper right corner of the graph, before overfitting occurs and both the noisy and clean accuracies decline. Both both datasets the noisy and clean accuracies are maximised approximately simultaneously.}
  \label{fig:clean_vs_noisy_big_summary_cifar}
\end{figure}

\begin{figure}[ht!]
\centering
  \centering
  \includegraphics[width=\linewidth]{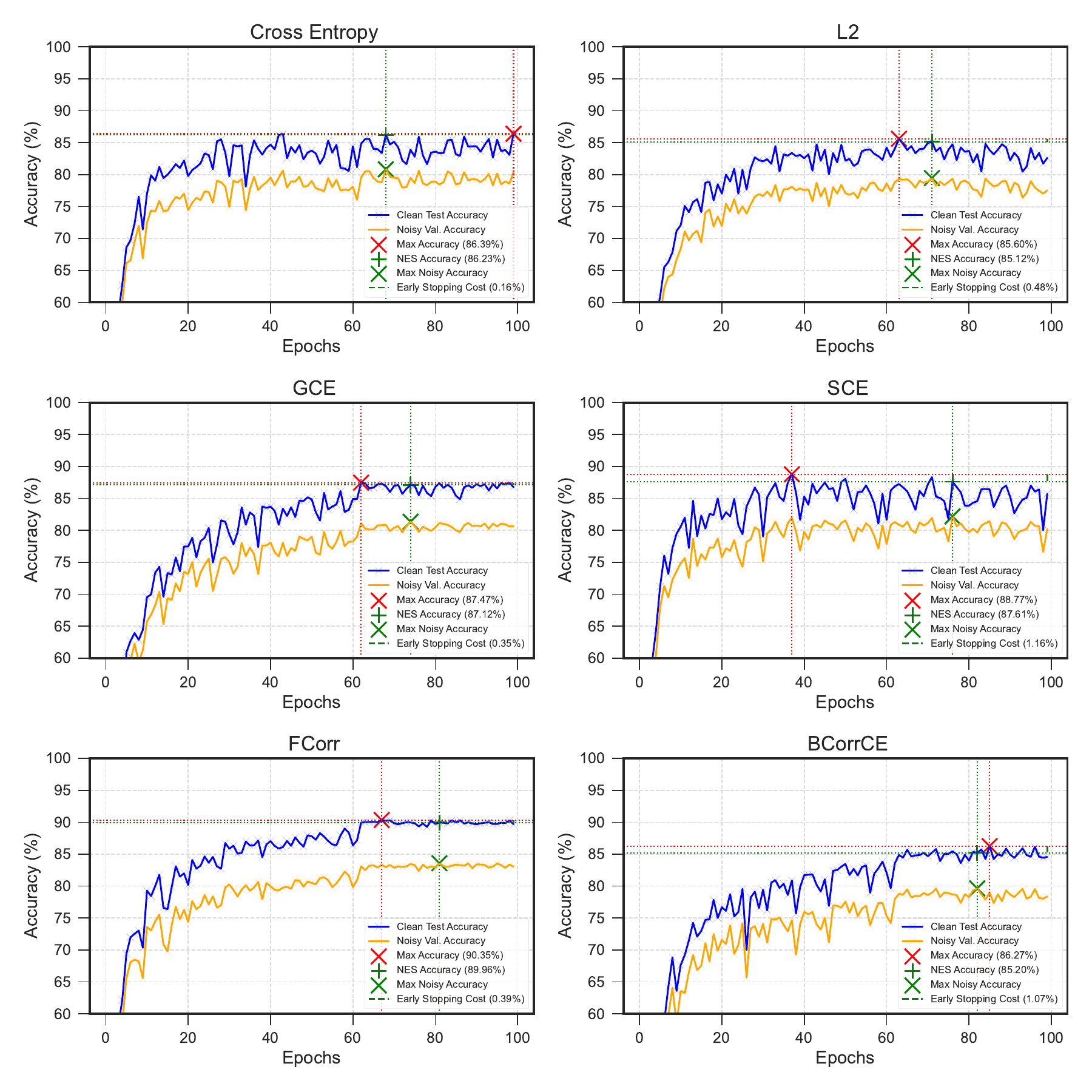}
  \caption{We plot the clean test accuracy and the noisy validation accuracy during training on asymmetrically-noised CIFAR10 dataset ($\eta=0.4$) for the CE, MSE, GCE, SCE, FCE and BCE loss functions. We highlight the maximum clean accuracy achieved (red) and compare this with the accuracy achieved by Early Stopping using the noisy validation set - the difference is a generally within 1$\%$.}
  \label{fig:cclean_vs_noisy_big_summary_cifar_together}
\end{figure}

\section{Algorithm Details and Code}
\subsection{Creation of the Noised Datasets}\label{app:dataset_models}

\subsubsection{Symmetric Noise in Image Datasets}

In symmetric noise models, we alter a proportion \( \eta \) of labels in datasets such as CIFAR-10, CIFAR-100, MNIST, and FashionMNIST. For each affected label, a new label is chosen uniformly at random. In CIFAR-10 and CIFAR-100, the new label is selected from all possible labels excluding the original. In MNIST and FashionMNIST, the original label remains a possible choice. The datasets are split into 70\% training, 15\% noisy validation, and 15\% clean test sets to evaluate the effects of label noise. The noise rates used in the experiments shown in Table~\ref{tab:performance-metrics} are MNIST ($\eta=0.8$), FashionMNIST ($\eta=0.6)$, NoisedCIFAR10 ($\eta=0.6)$ and NoisedCIFAR100 ($\eta=0.6$). 

\subsubsection{Non-Uniform Noise in EMNIST}

In the \textit{Non-Uniform EMNIST} setup, we introduce label noise based on a linear classifier's predictions. After training the classifier on EMNIST, we modify the label of each training point with probability \( \eta \) to the classifier's output. This method creates \( x \)-dependent noise, where the label alterations depend on the classifier's performance across different regions of the data space. The classifier attains an accuracy of 28\%, meaning the new label is different from the original label 72\% of the time.

\subsubsection{Asymmetric Label Noise}
\paragraph*{CIFAR10} The asymmetric label noise for CIFAR10 is consistent with that used in \citep{fprop} and most other noise robust literature. The label noise is constructed by mapping pairs of classes into each other, specifically, with probability $\eta$ one transitions $Truck\to Autombile$, $Bird\to Plane$, $Deer \to Horse$, $Cat \leftrightarrow Dog$ \citep{fprop}. This label noise is used in all papers which use the CIFAR10 dataset with asymmetric label noise except \citep{anchor}. In \citep{anchor} they use circular label noise flipping each class to the following class with probability $\eta$. Our experiments in Table~\ref{tab:performance-metrics} uses $\eta=0.4$.

\paragraph*{FashionMNIST}
The asymmetric label noise for FashionMNIST follows the methodology used in \citep{GCE_Loss}. This synthetic label noise is constructed similarly to CIFAR10 and MNIST, via pairwise transitions; $Boot\to Sneaker$, $Sneaker\to Boot$, $Sneaker\to Sandals$, $Pullover\to Shirt$, $Coat\leftrightarrow Dress$. Our experiments in Table~\ref{tab:performance-metrics} uses $\eta=0.4$.

\paragraph*{CIFAR100} The asymmetric label noise for CIFAR100 follows the methodology used in \citet{fprop}. In \citet{fprop} the authors introduce asymmetric noise to the CIFAR100 dataset by implementing circular noise within each of the 20 `superclasses.' The CIFAR100 dataset is organised into groups of 5 classes, termed as superclasses. For instance, the `Aquatic Mammals' superclass comprises the classes Beaver, Dolphin, Otter, Seal, and Whale \citep{fprop}. Within each superclass, labels are cyclically permuted (e.g., Beaver $\mapsto$ Otter $\mapsto$ Dolphin, and so forth) with a probability of $\eta$. This label noise is also used in the following papers \cite{GCE_Loss}, \cite{normalised_losses}, \cite{taylorce}, \cite{sce}, \cite{bootstrap}, \cite{fprop}, \cite{gjs}, \cite{curiculum}, \cite{elr}. Our experiments in Table~\ref{tab:performance-metrics} uses $\eta=0.6$.


\paragraph*{MNIST} The asymmetric label noise for MNIST deviates slightly from the literature. We divide the labels into 4 groups: \{0,1,2\}, \{3,4,5\}, \{6,7,8\}, \{9\}. Within each of the sets of cardinality three label noise as described by the following transition matrix where $T_{ij}\coloneq p(\widetilde{y}=i \mid y=j)$
\begin{align*}
    T = \begin{bmatrix}
    1-\eta & \eta & \eta\\
    \eta & 1- \eta & \eta\\
    0 & 0 & 1-2\eta
\end{bmatrix}
\end{align*}
Our experiments in Table~\ref{tab:performance-metrics} uses $\eta=0.2$.

\subsubsection{NonUniformMNIST Dataset}\label{app:nonuniformMNIST}

The NonUniformMNIST dataset is a modified version of the standard MNIST dataset. It introduces non-uniform noise into the labels based on a noise level parameter, \( \eta \). Two distinct class-conditional noise models are applied across the dataset, with each model influencing separate segments of data. This segmentation is governed by the principal component analysis (PCA) of each class, ensuring that exactly half of the samples from each class are noised according to the first model, and the remaining half according to the second model. 

\paragraph*{Label Noise Process:}
Each image label in the dataset is subject to modification according to the following probabilistic rule:
\begin{itemize}
    \item With probability \( 1-\eta \), the label remains unchanged.
    \item With probability \( \eta \), the label is altered using one of two predefined stochastic transition matrices, denoted as \texttt{matrix1} and \texttt{matrix2}.
\end{itemize}

\paragraph*{Transition Matrices:}
\begin{itemize}
    \item \textbf{\texttt{matrix1} (Symmetric Noise):} This matrix introduces uniform symmetric noise across all labels, constructed such that each incorrect label assignment occurs with equal probability.
    \item \textbf{\texttt{matrix2} (Symmetric Noise with Subsets):} This matrix applies noise only within predefined subsets of labels: (0,1,2), (3,4,5), and (6,7,8), with label 9 unchanged. Noise within these subsets is also symmetric.
\end{itemize}

\paragraph*{Selection of Transition Matrix:}
For each class in the dataset:
\begin{enumerate}
    \item \textbf{Statistical Computation:}
    \begin{itemize}
        \item Calculate the mean of all samples within the class.
        \item Determine the first principal component (PC1) of the class samples.
    \end{itemize}
    \item \textbf{Label Adjustment Criterion:}
    \begin{itemize}
        \item Center each sample by subtracting the class mean.
        \item Project the centred sample onto PC1.
        \item The sign of the resulting scalar product determines which transition matrix is used:
        \begin{itemize}
            \item If negative, \texttt{matrix1} is used to modify the label.
            \item If positive, \texttt{matrix2} is employed.
        \end{itemize}
    \end{itemize}
\end{enumerate}

Our experiments in Table~\ref{tab:performance-metrics} uses $\eta=0.3$.

\subsubsection{Synthetic Noisy Dataset}
\paragraph{Dataset Creation}
The synthetic dataset used in the experiment shown in Figure~\ref{fig:both_classifiers} was generated using the \texttt{make\_classification} function from \texttt{scikit-learn}, adept at creating complex multiclass classification problems. The dataset consists of 2,000 samples, each with 20 features, where ten are informative and directly influence the class labels, and the remaining ten are designed to mimic irrelevant data aspects found in real-world scenarios. The data is divided into three classes, each centered around a single cluster to simplify the classification task and emphasise the impact of label noise over inter-class separation. Feature scaling was performed using \texttt{StandardScaler} from \texttt{scikit-learn} to standardise the dataset.

\paragraph{Noise Introduction}
Label noise was introduced at a rate of 41\% using a pairwise transition method, where each label is cyclically shifted to the next. The introduction process was controlled using a fixed random seed (\texttt{random\_state=42}), ensuring consistent and reproducible noisy labels across different experiments.

\paragraph{Model Description}
The neural network used in the experiment comprised two hidden layers with 64 neurons each, and a ReLU activation function followed by an output layer for three-class classification. The model is trained using Stochastic Gradient Descent (SGD) with a learning rate of 0.1.

\section{Why Does NES Work?}\label{ch7:sec:why_nes_works}
In Section~\ref{ch7:sec:theoretical} we showed that unless label noise is uniform and symmetric, the minimiser of the noisy risk within a set of estimators $\mathcal{Q}$ may not be a minimiser of the clean risk. Moreover, the bounds which we derived (Section~\ref{ch7:thm:worst_case_bounds}) give weak guarantees, by this we mean that the minimiser of the noisy risk can generalise very poorly to the clean distribution (See Discussion in Section~\ref{ch7:sec:discussion}). Therefore, based on Facts 1-5 alone we would not expect NES to be as effective as it turns out to be in Section~\ref{ch7:sec:experiments}. The efficacy of NES therefore remains largely unexplained. In this section we study this topic in more detail and provide a partial explanation for this non-trivial phenomenon, suggesting that NES is effective due to the way neural network classifiers fit: Roughly speaking, if the off-diagonal elements of the confusion matrix for the classifier being learned are minimised around the same training epoch, then NES will be effective.

\paragraph*{Assumptions:} Throughout this section we assume that we train a neural network classifier on a dataset corrupted by label noise. We make a minor notational alteration, letting $\bm{q}^{(n)}$ denote the model attained after training for $n$ epochs and we let $\mathcal{Q}\coloneq \{\bm{q}^{(n)}\}_{n=1}^N$.\footnote{We used $\bm{q}_{n}$ previously to denote the model attained after training for $n$ epochs, however, this notation results in subscript overcrowding if utilised in the following section; hence the change.} We use the notation $\bm{q}_*$ to denote the minimiser of the clean risk in $\mathcal{Q}$ and $\bm{q}^{\eta}_*$ to denote the minimiser of the noisy risk. That is
\begin{align*}
    \bm{q}^{\eta}_* &\coloneq \argmin_{\bm{q}\in \mathcal{Q}} R^{\eta}(\bm{q})\\
    \bm{q}_* &\coloneq \argmin_{\bm{q}\in \mathcal{Q}} R(\bm{q})
\end{align*}

We continue to assume the label noise being discussed satisfies the class-preserving assumption (Definition~\ref{ch3:def:conserveNoise}).
We assume that the data-label distribution is separable and that the label noise is class-conditional asymmetric where each column is a permutation of every other column. This latter assumption has the impact of simplifying the mathematics and so helps with the exposition. Results extend to general asymmetric label noise.

\subsection{Section Outline:} In this section we construct an example setting in which a Noisy Early-Stopping policy would fail to select the clean risk minimiser within $\mathcal{Q}$. Constructing this setting allows us to develop an intuition for why examples like this do not occur in practice. The key idea of this section involves measuring the proportions by which a classifier predicts the most likely noisy label, how often it predicts the second most likely noisy label, how often it predicts the third most likely noisy label etc. We store these proportions, which we denote $g_1,g_2,g_3,\ldots, g_c$, in a vector which we call the `$g-$vector' for the classifier, and record how the components of this vector change during training. For example, if a classifier has a $g-$vector $\bm{g}=(g_1,g_2,g_3)=(0.6,0.3,0.1)$ this means it predicts the most likely noisy label $60\%$ of the time, the second most likely noisy label $30\%$ of the time and the least likely noisy label $10\%$ of the time. We show theoretically that if during training, for $i\geq 2$ each of the $g_i$ attain their minima simultaneously then $\bm{q}^{\eta}_* = \bm{q}_*$; the noisy risk minimiser is also the minimiser of the clean risk. We conclude by demonstrating experimentally that when training a neural network classifier this condition is satisfied thus explaining why NES is effective in practice.

\subsection{\texorpdfstring{$g$-vector}{g-vector}}\label{ch7:sec:g_vector_summary}

In this section, we formally define the $g-$vector associated with a classifier. We assume, for purposes of convenience, for all $i \neq j$, $\widetilde{p}(\widetilde{y}=i\mid x)\neq\widetilde{p}(\widetilde{y}=j\mid x)$, meaning that $\argkmax_i \widetilde{p}(\widetilde{y}=i\mid x)$ consists of a single element.\footnote{Where $\argkmax$ is a generalisation of $\argmax$ to the $k$\textsuperscript{th} largest elements in a set. } This choice aids in the subsequent exposition which still holds in the general case. 

\paragraph*{$g-$functions} Given an estimator $\bm{q}$ with plug-in classifier $f$, noisy data-label distribution $\widetilde{p}(x,y)$ and some $x\in\mathcal{X}$, we define the following set of $c$ binary-valued functions $g_1, g_2, \ldots, g_c$ where $g_k(x)$ is defined;
\begin{align}
        g_k(x) \coloneq \begin{cases} 
      1, & \text{if } f(x) = \argkmax_i \widetilde{p}(\widetilde{y}=i \mid x) \\
      0, & \text{otherwise }
   \end{cases}
\end{align}

Note that for any $x\in \mathcal{X}$ \textit{exactly one} of the functions $g_k(x)$ is equal to one and the rest equal zero. We define the following vector-valued function by concatenating the $g_k$; 
\begin{align}
    \bm{g}(x) \coloneq (g_1(x), g_2(x), \ldots, g_c(x)).
\end{align}
If $g_1(x)=1$ this means that the associated classifier is predicting the most likely (noisy) label to appear at $x$. If $g_2(x)=1$ means that the classifier is predicting the second most likely to appear given $x$ and so on for the given (noisy) data distribution.

The expectation of this vector over the data distribution is denoted 
\begin{align*}
    \bm{g} \coloneq \int p(x) \bm{g}(x) dx \in \Delta.
\end{align*}
We call this the \textbf{g-vector} obtained from the classifier $f$. The vector, $\bm{g}$, gives us an average of how many times our classifier predicts the most likely label, second mostly likely and so on. For example, if the classifier is Bayes-optimal for the noisy distribution then $\bm{g} = (1,0,\ldots,0)$ indicating that at every $x$ in the support of $p(x)$ the classifier $f$ predict the most likely noisy label.

\subsection{When Would NES Fail?}\label{ch3:sec:arch_counter}
When is it true that the minimiser of the noisy risk in $\mathcal{Q}$ is not a minimiser of the clean risk? The key insight is identifying that two classifiers $f_1, f_2$ can have the same clean accuracy and dramatically different noisy accuracies if, whenever $f_1$ predicts incorrectly, it predicts a probable noisy class whereas, whenever $f_2$ predicts incorrectly it predicts an \emph{improbable} noisy class. Put simply; when $f_2$ is wrong it is \emph{very} wrong, predicting a class which occurs with very low probability.

To make this more concrete consider the following simplified example where we have a (three class) data-label distribution $p(x,y)$ where, for every $x\in \supp(p(x))$, the clean and noisy conditional class distribution are as follows
\begin{align*}
    \bm{p}(y\mid x) &= (1,0,0)\\
    \text{and}\quad \widetilde{\bm{p}}(\widetilde{y}\mid x) &= (0.6,0.35,0.05).
\end{align*}
This is to say that the clean label at every $x$ is $y=1$ with probability $100\%$. Likewise, for every $x$, the probability of the noisy label being $\widetilde{y}=1$ is $60\%$, the probability of the noisy label being $\widetilde{y}=2$ is $35\%$ and the probability of the noisy label being $\widetilde{y}=3$ is $5\%$. Suppose that we have two classifiers $f_1, f_2$ which both correctly predict the most likely noisy label ($\widetilde{y}=1$) $60\%$ of the time. However, the remainder of the time $f_1(x)=2$ and $f_2(x)=3$. 
Utilising the language of Section~\ref{ch7:sec:g_vector_summary} we say the $g-$vector for $f_1$ is $\bm{g}_1=(0.6,0.4,0)$ and is $\bm{g}_2=(0.6,0,0.4)$ for $f_2$. In this case, even though both models have the same \emph{clean} accuracy of $60\%$, $f_1$ obtains a noisy accuracy of $50\%$ whereas $f_2$ obtains a noisy accuracy of $38\%$. (The noisy accuracy is obtained by computing a dot product between $\widetilde{\bm{p}}(\widetilde{y}\mid x)$ and the $g-$vectors.)\\

We can employ the principle outlined above to construct instances in which $\bm{q}_*^{\eta}$ has a lower clean accuracy than $\bm{q}_*$. To construct such an instance one must ensure that wherever $\bm{q}_*$ does not predict the correct clean label, it predicts an unlikely noisy label. Whereas, wherever $\bm{q}^{\eta}_*$ does not predict the correct clean label, it predicts a likely noisy label. This is represented visually in Figure~\ref{fig:NES_counterexample}. Figure~\ref{fig:NES_counterexample} plots bar charts of the $g-$vectors of $\bm{q}_*^{\eta}$ and $\bm{q}_*$ in a ternary label setting in which NES would fail. The figure shows the proportion by which each classifier predicts the true label\footnote{Crucially, since we are assuming the label noise is class-preserving the true clean label is also the most likely noisy label.} (blue), the second most likely noisy label (orange) and the least likely noisy label (green) respectively. While the noisy accuracy is higher for $\bm{q}_*^{\eta}$ it has a lower clean accuracy than $\bm{q}_*$. 

\begin{figure*}[h!]
  \centering
  \includegraphics[width=0.95\textwidth]{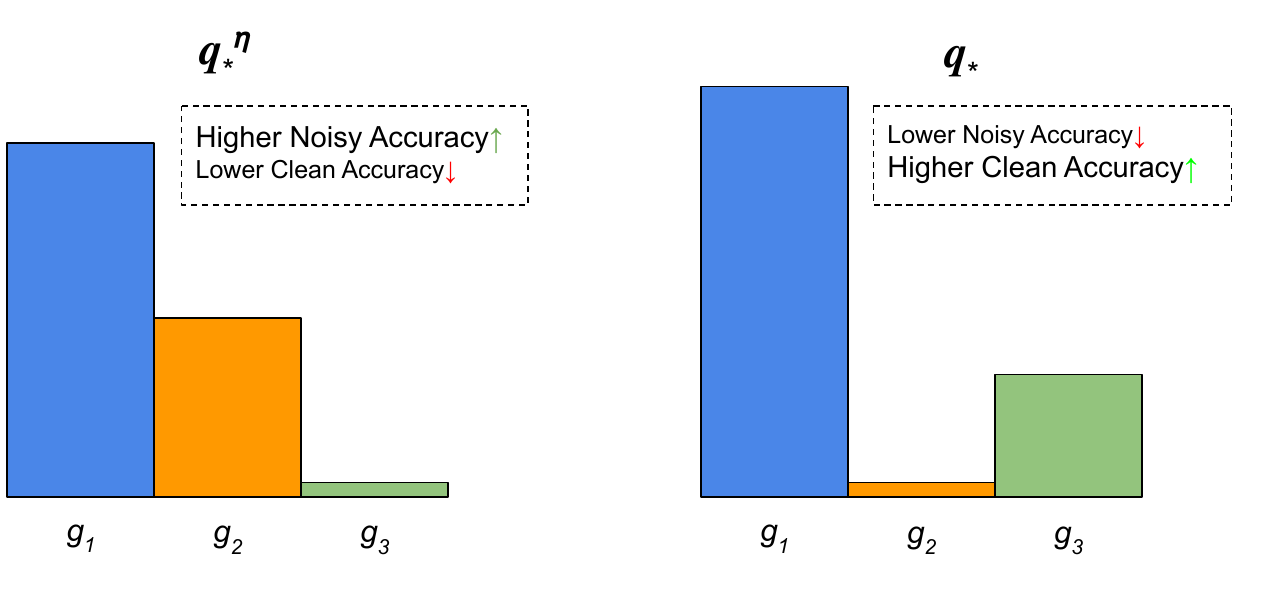} 
  \caption{Example where NES would fail: Histograms of predictions for $\bm{q}_*^{\eta}$ and $\bm{q}_*$ in the case of three classes. This shows the frequency with which each classifier predicts the true clean label (blue), the second most likely noisy label (orange) and the least likely noisy label (green) respectively. While the noisy accuracy is higher for $\bm{q}_*^{\eta}$ it has a lower clean accuracy than $\bm{q}_*$.
} 
  \label{fig:NES_counterexample}
\end{figure*}

\subsection{Overfitting}
Since NES works in practice, pathological examples of the type described in Section~\ref{ch3:sec:arch_counter} and represented in Figure~\ref{fig:NES_counterexample} must not arise when we train a classifier. To understand why this suppose that the minimiser of the noisy risk $\bm{q}^{\eta}_*$ is \emph{not} the minimiser of the clean risk ($\bm{q}^{\eta}_*\neq \bm{q}_*$). In this case, there are two possibilities:
Either the minimiser or the noisy risk $\bm{q}^{\eta}_*$ occurs earlier in training that he clean risk minimiser $\bm{q}_*$, or $\bm{q}^{\eta}_*$ occurs later in training than $\bm{q}_*$ (depicted in Figure~\ref{fig:NES_counterexample_order}).  In either case, there must exist distinct $i,j\geq 2$ for which $g_i$ increases at the same time that $g_j$ decreases. If we can demonstrate that in practice this does not occur then we will have an explanation for why NES succeeds. We formalise this with the following Lemma. 

\begin{figure*}[h!]
  \centering
  \includegraphics[width=0.99\textwidth]{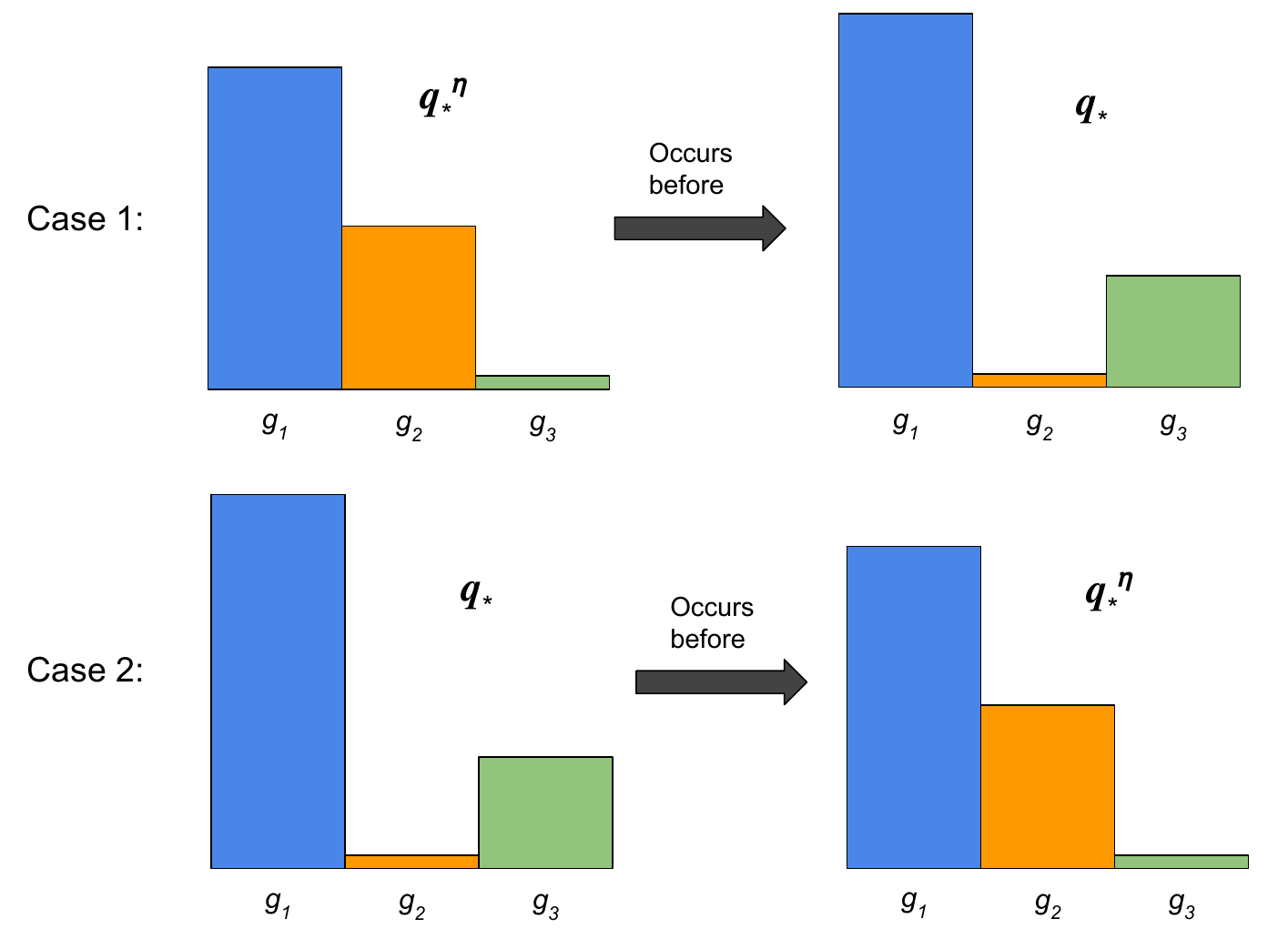} 
  \caption{The two possibilities assuming that the minimiser of the noisy risk does not minimise the clean risk.  Case 1: the minimiser of the noisy risk occurs earlier in training (top row). Case 2: the minimiser of the noisy risk occurs later in training (bottom row).} 
  \label{fig:NES_counterexample_order}
\end{figure*}

\begin{lemma}\label{ch3:lemma:g_vector_lemma}
    Suppose we train a classifier on a dataset corrupted by (class-preserving) label noise for $N$ epochs. Let $f^{(n)}$ denote the classifier obtained after training for $n$ epochs. Let $\bm{g}^{(n)} = (g^{(n)}_1, g^{(n)}_2, \ldots, g^{(n)}_c)$ denote the $g-$vector of the $n$\textsuperscript{th} classifier $f^{(n)}$. Suppose there exists an integer $T< N$ so that for all $i\geq 2$, $g^{(n)}_i$ is decreasing for $n\in\{1,2,\ldots, T\}$ and increasing for $n\in\{T,T+1\ldots, N\}$ then the minimiser of the noisy risk within $\{f^{(n)}\}_{n=1}^N$ also minimises the clean risk.
\end{lemma}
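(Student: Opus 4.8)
The plan is to show that both the clean and the noisy accuracy of $f^{(n)}$ are maximised at the \emph{same} epoch $n=T$, from which the claim follows immediately. I would begin with the clean side. Because the data distribution is separable and the noise is class-preserving, at each $x$ the clean posterior is a coordinate vector $\bm{e}_{k(x)}$ and the clean label coincides with the most likely noisy label; hence the clean accuracy of $f^{(n)}$ is exactly $g_1^{(n)}$. Writing $g_1^{(n)} = 1 - \sum_{i\geq 2} g_i^{(n)}$, the clean risk equals $\sum_{i\geq 2} g_i^{(n)}$. Since each $g_i^{(n)}$ with $i\geq 2$ is, by hypothesis, decreasing on $\{1,\ldots,T\}$ and increasing on $\{T,\ldots,N\}$, every summand attains its minimum at $n=T$, so their sum does too. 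Thus $T$ maximises the clean accuracy and $\bm{q}_* = f^{(T)}$.

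The heart of the argument is to carry out the analogous computation for the noisy accuracy. As in the proof of Lemma~\ref{ch7:lemma:risk_inequality}, I would write the noisy accuracy as $\int p(x)\,(T\bm{p}(y\mid x))_{f^{(n)}(x)}\,dx = \int p(x)\, T_{k(x),f^{(n)}(x)}\,dx$, using separability to replace $\bm{p}(y\mid x)$ by $\bm{e}_{k(x)}$. The permutation-symmetric assumption on $T$ is exactly what makes this tractable: every column of $T$ is a permutation of a fixed vector $(1-\eta, \eta_2, \ldots, \eta_c)$, so the probability assigned to the $i$th most likely noisy label is the same value ($1-\eta$ for $i=1$, and $\eta_i$ for $i\geq 2$) independent of $x$. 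Partitioning the integral according to which ranked label $f^{(n)}$ predicts then collapses it to the dot product $(1-\eta)g_1^{(n)} + \sum_{i\geq 2}\eta_i g_i^{(n)}$.

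To finish, I would substitute $g_1^{(n)} = 1 - \sum_{i\geq 2} g_i^{(n)}$ to obtain the noisy accuracy in the form $(1-\eta) + \sum_{i\geq 2}(\eta_i - (1-\eta))\, g_i^{(n)}$. The class-preserving assumption guarantees $1-\eta > \eta_2 > \cdots > \eta_c$, so each coefficient $\alpha_i \coloneq \eta_i - (1-\eta)$ is strictly negative. Because every $g_i^{(n)}$ ($i\geq 2$) is minimised at $n=T$, the negatively weighted sum $\sum_{i\geq 2}\alpha_i g_i^{(n)}$ is \emph{maximised} at $n=T$, and hence so is the noisy accuracy; thus $\bm{q}^{\eta}_* = f^{(T)} = \bm{q}_*$, which is the desired conclusion.

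I expect the main obstacle to be the noisy-accuracy computation in the second step — specifically, justifying that the permutation-symmetric structure lets one replace the per-$x$ noisy label probabilities by the fixed, rank-indexed constants $\eta_i$, so that the integral reduces cleanly to a dot product with the $g$-vector. Once that reduction is in place, the monotonicity bookkeeping (that a negatively weighted sum of functions sharing a common minimiser is maximised there) is routine. A secondary point to handle carefully is the boundary epoch $n=T$, which lies in both monotonic regimes; this causes no difficulty since $T$ is precisely the common turning point, but it is worth stating explicitly.
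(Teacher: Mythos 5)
Your proposal is correct and follows essentially the same route as the paper's own proof: reduce the clean accuracy to $g_1^{(n)} = 1-\sum_{i\geq 2} g_i^{(n)}$, use separability and permutation-symmetry to collapse the noisy accuracy to the dot product $(1-\eta)g_1^{(n)}+\sum_{i\geq 2}\eta_i g_i^{(n)}$, and then rewrite it as a constant plus a negatively weighted sum $\sum_{i\geq 2}(\eta_i-(1-\eta))g_i^{(n)}$ so that both accuracies peak at $n=T$. You also correctly surfaced the section-level assumptions (separability and permutation-symmetric $T$) that the lemma statement leaves implicit, exactly as the paper's proof uses them.
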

Proof given in Section~\ref{ch3:sec:generalising_g_vector_lemma}.

\subsection{Experimental Confirmation}
The core hypothesis utilised in Lemma~\ref{ch3:lemma:g_vector_lemma} is that, during training for $i\geq2$, the $g_i$ start by decreasing. They each reach their minima approximately simultaneously at some epoch $T$ before increasing for the remainder of training. We have shown that under these conditions the minimiser of the noisy risk in $\mathcal{Q}$ will minimise the clean risk, meaning that NES would be effective. It remains to demonstrate that this assumption holds in practice.

\paragraph*{Experiment Details:} Our experiment uses a noised, ternary version of the MNIST dataset. We remove all classes other than $\{0,1,2\}$, we then apply synthetic asymmetric label noise to the training set according to the following transition matrix (setting $\eta = 0.3$)
\begin{align*}
    \begin{bmatrix}
1-1.5\eta & 0.5\eta & \eta \\
\eta & 1-1.5\eta & 0.5\eta \\
0.5\eta & \eta & 1-1.5\eta.
\end{bmatrix}
\end{align*}
We train a neural network classifier on this noisy dataset for $100$ epochs. After each epoch, we look at the predictions of the model on the held-out test set. At each datapoint in the test set, we record whether the model predict the most likely noisy label, second most likely etc. We can do this since we have access to the underlying noise model. We record the proportions and display how this evolves during training. This is shown in Figure~\ref{ch3:fig:stacked_bars}.  

\begin{figure*}[ht!]
  \centering
  \includegraphics[width=1\textwidth]{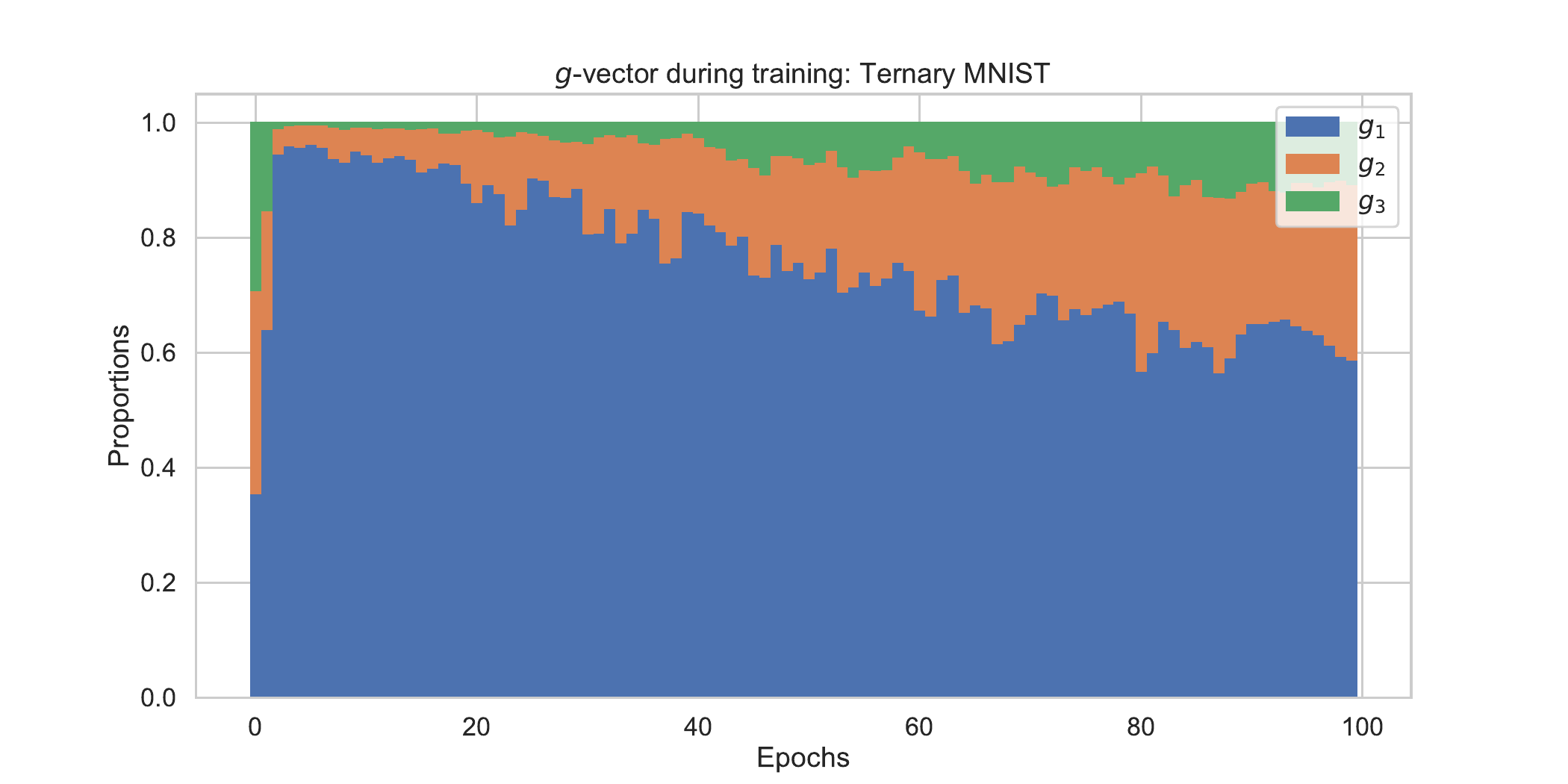} 
  \caption{We train a neural network classifier on an asymmetrically noised ternary MNIST dataset. At each epoch we evaluate on a held-out dataset of samples. Since we have access to the noisy posteriors we can compute the $g-$vectors for the classifier at each epoch. We plot these as a stacked bar chart. 
}
  \label{ch3:fig:stacked_bars}
\end{figure*}

\begin{figure*}[h!]
  \centering
  \includegraphics[width=1\textwidth]{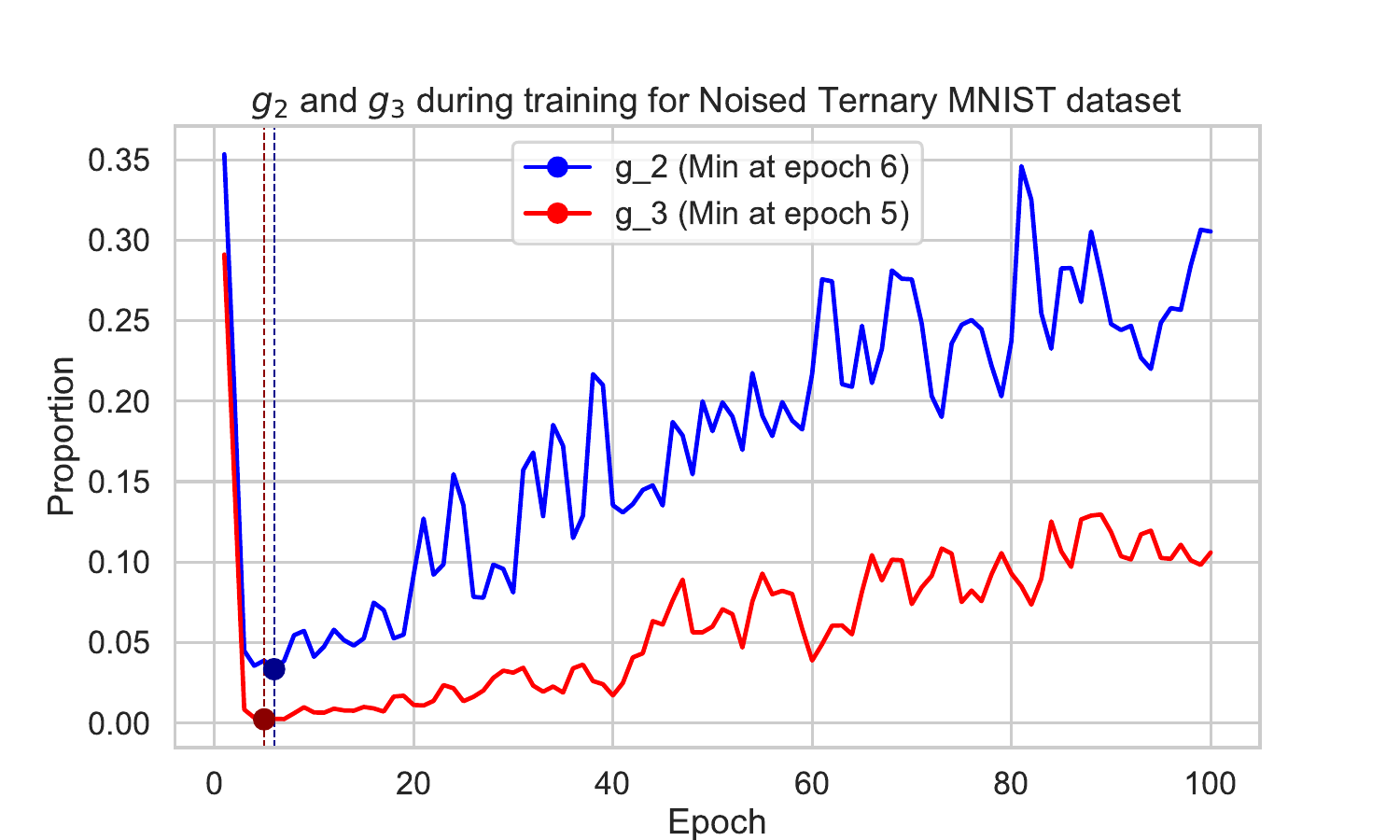} 
  \caption{We train a neural network classifier on an asymmetrically noised ternary MNIST dataset. At each epoch we evaluate on a held-out dataset of samples. Since we have access to the noisy posteriors we can compute the $g-$vectors for the classifier at each epoch. We plot the $g_2$ (blue) and $g_3$ (red) during training and note when each attains its minima. Each minima occurs almost simultaneously (one epoch difference). 
}
  \label{ch3:fig:simul}
\end{figure*}

Figure~\ref{ch3:fig:stacked_bars} shows that for the first 6 epochs or so the proportion by which our model predicts the second or third most likely noisy class decreases. At around epoch $6$ the proportion by which our model predicts the second or third most likely noisy class begins to increase. Crucially both attain their minima almost simultaneously, satisfying the condition in Lemma~\ref{ch3:lemma:g_vector_lemma}. This can be seen in more detail in Figure~\ref{ch3:fig:simul} which shows $g_2$ and $g_3$ only during training. Both attain their minima almost simultaneously as epochs $5,6$ respectively. 

\paragraph*{CIFAR} The experiment above is repeated for a five-class asymmetrically noised version of the CIFAR dataset. The transition matrix used to construct the label noise is given in Equation~\ref{eqn:transition-cifar5}. A plot of the $g-$vectors during training and scatter plots of $g_1, g_2, g_3, g_4$ can be seen in Figure~\ref{ch3:fig:simul5} and Figure~\ref{ch3:fig:stacked_bars5} respectively. Once again the $g_i$ attain their minima around the same time. 
\begin{align}\label{eqn:transition-cifar5}
    T \coloneq \begin{bmatrix}
0.5 & 0.05 & 0.1 & 0.15 & 0.2 \\
0.2 & 0.5 & 0.05 & 0.1 & 0.15 \\
0.15 & 0.2 & 0.5 & 0.05 & 0.1 \\
0.1 & 0.15 & 0.2 & 0.5 & 0.05 \\
0.05 & 0.1 & 0.15 & 0.2 & 0.5
\end{bmatrix}
\end{align}

\begin{figure*}[h!]
  \centering
  \includegraphics[width=0.9\textwidth]{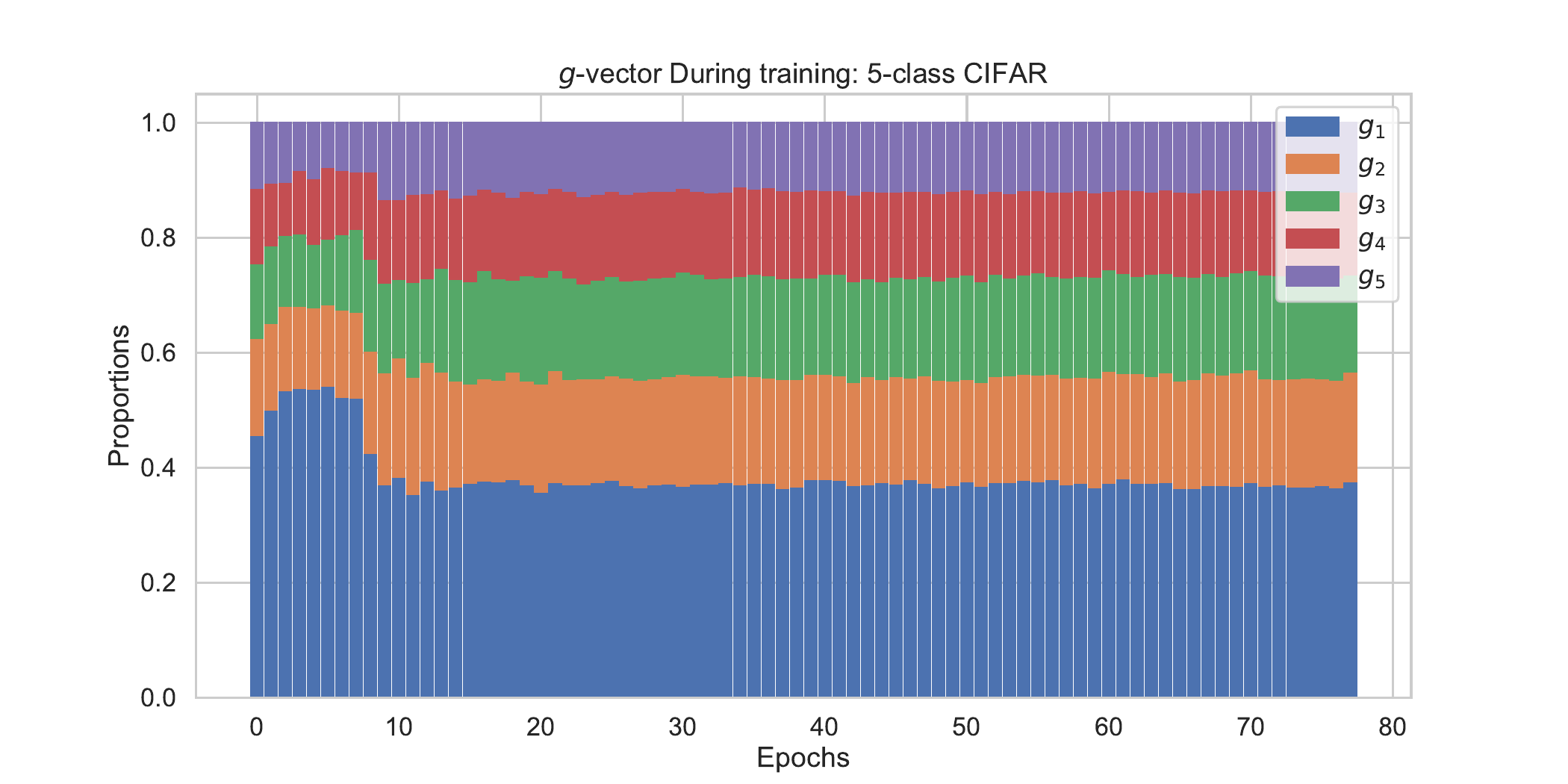}
  \caption{We train a neural network classifier on an asymmetrically noised 5-class CIFAR dataset. At each epoch we evaluate on a held-out dataset of samples. Since we have access to the noisy posteriors we can compute the $g-$vectors for the classifier at each epoch. We plot these as a stacked bar chart. 
}
  \label{ch3:fig:stacked_bars5}
  \includegraphics[width=0.9\textwidth]{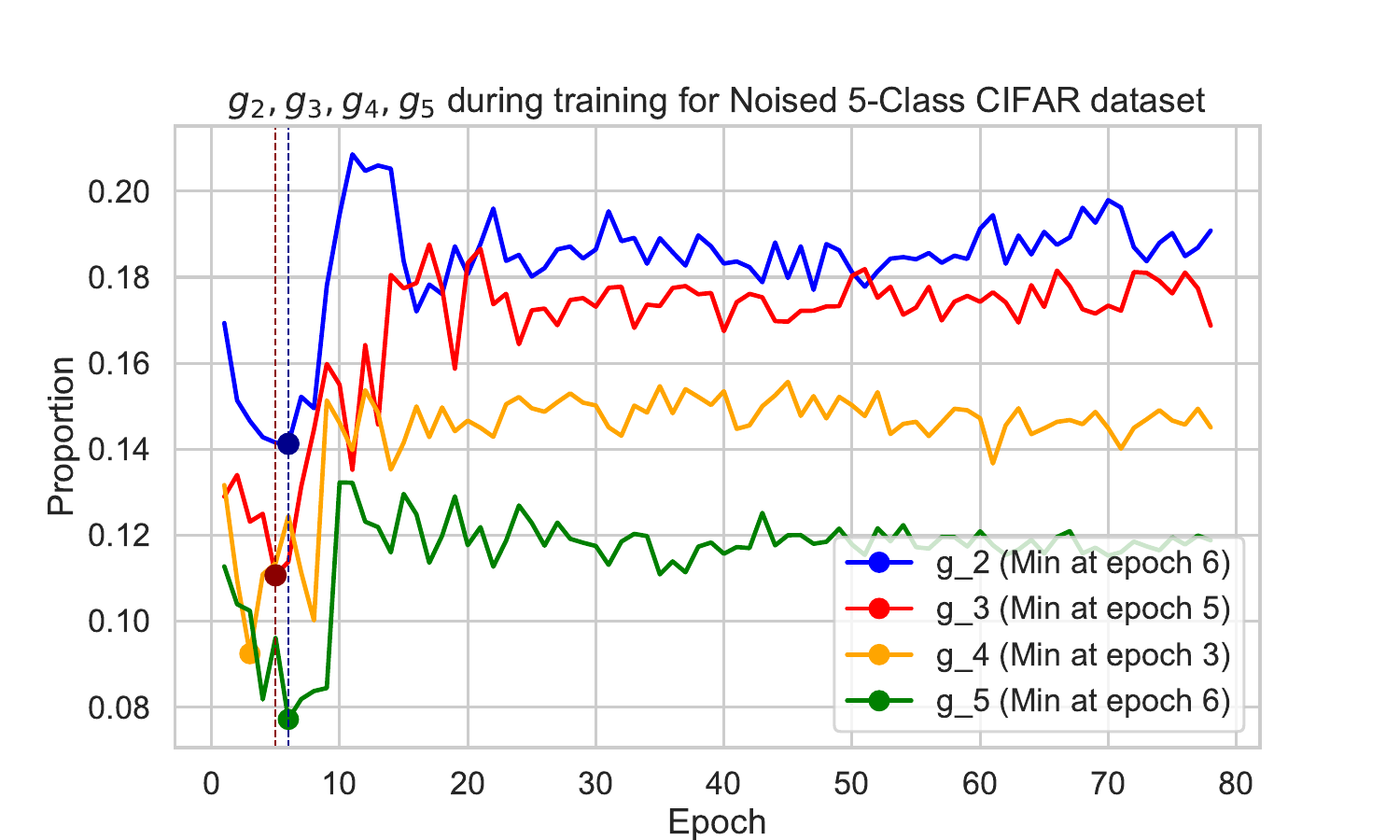}
  \caption{As in Figure~\ref{ch3:fig:simul} we train a neural network classifier on an asymmetrically noised 5-class CIFAR dataset. At each epoch we evaluate on a held-out dataset of samples. Since we have access to the noisy posteriors we can compute the $g-$vectors for the classifier at each epoch. We plot $g_2$ (blue), $g_3$ (red), $g_4$ (yellow) and $g_5$ (green) during training and note when each attains its minima. Each minima occurs almost simultaneously. 
}
  \label{ch3:fig:simul5}
\end{figure*}

\end{document}